\newcolumntype{L}[1]{>{\raggedright\let\newline\\\arraybackslash\hspace{0pt}}m{#1}}
\newcolumntype{C}[1]{>{\centering\let\newline\\\arraybackslash\hspace{0pt}}m{#1}}
\newcolumntype{R}[1]{>{\raggedleft\let\newline\\\arraybackslash\hspace{0pt}}m{#1}}
\newacronym{olu}{OLU}{online learner-unlearner}
\newacronym{ogd}{OGD}{Online Gradient Descent}
\newacronym{erm}{ERM}{Empirical Risk Minimisation}
\newcommand{\srdp}[3]{D_\alpha^{\br{#1}}\br{#2\Vert #3}}
\newcommand{\rdp}[2]{D_\alpha\br{#1\Vert #2}}
\newcommand{\regret}[2]{\operatorname*{Regret}_{#1}(#2)}
\newcommand{\Unaux}{\mathcal{U}_{\mathrm{aux}}}
\newcommand{\trainset}{\mathcal{S}}
\newcommand{\unlearnset}{\mathcal{S}^{\mathcal{U}}}
\newcommand{\trainsetsub}[1]{\trainset_{#1}}
\newcommand{\unlearnsetsub}[1]{\unlearnset_{#1}}
\newcommand{\OnUnalg}{\mathcal{A}_{\mathcal{R}}}
\newcommand{\UnDet}{h}
\newcommand{\UnNoise}{\rho}
\newcommand{\di}[1]{u[#1]}
\newcommand{\dt}[1]{\tau[#1]}
\newtheorem{lemL}{Lemma}
\newtheorem{cor}{Corollary}
\Crefname{Corollary}{Corollary}{Corollary}
\Crefname{thm}{Theorem}{Theorem}
\Crefname{assumption}{Assumption}{Assumption}
\title{Online Learning and Unlearning}
\author{Yaxi Hu$^1$}
\author{Bernhard Sch\"olkopf$^1$}
\author{Amartya Sanyal$^2$}
\date{}
\affil{$^1$MPI for Intelligent Systems, T\"ubingen, $^2$University of Copenhagen}
\begin{document}

\maketitle

\begin{abstract}
    We formalize the problem of online learning-unlearning, where a model is updated sequentially in an online setting while accommodating unlearning requests between updates. After a data point is unlearned, all subsequent outputs must be statistically indistinguishable from those of a model trained without that point. We present two online learner-unlearner (OLU) algorithms, both built upon online gradient descent (OGD). The first, \emph{passive OLU}, leverages OGD’s contractive property and injects noise when unlearning occurs, incurring no additional computation. The second, \emph{active OLU}, uses an offline unlearning algorithm that shifts the model toward a solution excluding the deleted data. Under standard convexity and smoothness assumptions, both methods achieve regret bounds comparable to those of standard OGD, demonstrating that one can maintain competitive regret bounds while providing unlearning guarantees.
\end{abstract}

\section{Introduction}\label{sec:intro}
Machine unlearning---the process of efficiently removing the influence
of specific training data so that a model behaves as if it were
retrained without that data---has recently attracted significant
attention. Beyond mitigating privacy risks (e.g., membership inference
attacks~\citep{shokri2017membershipinferenceattacksmachine,wang2023reconstructingtrainingdatamodel}),
unlearning algorithms are central to enforcing regulations such as the GDPR's
``right to be forgotten'' and can even improve model performance by
removing undesirable data points~\citep{goel2024correctivemachineunlearning}.

Most prior work on machine unlearning has focused on the
offline setting~\citep{neel2020descenttodeletegradientbasedmethodsmachine,
bourtoule2020machineunlearning, Sekhari21Remember,
mu2024rewindtodeletecertifiedmachineunlearning,
suriyakumar2022algorithms}, where data points are unlearned from an
already trained model, either all at once or sequentially. However, as
these methods operate \emph{after training is complete}, they do not
readily extend to settings where data arrives continuously. In
practice, data often arrives incrementally, and the model is updated
frequently with deletion requests interspersed among these updates.
This gives rise to new challenges that offline methods do not address. This observation motivates our study of an \emph{online learning and unlearning} framework. 

In this work, we introduce an online learning and unlearning framework that continually updates the model with incoming data while simultaneously accommodating deletion requests. We
formulate the online learning and unlearning problem and define an unlearning guarantee (inspired by the ``delete-to-control'' notion in~\citet{cohen2023controlconfidentialityrightforgotten}) tailored to
the online setting. Specifically, once a deletion request is
processed, all~\emph{future outputs of the algorithm} must not reveal
information about the deleted point. We formalise this in~\Cref{defn:online-learner-unlearner}.

To achieve this, we propose two algorithmic strategies:
\emph{(1)~Passive Unlearning} leverages intrinsic properties of online
algorithms which exhibit a Markovian Output
property~(Condition~\ref{condition:markov}) and
Contractiveness~(Condition~\ref{condition:contractive}),
to inject calibrated noise based on the time gap between learning and deletion. Notably~\Gls{ogd} exhibits both of these properties. This approach incurs no additional
computational cost relative to standard online learning.
\emph{(2)~Active Unlearning} exploits an auxiliary offline learning
algorithm (e.g., one based on Empirical Risk Minimisation (ERM)) with well-established unlearning algorithms, actively shifting the online
algorithm’s output towards that of the offline method. This strategy can
reduce the amount of noise required and thus potentially improve
regret bounds but comes with additional computational overhead.

For both strategies, we derive sub-linear regret bounds that closely
match the guarantees of standard~\Gls{ogd}. Roughly, for
convex cost functions with quadratic growth properties, passive unlearning achieves an expected regret of \(\bigO{\sqrt{T} + k^2 d}\) and for \(\mu\)-strongly convex losses, the regret bounds of both passive unlearning and active unlearning are \(\bigO{\log T + k^2d}\) for \(k\) deletion requests on a $d$-dimensional parameter space and \(T\) learning steps.

To summarise, our contributions are three‑fold:
\begin{itemize}
    \item \textbf{Problem formulation } We initiate the \emph{online learning and unlearning} setting and formalise an $(\alpha,\varepsilon)$‑online unlearning guarantee (\Cref{defn:online-learner-unlearner}) for this framework.
    \item \textbf{Algorithms } We design two algorithms—passive unlearner and active unlearner—both provably satisfy the above unlearning guarantee.
    \item \textbf{Theoretical guarantees } We prove that our algorithms add only constant‑factor computational overhead while attaining regret that nearly matches the best possible bound achievable without unlearning. A detailed comparison appears in \Cref{tab:comparison}.
\end{itemize}

The paper is organized as follows:~\Cref{sec:prelim} introduces the definitions of online convex optimization, machine unlearning, and online learning-unlearning;~\Cref{sec:passive,sec:active-unlearner} present the passive and active unlearners, with their theoretical guarantees;~\Cref{sec:discussion} concludes with performance insights.
\section{Preliminaries and Problem Setup}\label{sec:prelim}

In~\Cref{sec:preliminaries}, we review online convex optimization (OCO) and formally define unlearning. Then, in~\Cref{sec:OLU-defn}, we introduce the \emph{online learning and unlearning} framework. 

\vspace{-10pt}\subsection{Preliminaries}\label{sec:preliminaries}

\vspace{-5pt}\noindent\textbf{Online Convex Optimization } 
Let \(\cK\) denote a convex instance space and let \(\cF\) be a class
of convex cost functions mapping \(\cK\) to \(\bR_+\). 
OCO models an iterative game between a learner and an
 adversary over $T$ time steps. At each time step $t = 1, \ldots, T$,
 the learner $\cA$ selects a point $z_t \in \cK$, while the adversary chooses a convex cost function $f_t \in \cF$. The learner then incurs a cost $f_t(z_t)$.
 Formally, the learner’s update rule at each step is
 \begin{equation}
    \label{eq:defn-online-learning-updates}
    z_t = g_t(f_{1:t}, z_{1:t-1}). 
\end{equation}  
 where $g_t$ depends on all previously observed cost functions $f_{1},
 \ldots, f_{t}$ and past outputs $z_1, \ldots, z_{t-1}$. The
 performance of the learner is measured by its \emph{regret}:
\[
\regret{T}{\cA(f_{1:T})}= \sum_{t=1}^T f_t(z_t) - \min_{z\in \cK} \sum_{t=1}^T f_t(z),
\]  
where a sublinear regret is desirable. Throughout, we assume an
\emph{oblivious} adversary that fixes $f_1, \dots, f_T$ in advance,
and that $\cK$ has a bounded diameter $D$. Further assumptions on
$f_t$ (e.g.\ Lipschitzness, smoothness, or strong convexity;
see~\Cref{defn:Lipschitzness-smoothness}) appear in later sections as
needed. 

Online Gradient Descent (OGD) is a canonical algorithm for this setting. At time $t$, given a cost function \(f_t\) at time \(t\),~\Gls{ogd}
updates its output as:
\begin{equation}
    \label{eq:ogd-defn}
    z_t = \Pi_\cK\bs{z_{t-1} - \eta_t \nabla f_t\br{z_{t-1}}}.
\end{equation}
where \(\Pi_{\cK}\) is the projection operator onto the convex set \(\cK\). OGD achieves $O(\log T)$ regret for strongly convex and $O(\sqrt{T})$
regret for convex losses~\citep{elad2019introduction,orabona2019modern}.

\noindent\textbf{Unlearning } Machine unlearning aims to remove
\emph{post hoc} the influence of a specific training point on the
learned model. Naturally, the gold standard is to retrain from scratch
without that point and thus, a good unlearning procedure should
produce a model close to the retrained model. Various works use
different notions of statistical
indistinguishability~\citep{Guo19Certified,neel2020descenttodeletegradientbasedmethodsmachine,chien2024certified}
to formalize this closeness, inspired by differential
privacy~\citep{dwork2006differentialprivacy}. We
adopt the following definition of unlearning via R\'enyi divergence.

\begin{defn}
    \label{defn:unlearning-with-RDP}
    Let $\cA$ be a learning algorithm and $\cR$ an unlearning algorithm. For a dataset $\trainset$ and a subset $\unlearnset\subseteq \trainset$ of points to be removed, we say that $\cR$ is an $\br{\alpha, \varepsilon}$-unlearner if
  \[\rdp{\cR(\cA(\trainset), \unlearnset)}{\cR(\cA(\trainset\setminus \unlearnset), \emptyset)}\leq \varepsilon,\]
    where \(\rdp{\cdot}{\cdot}\)  is the $\alpha$-R\'enyi divergence.
  \end{defn}

  In most unlearning approaches, the unlearning step can be decoupled
  into a deterministic component which adjusts the current output to
  approximately match the output that would have been obtained had the
  algorithm never seen the unlearned point, and a noise component
  which adds calibrated noise to the adjusted output to obsfucate the
  approximation error. We define the deterministic unlearning function
  as \( \UnDet \) and the perturbation function as \( \UnNoise \).

\vspace{-10pt}
\subsection{Online Learning-Unlearning}\label{sec:OLU-defn}

\vspace{-5pt}We now integrate unlearning into the OCO framework. In an \emph{online learning-unlearning} game, the learner not only submits $z_t \in \cK$ each round and incurs $f_t(z_t)$, but may also receive requests to unlearn specific cost functions encountered in the past. Perhaps closest to our work, is the turnstile model in the continual observation literature, which similarly accommodates both insertion and deletion~\citep{jain2023turnstile}.

Let $k$ be the number of deletions, and let
\(
  \cU =\bc{\di{1},\dots,\di{k}}
  \quad\text{and}\quad
  \cT = \bc{\dt{1}, \ldots, \dt{k}}
\)
denote respectively the \emph{indices} of deleted functions and the
\emph{time steps} at which deletions occur (with $\di{i}\le \dt{i}$).
Thus, at time $\dt{i}$, the learner must ensure that the future
outputs are indistinguishable from those they would produce if
all functions $f_{\di{1}}, \ldots, f_{\di{i}}$ had never been observed. We call the learner in an online
learning-unlearning game an~\emph{\gls{olu}},
denoted by $\OnUnalg$.  

\noindent\textbf{Online Learner-Unlearner } An \gls{olu} can be constructed from a base online
learner. Formally, let $\cA$ be a base online learner with update functions
$g_1,\dots,g_T$. An \emph{online learner-unlearner} $\OnUnalg$
implements update function $g_t^{\OnUnalg}$ at time $t$,
\begin{equation}
    \label{eq:defn-online-learner-unlearner-update-func}
    g_t^{\OnUnalg} = \UnNoise_t\circ \UnDet_t \circ g_t.
\end{equation}
where $\UnDet_1,\dots,\UnDet_T$ are the deterministic unlearning
functions, and $\UnNoise_1,\dots,\UnNoise_T$ are the perturbation
functions. 

For two types of online learning algorithms,~\gls{olu} is constructed differently from~\Cref{eq:defn-online-learner-unlearner-update-func}.  
For base learners that explicitly use all past cost functions at
each step (e.g.\ FTL), unlearning ($\UnDet_t$ and $\UnNoise_t$) must be applied at
every round, essentially treating each update as an independent offline
unlearning problem. In contrast, incremental algorithms
such as OGD, whose update depends only on the previous model and current cost function, unlearning is invoked only at deletion rounds; otherwise, OLU mirrors OGD exactly. However, because past data are encoded in the memory of
the algorithm, unlearning can be more involved. In this paper, we
focus on the latter case.

\noindent\textbf{Certified OLU } Let \(\trainset = \{f_1, \dots, f_T\}\) be the cost functions chosen by the adversary, with \(\trainsetsub{t} = \{f_1, \dots, f_t\}\) as the subset up to time \(t\). The unlearned functions are \(\unlearnset = \{f_{\di{1}}, \dots, f_{\di{k}}\}\), where \(\unlearnsetsub{i} = \{f_{\di{1}}, \dots, f_{\di{i}}\}\) denotes the first \(i\) deletions. In the online
setting, removing a previously used function can shift indices for
future outputs. To manage this, we introduce a
skip element \( \perp \): an online learning algorithm does not update
its output at time steps where it encounters \( \perp \) and ignores
\( \perp \) in all future updates. The retraining dataset $\cS\setminus\cS^{\cU}$ is obtained by replacing every occurrence of a point $f \in\cS^{\cU}$ in $\cS$ with $\perp$.

\begin{defn}
    \label{defn:online-learner-unlearner}
    An OLU $\OnUnalg$ is an
    \emph{$(\alpha,\varepsilon)$-\gls{olu}} if,
    $\forall i=1,\dots,k-1$,
    \begin{equation}\label{eq:online-unlearning-defn}
        \rdp{\bs{\OnUnalg\br{\trainsetsub{\dt{i+1}-1}, \unlearnsetsub{i}, [\cT]_{1:i}}}_{\dt{i}:\dt{i+1}-1}}{\bs{\OnUnalg\br{\trainsetsub{\dt{i+1}-1}\setminus \unlearnsetsub{i}, \emptyset, [\cT]_{1:i}}}_{\dt{i}:\dt{i+1}-1}}\leq \varepsilon,
    \end{equation}
    where $\bs{\cdot}_{p:q}$ denotes the output sequence from time $p$ to $q$.
  \end{defn}

In~\Cref{defn:online-learner-unlearner}, we require that for each interval $[\dt{i},\dt{i+1})$ the OLU’s outputs be indistinguishable from those of retraining on the dataset with the first $i$ points removed. Although this condition is stated separately for each sub-interval, it nonetheless guarantees that once a point is deleted it remains protected forever. 
For example, when the second point $f_{\di{2}}$ is removed at $t = \dt{2}$, the interval-wise guarantees suffice to protect it for all future times: 
\begin{itemize}
    \item Interval $t\in  [\dt{2},\dt{3})$:~\Cref{defn:online-learner-unlearner} enforces indistinguishable from retraining on the dataset with the first two deleted points removed, thereby protecting $f_{\di{2}}$.
    \item Interval $t\in [\dt{3},\dt{4})$: after deleting $f_{\di{3}}$, \Cref{defn:online-learner-unlearner} enforces indistinguishability from retraining without the first three deleted points, which automatically continues to protect $f_{\di{2}}$.
    \item All later intervals: Each subsequent interval enforces indistinguishability from retraining with all points deleted up to that time, preserving protection of every previously removed point.
\end{itemize}



\noindent\textbf{OLU Regret }Since the best-in-hindsight comparator may change after
each deletion, we define regret in a manner reminiscent of dynamic regret ~\citep{Zinkevich2003,Zhao2020dynamic}, allowing the comparator to change across time: 
\begin{equation}\label{eq:regret-defn}
    \regret{T}{\OnUnalg\br{\trainset, \unlearnset, \cT}} = \sum_{i = 0}^k \sum_{t = \dt{i}+1}^{\dt{i+1}} 
    \Bigl[f_t(z_t) - f_t(z_i^\star)\Bigr],\quad z_i^\star 
    = \argmin_{z\in \cK} 
    \bc{\sum_{t=1}^{T} f_t(z) \;-\; \sum_{j=0}^{i} f_{\di{j}}(z)}.
\end{equation}
Here, we define $\dt{k+1} = T$ and $\dt{0} = 0$. The term \(z_i^\star\) is the best-in-hindsight estimator after the $i^{\it th}$ and before the $(i+1)^{\it th}$ deletion, computed over $\{f_1, ..., f_T\}$ with the first $i$ deleted cost functions $\{f_{\di{i}}, \ldots, f_{\di{i}}\}$ removed. This ensures both the learner and the comparator share the same history of cost functions, as in classical online learning setting. 


\noindent \textbf{OLU with DP} Differentially private (DP) online learning algorithms exhibits online unlearning guarantees, as shown in~\Cref{prop:DP-to-unlearner}. 
However, direct comparison is challenging because existing such methods~\citep{Smith13OptimalPrivateOnlineLearning,Prateek12DPOnline}    are typically analyzed under 
\((\varepsilon, \delta)\)-DP. Nonetheless, for the sake of comparison, interpreting these as 
\((\alpha, \varepsilon)\)-RDP online algorithms yields 
\((\nicefrac{\alpha}{k}, k^{1.6}\varepsilon)\)-\gls{olu}, as formalized in 
Proposition~\ref{prop:DP-to-unlearner}. This interpretation is fair as for any \(\delta>0\), any \((\alpha, \varepsilon)\)-RDP algorithm is also \(\br{2\varepsilon,\delta}\)-DP for \(\alpha>1 +\frac{\ln\nicefrac{1}{\delta}}{\varepsilon}\). However, this approach incurs a regret of \(O(dk \sqrt{T}/\varepsilon)\) for convex losses and 
\(O(dk\,\log^{2.5} T/\varepsilon)\) for strongly convex losses, where \(d\) is the instance-space 
dimension. By contrast, our unlearning algorithms ensure that the \(\log T\) term in the regret 
bounds does not depend on \(d\), as discussed in 
Sections~\ref{sec:passive}--\ref{sec:active-unlearner}.

\begin{restatable}{prop}{DPtoOnlineUnlearning}\label{prop:DP-to-unlearner}
    For \(k \in \mathbb{N}\), any \((\alpha, \varepsilon)\)-RDP online learning algorithm 
    is an \((\nicefrac{\alpha}{k}, k^{1.6}\varepsilon)\)-\gls{olu} for any deletion set \(U\) and 
    deletion-time set \(\mathcal{T}\) of size \(k\), if $\alpha \geq 2k$.
  \end{restatable}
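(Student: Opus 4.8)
The plan is to read the $(\alpha,\varepsilon)$-RDP online learner $\cA$ itself as an OLU that simply discards every deletion request --- i.e.\ $\UnDet_t$ and $\UnNoise_t$ are the identity in the update rule~\eqref{eq:defn-online-learner-unlearner-update-func}, so $\OnUnalg=\cA$ --- and to argue that the whole-transcript RDP guarantee of $\cA$, amplified by group privacy, already controls the divergence demanded by~\Cref{defn:online-learner-unlearner}. Fix $i\in\{1,\dots,k-1\}$. Since $\OnUnalg$ ignores deletions, $\OnUnalg(\trainsetsub{\dt{i+1}-1},\unlearnsetsub{i},[\cT]_{1:i})$ is the transcript of $\cA$ run on the stream $f_1,\dots,f_{\dt{i+1}-1}$, while $\OnUnalg(\trainsetsub{\dt{i+1}-1}\setminus\unlearnsetsub{i},\emptyset,[\cT]_{1:i})$ is the transcript of $\cA$ run on the same stream but with the $i$ entries at indices $\di{1},\dots,\di{i}$ replaced by the skip symbol $\perp$, which $\cA$ treats as a no-op (repeating its previous iterate). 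These two input streams differ in exactly $i\le k$ coordinates, and extracting the output window $[\dt{i},\dt{i+1}-1]$ from the transcript is post-processing; so by the data-processing inequality for R\'enyi divergence it suffices to prove the following group-privacy claim: \emph{if $\cA$ is $(\alpha,\varepsilon)$-RDP with respect to a single change and $\alpha\ge 2k$, then $D_{\alpha/k}(\cA(\sigma)\Vert\cA(\sigma'))\le k^{1.6}\varepsilon$ whenever the streams $\sigma,\sigma'$ differ in at most $k$ coordinates.}

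For the group-privacy claim, interpolate $\sigma=\sigma^{(0)},\sigma^{(1)},\dots,\sigma^{(k)}=\sigma'$ so that consecutive streams differ in a single coordinate; then $D_\gamma(\cA(\sigma^{(j-1)})\Vert\cA(\sigma^{(j)}))\le\varepsilon$ for all $j$ and all $\gamma\le\alpha$, by the single-change assumption and monotonicity of $D_\gamma$ in $\gamma$. A \emph{linear} chaining of these $k$ one-step bounds would inflate the R\'enyi order by $2^k$, so instead we chain along a \emph{balanced binary tree} over $[0,k]$: using (one form of) the weak approximate triangle inequality for R\'enyi divergence, $D_\beta(P\Vert R)\le\tfrac{\beta-1/2}{\beta-1}\bigl(D_{2\beta}(P\Vert Q)+D_{2\beta}(Q\Vert R)\bigr)$, recursively split each subinterval at its midpoint and route the divergence through it. Taking the R\'enyi order at the root to be $\alpha/k$, each of the $\log_2 k$ levels doubles the order, so the $k$ leaves sit at order exactly $\alpha$, where the one-step bound $\varepsilon$ applies; and since $\alpha\ge 2k$ every order in the tree is $\ge 2$, so every prefactor $\tfrac{\beta-1/2}{\beta-1}$ is $\le\tfrac{3}{2}$. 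Unrolling the recursion multiplies the running bound by at most $2\cdot\tfrac{3}{2}=3$ per level, giving
\[
D_{\alpha/k}\bigl(\cA(\sigma)\,\Vert\,\cA(\sigma')\bigr)\;\le\;3^{\log_2 k}\,\varepsilon\;=\;k^{\log_2 3}\,\varepsilon\;\le\;k^{1.6}\,\varepsilon .
\]
Together with the first paragraph this yields the $(\alpha/k,k^{1.6}\varepsilon)$-OLU guarantee for any size-$k$ sets $U,\cT$ (the case $i<k$ is handled by padding the chain with trivial steps up to length $k$, which only contributes zeros). For $k$ not a power of two one instead uses an unbalanced tree of depth $\lceil\log_2 k\rceil$ and tunes the free parameter in the weak triangle inequality at each node so that the leaf orders stay $\le\alpha$ and the internal orders stay $\ge 2$; the constant is unchanged up to $k^{\log_2 3}\le k^{1.6}$.

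The main obstacle is the R\'enyi-divergence bookkeeping of the second paragraph. R\'enyi divergence satisfies no genuine triangle inequality, and the available weak substitute raises the order (and levies a multiplicative penalty) at every combination step; the reason for the balanced-tree chaining is exactly to convert what would be a $2^k$ order blow-up into a factor-$k$ loss, and one then has to verify that the accumulated multiplicative constant is precisely $3^{\log_2 k}=k^{\log_2 3}$. This is where the hypothesis $\alpha\ge 2k$ is needed: it guarantees that the smallest order appearing in the tree --- the root, at $\alpha/k$ --- is at least $2$, which pins each prefactor at $\tfrac{3}{2}$. The other ingredients (that the keep and retraining streams differ in at most $k$ places, that windowing a transcript is post-processing, and that the skip-element ($\perp$) convention makes the removal of $f_{\di{j}}$ a single-coordinate change of the input stream) are routine.
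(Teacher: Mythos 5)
Your proof is correct and follows essentially the same route as the paper's: treat the $(\alpha,\varepsilon)$-RDP learner as an OLU that ignores deletion requests, observe that the kept and retrained streams differ in at most $k$ coordinates, and invoke group privacy for R\'enyi DP to pass from order $\alpha$ to order $\alpha/k$ at cost $k^{1.6}\varepsilon$. The only difference is that the paper cites the group-privacy step as a black box (Proposition 2 of Mironov, 2017), whereas you re-derive it via the balanced-tree chaining with the weak triangle inequality --- which is precisely Mironov's argument, with the hypothesis $\alpha\ge 2k$ playing the same role of keeping every order in the tree at least $2$ --- and you additionally make explicit the windowing/post-processing and skip-element details that the paper leaves implicit.
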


\section{Passive Unlearning}\label{sec:passive}
In this section, we introduce a general passive online learner-unlearner. This approach incurs no extra
computational overhead while ensuring a regret bound that is
comparable to a standard online learning algorithm without unlearning.
We build upon a base online learner, and illustrate how injecting
properly calibrated noise at deletion time steps guarantee unlearning,
as formalised below.

\noindent\textbf{Passive \gls{olu} } Our passive \gls{olu}~(\Cref{alg:zeroth-order-general}) is constructed around a
base online learning algorithm \(\cA\) whose update functions are
\(g_1, \ldots, g_T\).  When no deletion request arrives, the algorithm
simply follows the updates of \(\cA\). Upon receiving a deletion
request at \(\dt{i} \in \cT\), the algorithm first performs the
standard update defined by \(\cA\), then adds calibrated noise \(\xi_i
\sim \cN(0, \sigma_i^2)\) to each coordinate of the output
\(z_{\dt{i}}\). The noise scale \(\sigma_i\) depends on the timing of
the deletion request \(\dt{i}\), the index of the deleted point
\(\di{i}\), which are considered as public information, and certain properties of both the cost functions \(f_t\)
and the update functions \(g_t\).  These properties are inputs to the
algorithm and ensure the unlearning guarantee.

Formally, the update function $g_t^{\OnUnalg}$
of~\Cref{alg:zeroth-order-general} at time $t$ is defined as
$g_t^{\OnUnalg}\coloneqq\UnNoise_t\circ \UnDet_t\circ g_t$ (as defined
in~\cref{eq:defn-online-learning-updates}), where $\UnDet_t(x) = x$
for all $t < T$ is the identity function, indicating no update is
performed apart from the base learner's update, and $\UnNoise_t(x)$
injects noise only if $t \in \cT$ is a deletion time:
\begin{equation}
    \label{eq:perturbator-passive}
    \UnNoise_t(x) = \begin{cases}
    x & t\notin \cT\\
    x + \cN(0, \sigma_i^2) & t=\dt{i}\in \cT
\end{cases}
\end{equation}

Algorithm~\ref{alg:zeroth-order-general} displays a pseudocode
of this procedure.  In~\Cref{sec:passive-unlearning}, we introduce
three conditions
(Condition \ref{condition:markov},\ref{condition:contractive},\ref{condition:sensitivity})
on the base algorithm’s update functions $g_t$ that suffice to
establish the online unlearning guarantee in
\Cref{defn:online-learner-unlearner}. Then, in~\Cref{sec:passive-regret}, we derive the regret bound when the base online learner is OGD~(\Cref{eq:ogd-defn}), a widely used algorithm satisfying these conditions.

\begin{algorithm}[htbp]
    \small
    \caption{\small General Passive~\gls{olu} for $\cA$}
    \label{alg:zeroth-order-general}
    \begin{algorithmic}[1]
        \REQUIRE Sensitivities $\Delta_{1:T}$, cost functions $f_{1:T}$, base updates $g_{1:t}$ of $\cA$, learning rates $\eta_{1:T}$, contractive coefficient $\gamma$, deletion time set $\cT$, deletion index set $\cU$, privacy parameters $\varepsilon, \alpha$, and a real number $\omega>1$.

        \STATE Initialise $z_1 \in \cK$. 
        \FOR{Time step $t = 2, \ldots, T$}
          \STATE $z_t \gets g_t(f_{1:t},\, z_{1:t-1})$
          \hfill \COMMENT{ For OGD,$\mathtt{z_t = \Pi_{\cK}\bs{z_{t-1} - \eta_t\nabla f_{t-1}(z_{t-1})}}$} 
          \IF{there exists \(i\) such that \(t = \dt{i} \in \cT\)}
          \STATE $\sigma_i \gets \sqrt{\frac{\omega i^{\omega}}{2(\omega - 1)\varepsilon}}\gamma^{\dt{i}-\di{i}}\Delta_{\di{i}}$
        \STATE $z_t \gets z_t + \xi_i,\quad \text{where} \xi_i \sim \cN\br{0, \sigma_i^2 \cI_d}$
            
          \ENDIF
          \STATE \textbf{Output} $z_t$ 
        \ENDFOR
    \end{algorithmic}
\end{algorithm}

\vspace{-5pt}\subsection{Unlearning guarantee}\label{sec:passive-unlearning}

We
first discuss the three sufficient conditions on the update function
$g_t$ of the base algorithm \(\cA\) under
which~\Cref{alg:zeroth-order-general} is an $(\alpha,
\alpha\varepsilon)$-\gls{olu}. For any $z_1,z_2,x \in \cK$, $z_{1:t-1}\in \cK^{t-1}$, any cost function $f\in \cF$ and $f_{1:t}\in \cF^t$, $\gamma\in (0, 1]$ and $\Delta_t < \infty$, the update function $g_t$ satisfies: 
\begin{alignat}{2}
    &\textbf{Markovian Output: } &&\quad g_t(f_{1:t}, z_{1:t-1}) = g_t(f_t, z_{t-1}), \label{condition:markov} \tag{C1}\\
    &\textbf{$\gamma$-Contraction:} &&\quad \norm{g_t(f, z_1) - g_t(f, z_2)}_2 \leq \gamma\,\norm{z_1 - z_2}_2, \label{condition:contractive} \tag{C2}\\
    &\textbf{$\Delta_{1:T}$-Sensitivity:} &&\quad \norm{g_t(f, x) - x}_2 \leq \Delta_t. \label{condition:sensitivity}\tag{C3}
\end{alignat}

Condition~\ref{condition:markov} ensures that update $g_t$ depends only on the latest cost function and previous output, allowing it to be expressed as $g_t(f_t, z_{t-1})$ and simplifying subsequent conditions. The contraction property in
Condition~\ref{condition:contractive} has been central in privacy
analyses of
DP-SGD~\citep{feldman18iteration,altschuler2023privacynoisystochasticgradient},
sampling~\citep{altschuler2022resolvingmixingtimelangevin}, and
generalisation bounds for
SGD~\citep{hardt2016trainfastergeneralizebetter}. Recently,
\citet{chien2024certified} utilised contraction in designing an
\emph{active} unlearning scheme for (noisy) SGD in the offline
setting; to the best of our knowledge, our work is the first that
applies this idea in an online framework of unlearning. Bounded
sensitivity in Condition~\ref{condition:sensitivity} requires that each single-step update does not lead to
arbitrarily large changes in the model. This is a standard assumption that holds in many settings, as any continuous function
defined on a bounded domain admits a finite sensitivity. Given these conditions, \Cref{thm:general-passive-unlearning} proves that~\Cref{alg:zeroth-order-general} is an
$(\alpha,\alpha\varepsilon)$-\gls{olu}. 

\begin{restatable}{thm}{PassiveUnlearning}\label{thm:general-passive-unlearning}
  Let $\Delta_{1:T}< \infty$ and $\gamma \in (0, 1]$. If for all $t
  \in [T]$, the update function $g_t$ of algorithm $\cA$ fulfills
  Conditions~(\ref{condition:markov}), (\ref{condition:contractive})
  and (\ref{condition:sensitivity}), then 
  Algorithm~\ref{alg:zeroth-order-general} instantiated with $\cA$ is an $(\alpha, \alpha\varepsilon)$-\gls{olu}.
\end{restatable}

\paragraph{Proof Sketch.} 
\noindent
Consider a single deletion request at time $\dt{1}$ for the point that
originally appears at time $\di{1}$. Let $\ell = \dt{1} - \di{1}$
denote the gap between the point’s first inclusion and its requested
removal. We compare two output sequences: $(z_t)$, the usual updates by the online learning algorithm $\cA$ on all cost functions, and $(z_t')$, the updates when
$f_{\di{1}}$ is omitted. As illustrated
in~\Cref{fig:visualization-passive}, for $t < \di{1}$, neither process
has used $f_{\di{1}}$, so $z_t = z_t'$. At $t = \di{1}$, $(z_t)$ takes
one gradient step on $f_{\di{1}}$ while $(z_t')$ does not, creating a
maximum difference of $\Delta_{\di{1}}$ (due to~\ref{condition:sensitivity}). For
subsequent steps, both follow the same $\gamma$-contractive updates,
shrinking their distance by a factor of $\gamma \le 1$ each time. By
$t = \dt{1}$, the distance is at most $\gamma^\ell \Delta_{\di{1}}$. Injecting
suitably calibrated Gaussian noise of scale proportional to
$\gamma^\ell \Delta_{\di{1}}$ makes the two processes statistically
indistinguishable under R\'enyi divergence (see
Lemma~\ref{lem:gaussian-distributions-renyi-divergence}), yielding the
$(\alpha, \alpha\varepsilon)$-\gls{olu} guarantee.

\noindent For subsequent deletions, we must track two sequences of
random variables where each evolve according to a deterministic
$\gamma$-contractive map with an added random noise term at specific
time steps.
Leveraging~\Cref{lem:contractiveness-privacy-amplification}, an
argument similar to the Privacy Amplification by Iteration
in~\citet{feldman18iteration}, we ensure the guarantee holds after
each deletion. See the full proof in~\Cref{app:passive-unlearning}.  \hfill\ensuremath{\square}

\noindent
Finally, OGD (Equation~\ref{eq:ogd-defn}) satisfies Conditions~\ref{condition:markov}, \ref{condition:contractive}, and \ref{condition:sensitivity} under standard smoothness and convexity assumptions. As a result, Algorithm~\ref{alg:zeroth-order-general} instantiated with OGD is an $(\alpha,\alpha\varepsilon)$-\gls{olu}.

\begin{figure}[t]
    \centering
    {\small
    \begin{tikzpicture}[scale = 0.9]
        \node[fill=blue!20, circle, inner sep = 5pt] (z1) at (0, 0) {$z_1$};
        \node[below=5pt of z1]{$z_1'$};
        \node at (0, -0.6) [rotate=90] {$=$};
    
        \node[fill=blue!20, circle, inner sep = 5pt] (z2) at (1.4, 0) {$z_2$};
        \node[below=5pt of z2]{$z_2'$};
        \node at (1.4, -0.6) [rotate=90] {$=$};
    
        \draw[->] (z1) -- (z2); 
    
        \node[fill=blue!20, circle, inner sep = 0pt] (zu1-1) at (2.9, 0) {$z_{\di{1}-1}$};
        \node[below=5pt of zu1-1, inner sep = 3pt]{$z_{\di{1}-1}'$};
        \node at (2.9, -0.6) [rotate=90] {$=$};
    
        \node at (2.1, 0) {$\ldots$}; 
    
        \node[fill=blue!20, circle, inner sep = 3.8pt] (zu1) at (4.5, 0) {$z_{\di{1}}$};
        \node[fill=red!20, circle, inner sep = 2.9pt] (zu1') at (4.5, -2.5) {$z_{\di{1}}'$};
        
        \draw[->] (zu1-1) -- (zu1);
        \draw[->] (zu1-1) -- (zu1');
        \draw[<->, dashed] (zu1) -- (zu1') node[midway, right] {$L$}; 
    
        \node[fill=blue!20, circle, inner sep = 0pt] (zu1+1) at (6.1, 0) {$z_{\di{1}+1}$};
        \node[fill=red!20, circle, inner sep = 0pt] (zu1+1') at (6.1, -2.2) {$z_{\di{1}+1}'$};
    
        \draw[->] (zu1) -- (zu1+1);
        \draw[->] (zu1') -- (zu1+1');
        \draw[<->, dashed] (zu1+1) -- (zu1+1') node[midway, right] {$\gamma L$};
    
        \node[fill=blue!20, circle, inner sep = 0pt] (zu1+2) at (7.7, 0) {$z_{\di{1}+2}$};
        \node[fill=red!20, circle, inner sep = 0pt] (zu1+2') at (7.7, -1.9) {$z_{\di{1}+2}'$};
    
        \draw[->] (zu1+1) -- (zu1+2);
        \draw[->] (zu1+1') -- (zu1+2');
        \draw[<->, dashed] (zu1+2) -- (zu1+2') node[midway, right] {$\gamma^2 L$};
    
        \node[fill=blue!20, circle, inner sep = 0pt] (zt1-1) at (10, 0) {$z_{\dt{1}-1}$};
        \node[fill=red!20, circle, inner sep =0pt] (zt1-1') at (10, -1.5) {$z_{\dt{1}-1}'$};
        
        \node at (8.9, 0) {$\ldots$};
        \node[rotate=13] at (8.9, -1.7) {$\ldots$};
        \draw[<->, dashed] (zt1-1) -- (zt1-1'); 
        \node at (10.6, -0.7) {$\gamma^{\ell-1}L$};
    
        \node[fill=blue!20, circle, inner sep = 3.8pt] (zt1) at (12, 0) {$z_{\dt{1}}$};
        \node[fill=red!20, circle, inner sep = 2.9pt] (zt1') at (12, -1) {$z_{\dt{1}}'$};
    
        \draw[->] (zt1-1)--(zt1);
        \draw[->] (zt1-1')--(zt1');
        \draw[<->, dashed] (zt1) -- (zt1'); 
        \node at (12.5, -0.5) {$\gamma^{\ell}L$};

        \node at (13, 0) {$\ldots$}; 
        \node at (13, -1) {$\ldots$}; 

        \draw (zu1.north) .. controls (6, 1.5) and (10.5, 1.5) .. (zt1.north)
        node[midway, above=10pt] {$\ell$ steps of gradient update};
    \end{tikzpicture}}
    \caption{\small Visualization of the output sequence of algorithm $\cA$ up to the first deletion $\dt{1}$\vspace{-1cm}}
    \label{fig:visualization-passive}
\end{figure}
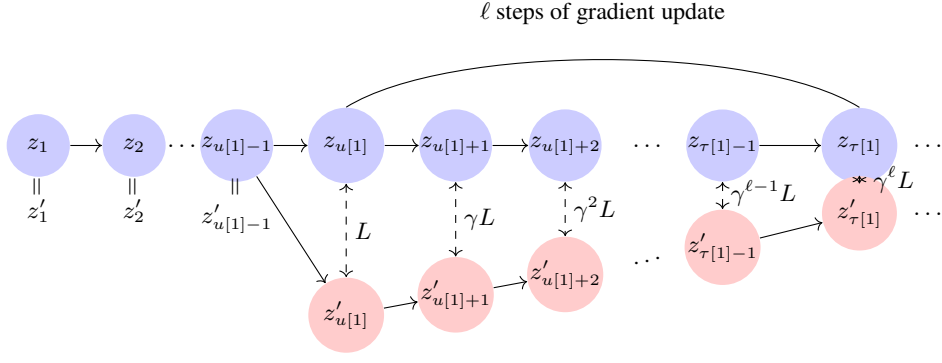



\begin{cor}\label{corollary:zeroth-order-unlearning} Assume each cost function $f_t$
    is $\beta$-smooth and convex. Then~\Cref{alg:zeroth-order-general}
    with OGD update step and learning rate $\eta \leq 2/\beta$ is an $(\alpha, \alpha \varepsilon)$-\gls{olu} for $\gamma = 1$. If the cost functions are
    $\beta$-smooth and $\mu$-strongly convex, then the same algorithm with $\eta\leq 1/(\beta + \mu)$
    is an $(\alpha, \alpha\varepsilon)$-\gls{olu} for $\gamma =
    \frac{\beta/\mu-1}{\beta/\mu + 1}$.
\end{cor}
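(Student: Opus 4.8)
}
The plan is to show that the OGD update map satisfies Conditions~\ref{condition:markov}, \ref{condition:contractive} and \ref{condition:sensitivity} with the constants stated in the corollary, and then invoke \Cref{thm:general-passive-unlearning} directly. Throughout, write $g_t(f,z) = \Pi_\cK\bs{z - \eta\nabla f(z)}$ for the OGD update with step size $\eta$. Condition~\ref{condition:markov} is immediate: the closed form of the update depends only on the current cost function $f$ and the previous iterate $z$, not on earlier cost functions or iterates, so $g_t(f_{1:t}, z_{1:t-1}) = g_t(f_t, z_{t-1})$.

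For Condition~\ref{condition:contractive} I would use two standard facts. First, the Euclidean projection $\Pi_\cK$ onto a convex set is non-expansive, so $\norm{g_t(f,z_1) - g_t(f,z_2)}_2 \le \norm{(z_1 - \eta\nabla f(z_1)) - (z_2 - \eta\nabla f(z_2))}_2$, and it suffices to bound the Lipschitz constant of the plain gradient step $x \mapsto x - \eta\nabla f(x)$. Second, I would expand $\norm{(z_1 - \eta\nabla f(z_1)) - (z_2 - \eta\nabla f(z_2))}_2^2$ and apply the appropriate co-coercivity / interpolation inequality. In the $\beta$-smooth convex case, co-coercivity of the gradient, $\langle \nabla f(z_1) - \nabla f(z_2), z_1 - z_2\rangle \ge \beta^{-1}\norm{\nabla f(z_1) - \nabla f(z_2)}_2^2$, shows the cross term dominates the squared-gradient term precisely when $\eta \le 2/\beta$, giving non-expansiveness, i.e.\ $\gamma = 1$. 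In the $\beta$-smooth $\mu$-strongly convex case, Nesterov's sharper bound (with coefficients $\tfrac{\mu\beta}{\mu+\beta}$ and $\tfrac{1}{\mu+\beta}$) yields $\norm{(z_1 - \eta\nabla f(z_1)) - (z_2 - \eta\nabla f(z_2))}_2^2 \le \br{1 - \tfrac{2\eta\mu\beta}{\mu+\beta}}\norm{z_1 - z_2}_2^2$ for the admissible step sizes, and one checks that the stated choice of $\eta$ makes the resulting contraction factor equal to $\gamma = \tfrac{\beta/\mu - 1}{\beta/\mu + 1}$. Composing with the non-expansive projection leaves $\gamma$ unchanged.

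For Condition~\ref{condition:sensitivity}, since $x \in \cK$ we have $\Pi_\cK(x) = x$, hence $\norm{g_t(f,x) - x}_2 = \norm{\Pi_\cK\bs{x - \eta\nabla f(x)} - \Pi_\cK(x)}_2 \le \eta\norm{\nabla f(x)}_2$ by non-expansiveness of the projection. Gradients are bounded on the bounded domain $\cK$ — either from $\beta$-smoothness together with $\operatorname{diam}(\cK)=D$ (so $\norm{\nabla f(x)}_2 \le \norm{\nabla f(x_0)}_2 + \beta D$ for a fixed reference point $x_0 \in \cK$), or from a Lipschitz assumption on the $f_t$ — which gives a finite sensitivity $\Delta_t$ proportional to $\eta$. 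Having verified \ref{condition:markov}--\ref{condition:sensitivity} with $\Delta_{1:T} < \infty$ and the claimed $\gamma$, \Cref{thm:general-passive-unlearning} immediately gives that \Cref{alg:zeroth-order-general} run with the OGD update is an $(\alpha,\alpha\varepsilon)$-\gls{olu}.

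The only step that demands genuine care rather than routine manipulation is pinning down the contraction constant in the strongly convex case: getting the factor to come out to exactly $\tfrac{\beta/\mu-1}{\beta/\mu+1}$ (rather than a slightly looser bound) hinges on using the correct co-coercivity inequality together with the matching step size. Everything else — Condition~\ref{condition:markov}, the non-expansiveness of the projection and of the gradient step in the merely convex case, and the sensitivity bound — is bookkeeping, and the final conclusion is a one-line appeal to \Cref{thm:general-passive-unlearning}.
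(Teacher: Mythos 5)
Your proposal follows essentially the same route as the paper: the paper's proof simply cites the known contraction constants for a gradient step (Proposition~18 of \citet{feldman18iteration} / Lemma~2.2 of \citet{altschuler2022resolvingmixingtimelangevin}) and observes that composing with the non-expansive projection preserves them, whereas you re-derive those constants from co-coercivity and additionally spell out Conditions~\ref{condition:markov} and~\ref{condition:sensitivity}, which the paper leaves implicit. One caveat, shared with the paper itself: the exact factor $\gamma = \tfrac{\beta/\mu-1}{\beta/\mu+1} = \sqrt{1 - \tfrac{4\mu\beta}{(\mu+\beta)^2}}$ is attained at $\eta = \tfrac{2}{\mu+\beta}$, so for the corollary's stated range $\eta \le \tfrac{1}{\beta+\mu}$ the bound $\sqrt{1-\tfrac{2\eta\mu\beta}{\mu+\beta}}$ is strictly larger than that $\gamma$, and your claim that "one checks that the stated choice of $\eta$ makes the resulting contraction factor equal to $\gamma$" does not go through as written (the cited lemma likewise requires $\eta=\tfrac{2}{M+m}$).
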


\subsection{Regret guarantee}\label{sec:passive-regret}

In this section, we analyze the regret of our passive unlearning
algorithm with OGD as the base learner. First, we derive regret bounds
under decreasing learning rates for strongly convex~(\Cref{thm:regret-passive-unlearner-strongly-convex}) and convex~(\Cref{thm:regret-passive-unlearner-convex}) cost functions, highlighting the
role of the deletion index sets \(\cT,\cU\).
Next,~\Cref{thm:regret-passive-unlearner-convex-adaptive} uses
adaptive learning rates to obtain regret bounds based on the decay of
gradient norms. Finally, we show that a constant learning rate
achieves a worst-case regret guarantee (\Cref{thm:passive-worst-case})
for any deletion schedule, provided the total number of steps and
deletions are known.~\Cref{tab:learning-rate-comparison}
summarizes the regret guarantees
of~\Cref{alg:zeroth-order-general} under different learning rate
schedules and assumptions. 
\begin{table}[htbp]
    \centering
    {\small
    \renewcommand{\arraystretch}{1.2} 
    \begin{tabular}{l|c|c|c}
        \toprule
        \textbf{Method} & \textbf{Learning Rate} & \textbf{Assumptions} & \textbf{Regret Guarantees} \\
        \midrule
        \multirow{2}{*}{Decreasing} 
        & $\eta_t = \tfrac{1}{\mu t}$ 
        & Strongly convex (SC) 
        & \Cref{thm:regret-passive-unlearner-strongly-convex} \\
        &  $\eta_t = \tfrac{D}{L\sqrt{t}}$
        & Convex + Quadratic Growth (QG) 
        & \Cref{thm:regret-passive-unlearner-convex} \\
        \midrule
        Adaptive 
        & $\eta_t = \sqrt{\tfrac{D^2}{\sum_{i=1}^{t}\|\nabla f_i(z_i)\|_2^2}}$
        & Convex + Public Gradient Norms
        & \Cref{thm:regret-passive-unlearner-convex-adaptive} \\
        \midrule
        Constant 
        & $\eta = \sqrt{\frac{2D^2}{TL^2\br{1 + \frac{1.2\,k^{2.2}\,\,d}{0.42\varepsilon}}}}$ 
        & Knowledge of $k,T$; QG + Convex/SC
        & \Cref{thm:passive-worst-case} \\
        \bottomrule
    \end{tabular}}
    \caption{\small Overview of assumptions and regret bounds using~\Cref{alg:zeroth-order-general} with different learning rates.
    }
    \label{tab:learning-rate-comparison}
\end{table}


\noindent\textbf{Decreasing learning rate  }In passive unlearning, the final regret increases with as the deleted
point’s effect on subsequent outputs increases, since more noise must
be added to obscure its impact. Two competing factors related to the
time gap between the deletion request and the point’s initial
occurrence dictate this effect: \emph{(i)}~the decreasing learning
rate, which magnifies the impact of earlier points on future outputs,
and \emph{(ii)}~the contractiveness of gradient descent, which reduces
a point’s influence over
time.~\Cref{thm:regret-passive-unlearner-strongly-convex} presents the
regret of~\Cref{alg:zeroth-order-general} for strongly convex cost functions.

\begin{restatable}{thm}{PassiveRegretSC}
    \label{thm:regret-passive-unlearner-strongly-convex}
    Suppose $\cU$ (deletion indices) and $\cT$
    (deletion times) are each of size $k$. If the cost functions are
    $L$-Lipschitz, $\beta$-smooth, and $\mu$-strongly convex and $\di{i}\geq \frac{1}{2} + \frac{\beta}{\mu}$, then for
    all $T \ge k$, \Cref{alg:zeroth-order-general} with step size
    $\eta_t = 1/(\mu t)$ satisfies
      \[\bE\bs{\regret{T}{\trainset, \unlearnset, \cT}} = {\frac{L^2}{\mu}\br{\log T + 2k^2 + \frac{\sqrt{3}dk^{1.7}}{\varepsilon}\cG_1\br{\gamma, \cT,\cU}}}, \]
    where $\gamma = \frac{\beta/\mu - 1}{\beta/\mu + 1}$ and \(\cG_1\br{\gamma,\cT,\cU}=\sqrt{\sum_{i = 1}^k {\dt{i}^2\gamma^{4(\dt{i}-\di{i})}}/{\di{i}^4}}\). 
\end{restatable}

Here, the function $\cG_1$ reflects the effect of deletion indices on the regret: $\dt{i}^2/\di{i}^4$ captures the impact of the decreasing learning
rate (larger when $\di{i}$ is small), while
$\gamma^{4(\dt{i}-\di{i})}$ reflects contractiveness, which in the
strongly convex case ($\gamma < 1$) is the dominant term in the
expression. Therefore, each summand in $\cG_1$ is of the order $O(1)$ for arbitrary $\cT, \cU$ and we can get a loose regret bound of the order $O(\log T + k^2 + \nicefrac{dk^{2.2}}{\varepsilon})$ independent of the indices of deletion set. 


Next, \Cref{thm:regret-passive-unlearner-convex} addresses
the case of convex but not necessarily strongly convex cost functions.
Since the regret definition for the online learner-unlearner~(\Cref{eq:regret-defn}) involves a changing comparator, we
impose the Quadratic Growth (QG) assumption~(\Cref{assump:assumption-qg})---a weaker assumption than strong convexity~\citep{Chang2018Onthe}---on each aggregate cost function\(\sum_{t=1}^{\dt{i}} f_t\)\footnote{For the standard online learning problem, Theorem 3 in~\citet{Chang2018Onthe} implies that individual QG together with \Cref{assump:assumption2} yields an $O(\log T)$ regret bound, provided the learning rate decays rapidly ($\eta_t = O(1/t)$).  When the loss functions are only convex, the OLU setting requires a slower decay of the learning rate ($\eta_t = O(1/\sqrt{t})$) to keep the unlearning overhead small. Therefore, individual QG no longer suffices to achieve logarithmic regret in the OLU setting. 
}. The QG assumption on aggregate cost functions ensures that the comparator’s change after each unlearning is bounded.


\begin{assumption}[Quadratic Growth]\label{assump:assumption-qg}
    For any function $F$ on $\cK$, let $z^\star = \argmin_{z\in \cK} F(z)$.
    Then $F$ has quadratic growth with parameter $\kappa$ if for all
    $z \in \cK$,\[F(z) - F(z^\star) \geq \frac{\kappa}{2}\norm{z - z^\star}_2^2.\] 
\end{assumption}



\begin{restatable}{thm}{PassiveRegretCN}
    \label{thm:regret-passive-unlearner-convex}
   Suppose $\cU$ and $\cT$ are each of size $k$. If $f_1, \ldots, f_T$ are $L$-Lipschitz,
    $\beta$-smooth convex cost functions such that for each $i$, 
    $\sum_{t=1}^{\dt{i}}f_t$ satisfies \Cref{assump:assumption-qg} with
    parameter $\kappa(\dt{i}-\dt{i-1})$ and $\di{i} \geq \frac{\beta^2D^2}{4L^2}$, then for all $T \ge k$, \Cref{alg:zeroth-order-general}
    with $\eta_t = \tfrac{D}{L\sqrt{t}}$ satisfies
    \[\bE\bs{\regret{T}{\cR_{\cA}(\trainset,\unlearnset, \cT) }} \leq 3DL \sqrt{T}  + \frac{2k^2L^2 }{\kappa}+ \frac{3DLd k^{1.7}}{2\varepsilon}\cG_2(\cU, \cT),\]
    where $\cG_2(\cT, \cU) = \sqrt{\sum_{i = 1}^k\frac{\dt{i}}{\di{i}^2}}$. 
\end{restatable}

As
\Cref{thm:regret-passive-unlearner-convex} shows, if the function is only convex, we have
$\gamma=1$, so the effect of the decreasing learning rate dominates
over the contractiveness. Hence, deleting points that
occur later (large~\(\di{i}\)) lead to smaller regret. In particular,
if \( \dt{i} = o(\di{i}^2) \), $\cG_2(\cT, \cU)$ is constant and the regret remains \( O(\sqrt{T}) \),
matching the bound achieved by OGD without unlearning. 

The proof
of~\Cref{thm:regret-passive-unlearner-strongly-convex,thm:regret-passive-unlearner-convex}
follows the standard regret analysis of
OGD~\citep{hazan2007logarithmic}, with additional careful handling of
the noise term. See~\Cref{app:passive-regret} for the full proof.  


\noindent\textbf{Adaptive learning rates }
If we assume that the gradient norms at each output is public
information, one can use adaptive learning
rates~\citep{Duchi2011Adagrad,srebro2012optimisticrateslearningsmooth}.
Using this,~\Cref{thm:regret-passive-unlearner-convex-adaptive} achieves
a bound independent of \(\cT,\cU\), but dependent on how quickly these
norms decrease. 

\begin{restatable}{thm}{PassiveRegretCA}
    \label{thm:regret-passive-unlearner-convex-adaptive}
    Let $f_1,\dots,f_T$ be convex, $L$-Lipschitz and $\beta$-smooth
    and suppose \(\cU\) and \(\cT\) are each of size \(k\). Define $p(t)=\sum_{i=1}^t \|\nabla f_i(z_i)\|_2^2$, if there exists some $u_0 \geq 1$ such that $p(u_0)\geq\nicefrac{\beta^2}{4}$ and $\di{i}\geq u_0$, then the
    expected regret of~\Cref{alg:zeroth-order-general} with adaptive learning
    rate $\eta_t = \frac{D}{\sqrt{p(t)}}$ is
    \[\bigO{D^2\beta + D\sqrt{\sum_{i = 0}^k\sum_{t = \dt{i}+1}^{\dt{i+1}}f_t(z_i^\star)} + dk^2L^2D^2\cG_3(\cT, \cU, \trainset)},\]
    where $z_i^\star$ is a
    best-in-hindsight solution after the $k^\text{th}$ deletion, and $\cG_3(\cT, \cU, \trainset) = \sqrt{\beta \sum_{ i = 1}^k\frac{p(\dt{i})}{p(\di{i})^2} }$
\end{restatable}

The passive unlearner with an adaptive learning rate does not
explicitly require \( \dt{i} = \bigO{\di{i}} \) and instead accounts
for the algorithm's performance over time. Specifically, if the
post-deletion gradients do not grow significantly~\textit{i.e.}~\(\cG_3(\cT, \cU, \trainset) = O(\sqrt{T})\), the additional
regret from unlearning stays $O(\sqrt{T})$. Furthermore, the second term, \( D \sqrt{\sum_{i=0}^k \sum_{t=\dt{i}+1}^{\dt{i+1}} f_t(z_i^\star)} \), depends on the best-in-hindsight estimator and is \( O(\sqrt{T}) \) in the worst case. However, it can yield a tighter bound when the best-in-hindsight estimator incurs a smaller loss~\citep{srebro2012optimisticrateslearningsmooth}. Additionally, the adaptive method does not require prior knowledge of the Lipschitz constant of the cost functions.

\noindent\textbf{Constant learning rate }
The preceding results depend on either
\(\cT,\cU\)~(in~\Cref{thm:regret-passive-unlearner-convex,thm:regret-passive-unlearner-strongly-convex})
or the gradient
norms~(in~\Cref{thm:regret-passive-unlearner-convex-adaptive}).  Thus,
unfavourable deletion sets or cost functions can lead to high regret
well above the regret of their counterparts without unlearning.

In~\Cref{thm:passive-worst-case}, we show that using a \emph{constant}
learning rate ensures $O(k^{1.1}\sqrt{T})$ regret uniformly over any
deletion schedule for convex cost functions. While such a choice does
not rely on the timing of deletions or gradient shrinkage, it does
require knowledge of $T$ and $k$ beforehand and results in worse regret guarantee when the deletion schedule is favorable.  In practice, this may be
unreasonable or may necessitate a meta-strategy (e.g.\ doubling) to
tune the constant learning rate.

\begin{restatable}{thm}{PassiveRegretWorstCase}
    \label{thm:passive-worst-case}
    For any \(\cU,\cT\) of size $k$, if the cost
    functions $f_1, \ldots, f_T$ are convex, $L$-Lipschitz, and $\beta$-smooth, and
    $\sum_{t=1}^{\dt{i}}f_t$ satisfies \Cref{assump:assumption-qg} with
    parameter $\kappa(\dt{i}-\dt{i-1})$, then there exists a constant step size $\eta$ such that the expected regret of~\Cref{alg:zeroth-order-general} is
   \[L\br{D+ k^{1.1}\sqrt{\nicefrac{ d}{\varepsilon}}}\sqrt{2T} +  \frac{2L^2k^2}{\kappa}. \]
    \end{restatable}

\section{Active Unlearning}\label{sec:active-unlearner}
\vspace{-5pt}The passive approach in \Cref{sec:passive} passively exploits OGD's
properties without explicitly moving the current output toward the
retrained solution. Analyzing deterministic OGD updates directly for
unlearning can be difficult, and to the best of our knowledge, it has
not been done before.  Instead, we leverage the
\emph{descent-to-delete} method of
\citet{neel2020descenttodeletegradientbasedmethodsmachine}, originally
proposed for ERM, and integrate it into an active~\acrfull{olu} (\Cref{alg:ERM-first-order}). 

Though designed for ERM rather than OGD, the descent-to-delete
procedure can still bring our algorithm's output closer to the output
that would arise from retraining on data excluding the deleted points,
provided the current OGD output is not too far from the ERM minimizer.
As shown in \Cref{thm:first-order-guarantee}, this often allows the
active unlearning algorithm to add less noise than the passive method,
yielding improved regret bounds, as shown
in~\Cref{thm:first-order-guarantee}.

\noindent\textbf{Active~\Gls{olu} via Descent-to-Delete~\citep{neel2020descenttodeletegradientbasedmethodsmachine}} Of the various
ERM-based unlearning algorithms
\citep{Sekhari21Remember,neel2020descenttodeletegradientbasedmethodsmachine,suriyakumar2022algorithms},
we adapt the descent-to-delete approach of
\citet{neel2020descenttodeletegradientbasedmethodsmachine} thanks to
its simplicity, computational efficiency, and certified unlearning
guarantee.\footnote{Our framework also accommodates other ERM
unlearning schemes, e.g.\ the Newton-based method of
\citet{Sekhari21Remember}; see \Cref{alg:ERM-second-order}.} In
essence, this algorithm unlearns a set of points by running a few
gradient-descent iterations on the remaining data.~\Cref{alg:ERM-first-order} combines OGD as the base learner with
descent-to-delete as the deterministic unlearning function. The
procedure alternates between two modes: during regular learning,
it follows the OGD update rule.  When a request arrives to delete a
point $f_{\di{i}}$, the algorithm first performs $\cI_{1,i}$
gradient-descent steps on all previously seen cost functions, then
runs $\cI_{2}$ steps on all but the deleted points, and finally
injects calibrated noise.


Compared with the original offline descent-to-delete algorithm, our
online adaptation uses an additional $\cI_{1,i}$ steps of gradient
descent on \emph{all} previously seen cost functions. In offline
settings, the descent-to-delete procedure starts near the ERM solution
of the retained set; but OGD outputs can be far from this solution, so
the extra steps are required to bring the current model closer to the
ERM solution. This shift reduces the noise needed to ensure
unlearning and highlights an important caveat in adapting offline
unlearning algorithms to online settings.

Because the unlearning procedure moves the model away from the pure
OGD output toward an ERM solution, we require an additional
assumption~(\Cref{assump:assumption2}) to control this shift.
Intuitively, \Cref{assump:assumption2} ensures that each individual
cost function's minimizer is closely aligned with the overall ERM
minimizer, so gradient descent over these functions sequentially
pushes the model toward the global minimum.  \begin{assumption}
    \label{assump:assumption2}
    Let $\cT = \{\dt{1}, \dots, \dt{k}\}$, be the deletion times, $\dt{0} = 1$ and for each $1\le i\le k$, define 
    $z_i^\star = \argmin_z \sum_{t=1}^{\dt{i}} f_t(z).$
    Then, for every $i$, there exists $a_i\in \cK$ s.t. $\norm{a_i - z_i^\star} \le \frac{1}{\dt{i}}$
    and 
    \[
    \nabla f_t(a_i) = 0 \quad \text{for all } t \in (\dt{i-1},\dt{i}).\]
\end{assumption}

Under this assumption,
the active~\Gls{olu} in \Cref{alg:ERM-first-order} achieves $O(\log
T)$ regret \emph{independently} of the deletion indices $\cU$ but dependent on the deletion time $\cT$, as stated in~\Cref{thm:first-order-guarantee}.

\begin{restatable}{thm}{FirstOrderGuarantee}\label{thm:first-order-guarantee}
    Let $\cU$ and $\cT$ each have size $k$. For all $i$, assume
    $\dt{i-1}\le \di{i}\le \dt{i}$. Suppose each $f_t$ is
    $L$-Lipschitz, $\mu$-strongly convex, and $\beta$-smooth.  If the
    number of gradient-descent steps on all previously seen points,
    $\cI_{1,i}$ is at least \(\log_{\frac{1}{\gamma}} \frac{\mu D
    \dt{i}}{L}\) and $\cI_2\geq 2.2\log_{\frac{1}{\gamma}}k$, then \Cref{alg:ERM-first-order} is an
    $(\alpha,\alpha\varepsilon)$-\gls{olu}. Moreover,
    if \Cref{assump:assumption2} holds, the regret of
    \Cref{alg:ERM-first-order} is
    \[O\br{\log T+ k\br{LD^2 + \frac{Ld}{\mu \varepsilon}}+ \cG_2(\cT, \gamma) + \frac{L^2k^2}{\mu}},\]
    where $\gamma = \frac{\beta/\mu - 1}{\beta/\mu + 1}$ and $\cG_2 = \sum_{i = 1}^k \gamma^{\dt{i} - \dt{i-1}}\br{\dt{i} - \dt{i-1}}$.
\end{restatable}


\section{Discussion and Open Problems}\label{sec:discussion}




\begin{table}[t]
    \centering
    {\small\vspace{-10pt}
    \renewcommand{\arraystretch}{1.3} 
      \begin{tabular}{lcccc}
        \toprule
        \textbf{Algorithm} &\textbf{Assumptions} &\textbf{Regret} & $\cG$~(\textbf{Impact of deletion set}) &\textbf{Computation} \\
        \midrule
        \multirow{2}{*}{Passive~(\Cref{alg:zeroth-order-general})} & SC &  $\log T + k^2 + \cG$ & $dk^{1.7}\sqrt{\sum_{i = 1}^k\frac{\dt{i}^2\gamma^{4(\dt{i}-\di{i})}}{\di{i}^4}}$  & $1$\\
        &C + QG & $\sqrt{T} + k^2 + \cG$ & $dk^{1.7}\sqrt{\sum_{i = 1}^k\frac{\dt{i}}{\di{i}^2}}$& $1$ \\
        \midrule
        Active~(\Cref{alg:ERM-first-order}) &SC +~\ref{assump:assumption2} & $\log T + k^2 + \cG$ & $\sum_{i = 1}^k \gamma^{\Delta_\tau[i]}(\Delta_{\tau}[i])$ & $\log_{\frac{1}{\gamma}}\frac{k\mu D\dt{i}}{L}$ \\
        \midrule
        \multirow{2}{*}{Discard-and-restart}&SC  & $k\log T+ \cG$ &0& $1$ \\
                               &C   & $k\sqrt{T}+ \cG$&0 & $1$\\
        \midrule
        \multirow{2}{*}{Online DP} &SC & $dk\log^{2.5} T+ \cG$  &0 & $\log \dt{i}$\\
        & C  & $k\sqrt{dT\log^{2.5}T}+ \cG$  & 0&$\log \dt{i}$\\ 
        \midrule
        \multirow{2}{*}{Retraining }&SC  & $\log T + \cG$ & 0&  $\dt{i}$ \\
                       &C   & $\sqrt{T}+ \cG$ & 0&  $\dt{i}$\\
        \bottomrule
    \end{tabular}}
    \caption{\small Comparison of regret and computation cost of different~\gls{olu}: SC and C stands for strongly convex and convex setting, QG refers to~\Cref{assump:assumption-qg} over aggregate cost functions, and 2 refers to~\Cref{assump:assumption2}. Computation cost is with respect to each unlearning step. $\Delta_\tau[i] = \dt{i}-\dt{i-1}$. All values are expressed in order terms, omitting  constants and dependence on functional parameters like Lipschitzness for simplicity. \vspace{-10pt}}
    \label{tab:comparison}
\end{table}

\vspace{-5pt}\Cref{tab:comparison} compares our proposed unlearning algorithms to
several baselines in terms of regret and per-deletion computational
cost. Two naive baselines are \emph{retraining from scratch}, which
attains the best possible regret at $O(\dt{i})$ cost per deletion, and
\emph{discard-and-restart}, which reinitializes~\Gls{ogd} after every
deletion and trivially achieves $(\alpha,0)$-online unlearning
(\Cref{defn:online-learner-unlearner}). We also compare with the DP-online algorithm of \citet{Smith13OptimalPrivateOnlineLearning} as a
baseline despite it incurring at least $O(\log t)$ computation per step for both
learning and unlearning. We use the term \(\cG\) in~\Cref{tab:comparison} to specifically show how the nature of deletion requests~(i.e. \(\cT,\cU\)) affects the regret guarantees.

In the strongly convex setting with $\bigO{1}$ computational overhead per
unlearning, our passive~\Gls{olu} algorithm
(\Cref{alg:zeroth-order-general}) nearly matches the regret of
retraining, with the term $\cG$ decreasing exponentially in $\dt{i} - \di{i}$ i.e. the time between learning and deletion. However, its performance can worsen in the convex setting
if the adversary strategically selects earlier points for deletion i.e. small \(\di{i}\). In fact, unlike the strongly convex case, in the convex setting the additional term \(\cG\) does not vanish exponentially fast. By
allowing slightly more computation, $O(\log \dt{i})$ at each deletion
and assuming OGD converges to the ERM solution, the \emph{active}
approach (\Cref{alg:ERM-first-order}) can attain a regret bound that
essentially matches retraining for strongly convex losses, with the term $\cG$ decreasing exponentially with $\dt{i} - \dt{i-1}$ i.e. time between deletion requests. Both methods improve over the DP-based
algorithm by a factor of $d$ as well as a worse polylogarithmic dependence on \(T\)~(since unlearning imposes privacy only on
deleted data, unlike DP, which covers all points), and they also
outperform the discard-and-restart baseline by a factor of $k$. 

Although deriving a lower bound on regret in terms of $T$ would be valuable, we defer it to future work, since such a bound requires restricting the computation complexity of the unlearning algorithms. Without such constraint, FTL-type algorithms can achieve exact unlearning with the same regret as in the standard online learning setting.~\cite{ghazi2023ticketed} explores lower bounds on the space complexity of the exact unlearning in the offline setting, but it is orthogonal to establishing a regret lower bound. 


Compared to offline unlearning algorithms~\citep{chien2024certified,Guo19Certified,neel2020descenttodeletegradientbasedmethodsmachine,Sekhari21Remember}, our method requires weaker assumptions: while most offline approaches rely on strong convexity to establish unlearning guarantees, our passive \gls{olu} achieves this under merely convex losses~(\Cref{corollary:zeroth-order-unlearning}). However, we cannot fairly compare the accuracy of our algorithm to that of an offline method using online-to-batch conversion, because such conversions average over all past outputs, including those generated before the deleted point was removed, and therefore do not necessarily satisfy the unlearning guarantee.

To conclude, our unlearning methods offer preferable trade-offs between
computational efficiency and regret guarantees in the online setting.
Several open problems remain including setting lower bounds in this
problem, designing more efficient active~\Gls{olu} algorithms that do
not rely on strong convexity, and whether more unlearning friendly
online learning algorithms can be designed.

\bibliography{arxiv_main}

\appendix
\section{Omitted Proofs for~\Cref{sec:prelim}}

\begin{defn}\label{defn:Lipschitzness-smoothness}
    A function $f: \cX \to \cY$, is $L$-Lipschitz if the following hold for all $x, y\in \cX$, \[\norm{f(x) - f(y)}_2\leq L\norm{x - y}_2. \]
    $f$ is called $\mu$-strongly convex if for all $x, y\in \cX$, \[f(x) \geq f(y) + (\nabla f(y))^\top (x - y) + \frac{\mu}{2}\norm{x - y}_2^2.  \]
    $f$ is called $\beta$-smooth if for all $x, y \in \cX$, 
    \[f(x) \leq f(y) + (\nabla f(y))^\top (x - y) + \frac{\beta}{2}\norm{x - y}_2^2.  \]
\end{defn}

\begin{defn}\label{defn:renyi-divergence}
    For any two random variables $P, Q$ with corresponding distributions $\mu_P, \mu_Q$ respectively, and for any positive value $\alpha > 0, \alpha \neq 1$, the R\'enyi divergence between these two distributions is defined as 
    \[\rdp{\mu_P}{\mu_Q} = \frac{1}{\alpha - 1}\log \int \mu_P(x)^\alpha \mu_Q(x)^{1-\alpha} dx.\]
    For simplicity, we sometimes write $\rdp{P}{Q} = \rdp{\mu_P}{\mu_Q}$. 
\end{defn}

\DPtoOnlineUnlearning*

\begin{proof}
    Let the set of cost functions up to $\dt{i}$, with and without the deleted points indexed by $\cU$, be denoted by $\trainsetsub{i}$ and $\trainsetsub{i}'$ respectively, i.e. \[\trainsetsub{i} = \{f_1, ..., f_{\dt{i}}\}, \quad \trainsetsub{i}' = \{f_1, ..., f_{\dt{i}}\}\setminus \{f_{\di{j}}\}_{j = 1}^i. \]
    
    Then, for any $i\in \{1, \ldots, k\}$, the number of points that $\trainsetsub{i}$ and $\trainsetsub{i}'$ differ at is upper bounded by $k$. As $\cA$ is an online learning algorithm that is $(\alpha, \varepsilon)$-RDP, applying~\Cref{lem:group-privacy} on the dataset $\trainsetsub{i}$ and $\trainsetsub{i}'$, we have 
    \[D_{\frac{\alpha}{k}}(\cA(\trainsetsub{i})\Vert\cA(\trainsetsub{i}'))\leq k^{1.6}\varepsilon.\]
    \begin{lem}[Proposition 2 in~\citet{mironov2017renyi}]
        \label{lem:group-privacy}
        If an algorithm $\cA$ is $(\alpha, \varepsilon)$-RDP and if $\alpha \geq 2k$, then for any two dataset $S, S'$ differing by at most $k$ element, \[D_{\frac{\alpha}{k}}(\cA(S)\Vert \cA(S'))\leq k^{1.6}\varepsilon\]
    \end{lem}
    This concludes the proof. 
\end{proof}

\section{Omitted Proofs for~\Cref{sec:passive}}
\subsection{Unlearning guarantee of passive unlearning}\label{app:passive-unlearning}
We denote Euclidean norm by $\norm{\cdot}$ or $\norm{\cdot}_2$. 

\DPtoOnlineUnlearning*

\begin{proof}
    Let the set of cost functions up to $\dt{i}$, with and without the deleted points in $\cU$, be denoted by $\trainsetsub{i}$ and $\trainsetsub{i}'$ respectively, i.e. \[\trainsetsub{i} = \{f_1, ..., f_{\dt{i}}\}, \quad \trainsetsub{i}' = \{f_1, ..., f_{\dt{i}}\}\setminus \{f_{\di{j}}\}_{j = 1}^i. \]
    
    Then, for any $i\in \{1, \ldots, k\}$, the number of points that $\trainsetsub{i}$ and $\trainsetsub{i}'$ differ at is upper bounded by $k$. As $\cA$ is an online learning algorithm that is $(k\alpha, \varepsilon)$-RDP, applying~\Cref{lem:group-privacy} on the dataset $\trainsetsub{i}$ and $\trainsetsub{i}'$, we have 
    \[\rdp{\cA(\trainsetsub{i})}{\cA(\trainsetsub{i}')}\leq k^{1.6}\varepsilon.\]
    \begin{lem}[Proposition 2 in~\citet{mironov2017renyi}]
        \label{lem:group-privacy}
        If an algorithm $\cA$ is $(\alpha, \varepsilon)$-RDP and if $\alpha \geq 2k$, then for any two dataset $S, S'$ differing by at most $k$ element, \[D_{\frac{\alpha}{k}}(\cA(S)\Vert \cA(S'))\leq k^{1.6}\varepsilon\]
    \end{lem}
    This concludes the proof. 
\end{proof}

\begin{defn}[shifted R\'enyi divergence]
    Let $\mu$, $\nu$ be two distributions. For parameters $e\geq 0$ and $\alpha \geq 1$, the $e$-shifted R\'enyi divergence between $\mu$ and $\nu$ is defined as \[\srdp{e}{\mu}{\nu} = \inf_{\mu':W_{\infty}(\mu, \mu')\leq e}\rdp{\mu'}{\nu}, \] 
    where $W_{\infty}$ represents $\infty$-Wasserstein distance. 
    \end{defn}
    Shifted R\'enyi divergence satisfies monotonicity, i.e. For $0\leq e\leq e'$, $\srdp{e'}{\mu}{\nu}\leq \srdp{e}{\mu}{\nu}. $  
    For a distribution $\zeta$ and a vector $x$, we let $\zeta*x$ denote the distribution of $\eta + x$ where $\eta \sim \zeta$. We define \[R_\alpha(\zeta, \alpha) = \sup_{x:\norm{x}\leq a} \rdp{\zeta * x}{\zeta}.\]
    
\begin{defn}[Contractive Noise Iteration (CNI),~\citet{feldman18iteration}]\label{defn:CNI}
    Given an initial random state $X_0\in \cZ$, a sequence of contractive functions $\psi_t: \cZ\rightarrow \cZ$, and a sequence of noise distribution $\{\zeta_t\}$, we define the Contractive Noisy Iteration (CNI) by the following update rule: 
    \[X_{t+1} = \psi_{t+1}(X_t) + \xi_{t+1}, \]
    where $\xi_{t+1}$ is drawn independently from $\zeta_{t+1}$. We denote the random variable output by this process after $T$ steps as $CNI_T(X_0, \{\psi_t\}, \{\zeta_t\})$. 
    \end{defn}

\PassiveUnlearning*   
    \begin{proof}
        Let $\trainset = \{f_1, \ldots, f_T\}$ be the set of cost functions given to the learner over time, $\unlearnset$ be the set of deleted points with index in $\cU = \{\di{1}, \ldots, \di{k}\}$. Let $\cT = \{\dt{1}, \ldots, \dt{k}\}$ be the set of deletion times. Let $\trainset' = \{f_1', \ldots, f_T'\}$ with $f_t' = f_t$ at $t\notin\cT$ and $f_t' = \perp$ at $t\in \cT$. 
        
        We note that the output of~\Cref{alg:zeroth-order-general} at time $t$ is a CNI with a sequence of update functions $g_1, ..., g_t$, the noise distribution $\zeta_t$ is the Dirac delta distribution at 0 when there is no deletion request, $t\notin\cT$, and $\zeta_t = \cN(0, \sigma_i^2)$ for the $t = \dt{i}$. 
        
        The proof follows an application of~\Cref{lem:PAI}, a more general version of Theorem 22 in~\cite{feldman18iteration}. Compared with Theorem22 in their original paper, \Cref{lem:PAI} leverages the fact that the contractive coefficient \( \gamma \) is sometimes strictly less than 1, allowing us to achieve the same guarantee in terms of Rényi divergence by adding less noise. 
    
        \begin{lem}[Privacy amplification by iteration]\label{lem:PAI}
        Let $X_T$ and $X_T'$ denote the output of $CNI_T(X_0, \{\psi_t\}, \{\zeta_t\})$ and $CNI_T(X_0, \{\psi_t'\}, \{\zeta_t\})$, where $\psi_t, \psi_t'$ have contractive coefficient $\gamma$. Let $s_t = \sup_x \norm{\psi_t(x) - \psi_t'(x)}$. Let $a_1, ..., a_T$ be a sequence of reals and let $e_t = \sum_{i = 1}^t\gamma^{t - i}(s_i - a_i)$ such that $e_t \geq 0$ for all $t$, then \[\srdp{e_T}{X_T}{X_T'}\leq \sum_{t = 1}^TR_\alpha(\zeta_t, a_t). \]
        \end{lem}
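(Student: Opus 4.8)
The plan is to prove Lemma~\ref{lem:PAI} by induction on $T$, following the template of the Privacy Amplification by Iteration argument (Theorem~22 of~\citet{feldman18iteration}) but carrying the contraction coefficient $\gamma$ through every step instead of fixing it to $1$. Write $X_t,X_t'$ for the two CNI states after $t$ rounds, so that $X_t=\psi_t(X_{t-1})+\xi_t$ and $X_t'=\psi_t'(X_{t-1}')+\xi_t$, where $\xi_t\sim\zeta_t$ is drawn independently and the noise schedule $\{\zeta_t\}$ is shared by both chains (so at the level of laws both sides are convolved with the same $\zeta_t$). First I would record the recursion satisfied by the shift schedule, namely $e_t=\gamma e_{t-1}+(s_t-a_t)$, equivalently $e_{t-1}=(e_t+a_t-s_t)/\gamma$; together with the standing hypothesis $e_t\ge0$ and $\gamma\in(0,1]$ this gives $e_t+a_t-s_t=\gamma e_{t-1}\ge0$, which is exactly what keeps every shift budget used below nonnegative.

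The argument then chains three elementary facts about the shifted R\'enyi divergence, each already available in~\citet{feldman18iteration} or immediate from the definitions: \emph{(shift reduction)} $\srdp{e}{P*\zeta}{Q*\zeta}\le\srdp{e+a}{P}{Q}+R_\alpha(\zeta,a)$ for any $a\ge0$, which lets one trade $a$ units of $W_\infty$-shift through the common noise $\zeta$ for a $R_\alpha(\zeta,a)$ penalty; \emph{(moving one argument)} if the laws of $Y$ and $Y'$ are within $s$ in $W_\infty$ and $e\ge s$, then $\srdp{e}{Y}{Z}\le\srdp{e-s}{Y'}{Z}$, by the triangle inequality for $W_\infty$; and \emph{(contraction)} for a $\gamma$-Lipschitz map $\psi$, $\srdp{\gamma e}{\psi(Y)}{\psi(Z)}\le\srdp{e}{Y}{Z}$, because $\psi$ shrinks $W_\infty$-distances by $\gamma$ and R\'enyi divergence is nonincreasing under push-forward by $\psi$.

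For the inductive step I would start from $\srdp{e_T}{X_T}{X_T'}$, peel off the last noise $\zeta_T$ via shift reduction at cost $R_\alpha(\zeta_T,a_T)$, leaving shift budget $e_T+a_T$ on the pair $\bigl(\psi_T(X_{T-1}),\psi_T'(X_{T-1}')\bigr)$; then couple $X_{T-1}$ synchronously so that $\psi_T(X_{T-1})$ and $\psi_T'(X_{T-1})$ are within $s_T$ in $W_\infty$ by the definition of $s_T$, and apply ``moving one argument'' (licit because $e_T+a_T-s_T=\gamma e_{T-1}\ge0$) to pass to $\bigl(\psi_T'(X_{T-1}),\psi_T'(X_{T-1}')\bigr)$ with budget $e_T+a_T-s_T$; finally apply the contraction fact to the $\gamma$-Lipschitz map $\psi_T'$, which divides the budget by $\gamma$ and lands exactly on $\srdp{e_{T-1}}{X_{T-1}}{X_{T-1}'}$ by the recursion above. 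The induction hypothesis bounds the latter by $\sum_{t=1}^{T-1}R_\alpha(\zeta_t,a_t)$, and adding back $R_\alpha(\zeta_T,a_T)$ closes the induction; the base case $T=0$ is immediate since $X_0=X_0'$ forces $\srdp{0}{X_0}{X_0'}=0$.

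There is no hard computation here — the entire content is bookkeeping — so the step to be careful about is the contraction step: with a genuine contraction ($\gamma<1$) one must \emph{divide} the remaining shift budget by $\gamma$ rather than leaving it untouched, and it is precisely this division that produces the closed form $e_t=\sum_{i\le t}\gamma^{t-i}(s_i-a_i)$ and ultimately lets the passive OLU scale its injected noise by $\gamma^{\dt{i}-\di{i}}$ rather than by a constant. The only side condition to check along the way is nonnegativity of each intermediate shift budget, which is handled uniformly by $e_t+a_t-s_t=\gamma e_{t-1}\ge0$; at non-deletion rounds everything degenerates harmlessly, since there $s_t=0$ and $\zeta_t=\delta_0$ forces the choice $a_t=0$, so the step reduces to the pure contraction $\srdp{\gamma e_{t-1}}{X_t}{X_t'}\le\srdp{e_{t-1}}{X_{t-1}}{X_{t-1}'}$.
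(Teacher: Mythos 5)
Your proposal is correct and follows essentially the same route as the paper: induction on $t$, peeling off the common noise via the shift-reduction lemma at cost $R_\alpha(\zeta_t,a_t)$, and then absorbing the perturbation $s_t$ and the contraction factor $\gamma$ using the recursion $e_t+a_t-s_t=\gamma e_{t-1}\ge 0$. The only cosmetic difference is that you split the paper's single coupling lemma (\Cref{lem:contractiveness-privacy-amplification}, which gives $\srdp{\gamma e+s}{\psi(X)}{\psi'(X')}\le\srdp{e}{X}{X'}$ in one step) into a $W_\infty$ triangle-inequality step followed by a pure contraction step, which is an equivalent bookkeeping of the same argument.
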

    
        Let $\psi_t(z) = g_t(f_t, z)$ and $\psi_t'(z) = g_t(f_t', z)$ represents the update function with the $t$th cost functions from $\trainset$ and $\trainset'$ respectively.  Then, \[s_t = \sup_{z\in \cK}\norm{g(f_t, z) - g_t(f_t', z)} = \sup_{z\in \cK}\norm{\psi_t(z) - \psi_t'(z)} .\]  As all update functions $g_t$ are $\Delta_t$-bounded~(Condition \ref{condition:sensitivity}), we can compute the value of $s_t$ for all $t\in \{1, \ldots, T\}$, 
        
        \[s_t = \begin{cases} 
          \Delta_t & t \in \{\di{1}, ..., \di{k}\} \\
          0 & \text{otherwise} 
       \end{cases}\]

        Next, we select the sequence $a_1, ..., a_T$ such that $R_\alpha(\zeta_t, a_t)$ are bounded and $e_t = \sum_{i = 1}^t \gamma^{t-i}(s_i - a_i) \geq 0$ holds for all $t$. By definition of our algorithm~(\Cref{alg:zeroth-order-general}), the noise is only added at steps $t\in \cT$. Therefore, we need to set $a_t = 0$ for all $t\notin \cT$ to avoid unbounded $R_\alpha(\zeta_t, a_t)$ when $\zeta_t$ is a Dirac delta distribution at 0. Additionally, for $i\in \{1, \ldots, k\}$, we set $a_{\dt{i}} = \gamma^{\dt{i} - \di{i}} \Delta_{\di{i}}$, i.e. 
        \begin{equation}
            \label{eq:ais-passive}
            a_t = \begin{cases} 
          \gamma^{\dt{i} - \di{i}} \Delta_{\di{i}} & \text{if } t = \dt{i}, i \in \{1, \ldots, k\} \\
          0 & \text{otherwise} 
       \end{cases}.
        \end{equation}
        This ensures $e_{\dt{i}} = 0$ for all $\dt{i} \in \cT$ and $e_t \geq 0$ for all $t$. 

        For~\Cref{alg:zeroth-order-general} to satisfy the unlearning guarantee, it suffices to ensure the following indistinguishability condition holds at time \(\dt{i}\). Specifically, for each \(i \in \{1, \ldots, k\}\), we require that the R\'enyi divergence between the outputs of the two CNIs at step \(\dt{i}\) are bounded by \(\varepsilon\) according to~\Cref{defn:online-learner-unlearner}, i.e.,
        \[
        \rdp{X_{\dt{i}}}{X_{\dt{i}}'} \leq \alpha \varepsilon.
        \]
        To prove this, we apply~\Cref{lem:PAI} with the sequence $a_t$ selected above in~\Cref{eq:ais-passive}: for all $\dt{i}\in \cT$, where $i\leq k$, 
        \begin{equation}\label{eq:passive-unlearning-single-output}
        \begin{aligned}
            \rdp{X_{\dt{i}}}{X_{\dt{i}}'}&\leq \sum_{j = 1}^i R_\alpha(\zeta_{\dt{j}}, a_{\dt{j}})\\
            & \overset{(a)}{=} \sum_{j = 1}^i\frac{\alpha \br{a_{\dt{j}}^2}}{2\sigma_j^2}\overset{(b)}{=} \sum_{j = 1}^i \frac{\alpha \varepsilon}{j^\omega}\frac{ \omega-1}{\omega}\overset{(c)}{\leq} \alpha \varepsilon
            \end{aligned}
            \end{equation}
            where step (a) follows from~\Cref{lem:gaussian-distributions-renyi-divergence}, step (b) follows by the definition of $\sigma_j^2 = \frac{ j^\omega\omega(\gamma^{\dt{j} - \di{j}}\Delta_{\di{j}})^2}{2(\omega - 1)\varepsilon}$ in our algorithm, and step (c) follows \begin{equation*}
            \sum_{j = 1}^i \frac{ 1}{j^\omega} = 1 + \sum_{j = 2}^i\frac{1}{j^\omega}\leq 1 + \int_{1}^{\infty}\frac{1}{x^\omega}dx = 1 + \frac{1}{\omega -1} = \frac{\omega}{\omega -1}.
            \end{equation*}        
            
            \begin{lemL}[Corrolary 3 in \citet{mironov2017renyi}]\label{lem:gaussian-distributions-renyi-divergence}
                For any two Gaussian distributions of dimension $d$ with the same variance $\sigma^2I_d$ but different means $\mu_0, \mu_1$, denoted by $\cN\br{\mu_0, \sigma^2I_d}$ and $\cN(\mu_1, \sigma^2I_d)$, the following holds, 
                \[D_\alpha\br{\cN(\mu_0, \sigma^2I_d)||\cN(\mu_1, \sigma^2I_d)}\leq \frac{\alpha\norm{\mu_0 - \mu_1}^2}{2\sigma^2}.\]
            \end{lemL}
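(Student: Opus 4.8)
The statement is the classical closed form for the R\'enyi divergence between two Gaussians sharing a covariance $\sigma^2 I_d$, and my plan is a direct computation from the definition (\Cref{defn:renyi-divergence}) together with a reduction to one dimension. In fact the argument produces the stronger equality $D_\alpha(\mathcal{N}(\mu_0,\sigma^2 I_d)\Vert\mathcal{N}(\mu_1,\sigma^2 I_d))=\alpha\|\mu_0-\mu_1\|^2/(2\sigma^2)$, from which the claimed upper bound is immediate.

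First I would reduce to $d=1$. Because both covariances equal $\sigma^2 I_d$, the integral $\int \mu_P^\alpha\mu_Q^{1-\alpha}$ is unchanged under a common translation of $(\mu_0,\mu_1)$ and under a common orthogonal transformation; choosing an orthonormal basis whose first axis is parallel to $\mu_0-\mu_1$ writes each measure as a product of one-dimensional Gaussians. Along axes $2,\dots,d$ the two marginals coincide and contribute a factor $1$ to the integral (i.e.\ $0$ to the divergence), while along the first axis we are left with $D_\alpha(\mathcal{N}(0,\sigma^2)\Vert\mathcal{N}(m,\sigma^2))$ with $m=\|\mu_0-\mu_1\|$. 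Since the defining integral factorises over independent coordinates, the $d$-dimensional quantity equals this one-dimensional one.

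Then I would evaluate the one-dimensional integral. Substituting the Gaussian densities, the integrand $\mu_P(x)^\alpha\mu_Q(x)^{1-\alpha}$ equals $\tfrac1{\sqrt{2\pi}\sigma}\exp\!\bigl(-\tfrac1{2\sigma^2}(\alpha x^2+(1-\alpha)(x-m)^2)\bigr)$. Completing the square gives $\alpha x^2+(1-\alpha)(x-m)^2=(x-(1-\alpha)m)^2+\alpha(1-\alpha)m^2$, so the integral is $e^{-\alpha(1-\alpha)m^2/(2\sigma^2)}$ times the integral of a normalised Gaussian, hence exactly $e^{-\alpha(1-\alpha)m^2/(2\sigma^2)}$. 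Applying $\tfrac1{\alpha-1}\log(\cdot)$ and using $-\alpha(1-\alpha)/(\alpha-1)=\alpha$ yields $D_\alpha=\alpha m^2/(2\sigma^2)$, as wanted. Alternatively one may simply invoke Corollary~3 of \citet{mironov2017renyi}.

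There is no genuine obstacle here. The only points needing a little care are (i) integrability of $\mu_P^\alpha\mu_Q^{1-\alpha}$, which holds even for $\alpha>1$ since the coefficient of $x^2$ in $\alpha x^2+(1-\alpha)(x-m)^2$ is $\alpha+(1-\alpha)=1>0$, so the integrand decays like a Gaussian; and (ii) the sign bookkeeping when dividing by $\alpha-1$. The translation/rotation-invariance reduction to $d=1$ is the only step that is not pure calculation, and it is entirely routine.
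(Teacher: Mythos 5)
Your computation is correct, and it actually establishes the stronger statement that the divergence equals $\frac{\alpha\norm{\mu_0-\mu_1}^2}{2\sigma^2}$ rather than merely being bounded by it. The paper itself gives no proof of this lemma: it is imported verbatim as Corollary~3 of \citet{mironov2017renyi}, so there is no in-paper argument to compare against. Your self-contained derivation is the standard one and checks out in every step: the reduction to $d=1$ via translation/rotation invariance and factorisation of the integral over independent coordinates is valid; the completion of the square $\alpha x^2+(1-\alpha)(x-m)^2=(x-(1-\alpha)m)^2+\alpha(1-\alpha)m^2$ is correct; the remaining Gaussian integrates to $1$ because the quadratic coefficient is $\alpha+(1-\alpha)=1>0$ for every $\alpha$; and the sign bookkeeping $-\alpha(1-\alpha)/(\alpha-1)=\alpha$ gives the claimed value. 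What your approach buys over the citation is transparency (one sees exactly why the bound is tight and where positivity of the quadratic coefficient is used), at the cost of a page of routine calculation that the authors reasonably chose to outsource.
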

        Then the same guarantee in~\Cref{eq:passive-unlearning-single-output} extends to the output sequence between $t\in (\dt{i}, \dt{i+1})$ by post-processing property of R\'enyi divergence~(\Cref{lem:rdp-postprocessing}). Specifically, we consider a $(\dt{i+1}-1 - \dt{i})$-dimensional post-processing function $(I, \psi_{\dt{i}+1}, \psi_{\dt{i}+2}\circ \psi_{\dt{i}+1}, \ldots, \psi_{\dt{i+1}-1}\circ\ldots\psi_{\dt{i}+1})$. Applying~\Cref{lem:rdp-postprocessing}, we have $\rdp{X_{\dt{i}:\dt{i+1}-1}}{X_{\dt{i}:\dt{i+1}-1}'} \leq \alpha \varepsilon$ as desired. 
        \begin{lemL}[\citet{mironov2017renyi}]\label{lem:rdp-postprocessing}
            For any R\'enyi parameter $\alpha \geq 1$, any (possibly random) function $h$, and any two random variable $P, Q$ with corresponding distributions $\mu_P, \mu_Q$, 
            \[\rdp{h(P)}{h(Q)}\leq \rdp{P}{Q}.\]
        \end{lemL}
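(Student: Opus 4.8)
The plan is to prove the data-processing inequality for R\'enyi divergence by working with the \emph{unlogged} Hellinger-type integral $H_\alpha(P\Vert Q) = \int \mu_P(x)^\alpha \mu_Q(x)^{1-\alpha}\,dx$, so that $\rdp{P}{Q} = \tfrac{1}{\alpha-1}\log H_\alpha(P\Vert Q)$. Since for $\alpha>1$ the map $u\mapsto \tfrac{1}{\alpha-1}\log u$ is monotone increasing, it suffices to prove the single-sided inequality $H_\alpha(h(P)\Vert h(Q)) \le H_\alpha(P\Vert Q)$ and then apply this monotone transform; the $\alpha=1$ case is the classical KL data-processing inequality and follows by taking $\alpha\downarrow 1$. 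To treat an arbitrary (possibly randomized) $h$ uniformly, I would represent it by a Markov kernel $K(y\mid x)$, so that the output densities are $p(y)=\int K(y\mid x)\mu_P(x)\,dx$ and $q(y)=\int K(y\mid x)\mu_Q(x)\,dx$, with $\int K(y\mid x)\,dy = 1$ for every $x$.

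The analytic heart of the argument is the function $f(s,t)=s^\alpha t^{1-\alpha}$ on the positive quadrant. For $\alpha>1$ this $f$ is jointly convex (its Hessian is positive semidefinite, with determinant identically zero) and positively homogeneous of degree one, hence sublinear, and can therefore be written as a support function $f(v)=\sup_{w\in C}\langle w,v\rangle$ for a suitable convex set $C$. This representation yields the integral subadditivity $f\!\left(\int g(x)\,dx\right)\le \int f(g(x))\,dx$ for any nonnegative vector field $g$. Applying it, for each fixed $y$, to $g_y(x)=\bigl(K(y\mid x)\mu_P(x),\,K(y\mid x)\mu_Q(x)\bigr)$ and using the degree-one homogeneity to pull the common factor $K(y\mid x)$ out of $f$ gives
\[
p(y)^\alpha q(y)^{1-\alpha} \;\le\; \int K(y\mid x)\,\mu_P(x)^\alpha \mu_Q(x)^{1-\alpha}\,dx .
\]
Integrating over $y$ and exchanging the order of integration, the kernel integrates to one in $y$, leaving $H_\alpha(h(P)\Vert h(Q)) = \int p^\alpha q^{1-\alpha}\,dy \le \int \mu_P^\alpha \mu_Q^{1-\alpha}\,dx = H_\alpha(P\Vert Q)$. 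Equivalently, one may phrase this as a two-step chain: forming the joint laws $\mu_P(x)K(y\mid x)$ and $\mu_Q(x)K(y\mid x)$ leaves $H_\alpha$ unchanged (the shared kernel cancels after integrating in $y$), while marginalizing out $x$ can only decrease it, which is precisely the inequality above.

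Finally I would convert back: since $\tfrac{1}{\alpha-1}>0$ for $\alpha>1$ and $\log$ is increasing, the inequality $H_\alpha(h(P)\Vert h(Q))\le H_\alpha(P\Vert Q)$ transfers directly to $\rdp{h(P)}{h(Q)}\le\rdp{P}{Q}$, as claimed. I expect the main obstacle to be establishing the joint convexity and sublinearity of $(s,t)\mapsto s^\alpha t^{1-\alpha}$ together with its support-function representation, since that is exactly what powers the single Jensen-type step; a secondary point needing care is the sign bookkeeping from the prefactor $\tfrac{1}{\alpha-1}$ (for $\alpha\in(0,1)$ the homogeneous function becomes concave and the $H_\alpha$-inequality flips, but so does the sign of the prefactor, so the divergence inequality is unchanged---though the statement here only requires $\alpha\ge 1$).
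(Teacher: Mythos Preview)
The paper does not actually prove this lemma; it is stated as a quotation of \citet{mironov2017renyi} and used as a black box. So there is no in-paper argument to compare against.

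Your argument is correct and is essentially the standard route to the data-processing inequality for R\'enyi divergence: pass to the Hellinger integral $H_\alpha$, represent the (possibly randomized) map $h$ by a Markov kernel, and use that $(s,t)\mapsto s^\alpha t^{1-\alpha}$ is jointly convex and positively $1$-homogeneous for $\alpha>1$ to obtain the integral Jensen/sublinearity step
\[
p(y)^\alpha q(y)^{1-\alpha}\le \int K(y\mid x)\,\mu_P(x)^\alpha\mu_Q(x)^{1-\alpha}\,dx,
\]
after which integrating in $y$ and using $\int K(y\mid x)\,dy=1$ finishes it. Your support-function justification for the sublinearity is clean and sidesteps any finiteness issue with the underlying measure. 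The sign bookkeeping you flag for $\alpha\in(0,1)$ is also right, though not needed here. The only cosmetic point is that the $\alpha=1$ case can be handled directly (log-sum inequality / Jensen for KL) rather than by a limit, but the limiting argument is fine too.
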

        
    \end{proof}

    \begin{proof}[Proof of~\Cref{lem:PAI}]
        The proof is by induction and similar to the original proof of Theorem 22 in~\cite{feldman18iteration}. 
    
        Let $X_t$ and $X_t'$ denote the t'th iteration of $CNI(X_0, \{\psi_t\}, \{\zeta_t\})$ and $CNI(X_0, \{\psi_t'\}, \{\zeta_t\})$ respectively. For each $t\leq T$, our goal is to show the following equation holds, \[\srdp{e_t}{X_t}{X_t'}\leq \sum_{i = 1}^tR_\alpha(\zeta_i, a_i). \]
        
        The base case follows by the definition that $e_0 = 0$ and $X_0 = X_0'$. For the induction step, let $\xi_{t+1}$ denote the random variable drawn from $\zeta_{t+1}$, 
        \begin{equation}
            \begin{aligned}
                \srdp{e_{t+1}}{X_{t+1}}{X_{t+1}'} &= \srdp{e_{t+1}}{\psi_{t+1}(X_t) + \xi_{t+1}}{\psi_{t+1}'(X_t') + \xi_{t+1}}\\
                &\overset{(a)}{\leq} \srdp{e_{t+1} + a_{t+1}}{\psi_{t+1}(X_t)}{\psi_{t+1}(X_t')} + R_\alpha(\zeta_{t+1}, a_{t+1})\\
                &\overset{(b)}{\leq} \srdp{\gamma e_t + s_{t+1}}{\psi_{t+1}(X_t)}{\psi_{t+1}(X_t')} + R_\alpha(\zeta_{t+1}, a_{t+1}) \\
                &\overset{(c)}{\leq} \srdp{e_t}{X_t}{X_t'} + R_\alpha(\zeta_{t+1}, a_{t+1}) \overset{(d)}{\leq} \sum_{i = 1}^{t+1} R_\alpha(\zeta_i, a_i),
            \end{aligned}
        \end{equation}
        where step (a) is due to~\Cref{lem:shift-reduction}, step (b) is due to definition of $e_t = \sum_{i = 1}^t\gamma^{t-i}\br{s_i - a_i}$, i.e. $e_t = \frac{e_{t+1} + a_{t+1}-s_{t+1}}{\gamma}$, step (c) follows from \Cref{lem:contractiveness-privacy-amplification}, a modification of privacy amplification of contraction (Lemma 21 in~\cite{feldman18iteration}), and step (d) follows by the induction hypothesis.

        \begin{lemL}[Shift-reduction lemma \citep{feldman18iteration}]\label{lem:shift-reduction}
        Let $\mu*\zeta$ denote the distribution of $X + Y$ where $X\sim \mu, Y\sim \zeta$.  Let $\mu, \nu$ and $\zeta$ be distributions. Then, for any $\alpha\geq 0$, 
        \[\srdp{e}{\mu*\zeta}{\nu*\zeta}\leq \srdp{e + a}{\mu}{\nu} + R_\alpha(\zeta, a), \]
        where $R_\alpha(\zeta, a) = \sup_{x:\norm{x}\leq a}D_\alpha(\zeta*x \Vert \zeta)$. 
        \end{lemL}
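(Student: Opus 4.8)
The plan is to reconstruct the argument of~\citet{feldman18iteration}. First I would unfold the definition of the shifted R\'enyi divergence and reduce to a pointwise claim: it suffices to show that for every $\mu'$ with $W_\infty(\mu,\mu')\le e+a$ there is a distribution $\tau$ with $W_\infty(\mu*\zeta,\tau)\le e$ and $\rdp{\tau}{\nu*\zeta}\le \rdp{\mu'}{\nu}+R_\alpha(\zeta,a)$; taking the infimum over such $\tau$ on the left and then over $\mu'$ on the right (passing to a minimising sequence if the infimum is not attained) yields $\srdp{e}{\mu*\zeta}{\nu*\zeta}\le \srdp{e+a}{\mu}{\nu}+R_\alpha(\zeta,a)$.

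To build $\tau$ I would split the total shift budget $e+a$ into an ``$e$-part'' absorbed by the $W_\infty$ slack and an ``$a$-part'' absorbed by the noise $\zeta$. Since $W_\infty(\mu,\mu')\le e+a$, fix a coupling $\br{X,X'}$ with $X\sim\mu$, $X'\sim\mu'$ and $\norm{X-X'}\le e+a$ almost surely, and set $W\coloneqq X-\operatorname{clip}_e\br{X-X'}$, where $\operatorname{clip}_e(v)=v$ if $\norm{v}\le e$ and $\operatorname{clip}_e(v)=e\,v/\norm{v}$ otherwise. Then $\norm{X-W}\le e$ and $\norm{W-X'}\le a$ hold deterministically. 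Let $\tau$ be the law of $W+Y$ with $Y\sim\zeta$ drawn independently of $\br{X,X'}$; coupling $\mu*\zeta$ as the law of $X+Y$ through the same triple $\br{X,X',Y}$ gives $W_\infty(\mu*\zeta,\tau)\le e$ immediately.

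The substantive step is bounding $\rdp{\tau}{\nu*\zeta}$. I would lift to a product space: let $\mathbb{P}$ be the joint law of $\br{X',\,W+Y}$ and $\mathbb{Q}$ the joint law of $\br{Z,\,Z+Y'}$, with $Z\sim\nu$ and $Y'\sim\zeta$ independent. Projecting onto the second coordinate is post-processing, so by the data-processing property of R\'enyi divergence (\Cref{lem:rdp-postprocessing}) $\rdp{\tau}{\nu*\zeta}\le\rdp{\mathbb{P}}{\mathbb{Q}}$. Writing $\mathbb{P}(u,v)=\mu'(u)\,\tilde g_u(v-u)$ and $\mathbb{Q}(u,v)=\nu(u)\,\zeta(v-u)$, where $\tilde g_u$ is the conditional density of $\br{W-X'}+Y$ given $X'=u$ --- a mixture of translates of $\zeta$ by vectors of norm at most $a$ --- the R\'enyi integral factorises:
\[
\int\!\!\int \frac{\mathbb{P}(u,v)^\alpha}{\mathbb{Q}(u,v)^{\alpha-1}}\,dv\,du \;=\; \int \frac{\mu'(u)^\alpha}{\nu(u)^{\alpha-1}}\,\br{\int \frac{\tilde g_u(y)^\alpha}{\zeta(y)^{\alpha-1}}\,dy}\,du .
\]
Jensen's inequality (convexity of $t\mapsto t^\alpha$ for $\alpha\ge 1$) applied to the mixing measure defining $\tilde g_u$ bounds the inner integral by $\sup_{\norm{\delta}\le a}\int \zeta(y-\delta)^\alpha\zeta(y)^{1-\alpha}\,dy=\exp\br{(\alpha-1)R_\alpha(\zeta,a)}$, uniformly in $u$; pulling this factor out leaves $\exp\br{(\alpha-1)\rdp{\mu'}{\nu}}$, hence $\rdp{\mathbb{P}}{\mathbb{Q}}\le\rdp{\mu'}{\nu}+R_\alpha(\zeta,a)$, which closes the argument.

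I expect the main obstacle to be exactly this last bound. One cannot simply chain the two facts ``$\mu'$ is $(e+a)$-shift-close to $\nu$'' and ``an extra shift of norm $\le a$ is cheap against $\zeta$'', because R\'enyi divergence admits no order-preserving triangle inequality --- any such chaining would inflate the order $\alpha$. The role of the joint coupling $\br{X',W+Y}$ versus $\br{Z,Z+Y'}$, together with the Jensen bound on the conditional density $\tilde g_u$, is precisely to fuse both effects into a single data-processing step with no loss in $\alpha$.
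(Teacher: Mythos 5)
Your proof is correct, and it faithfully reconstructs the argument of \citet{feldman18iteration} (their Lemma 20): the paper itself states this lemma as an imported result and gives no proof of its own, so there is nothing in the paper to diverge from. The only caveat worth flagging is that your Jensen step needs $\alpha\ge 1$ (convexity of $t\mapsto t^\alpha$), which is consistent with the paper's definition of the shifted R\'enyi divergence; the ``$\alpha\ge 0$'' in the lemma statement appears to be a typo for ``$a\ge 0$''.
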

        \begin{lem}\label{lem:contractiveness-privacy-amplification}
                Suppose $\psi, \psi'$ are contractive maps with coefficient $\gamma$ and $\sup_x\norm{\psi(x) - \psi'(x)}\leq s$. Then, for r.v. $X$ and $X'$, \[\srdp{\gamma e + s}{\psi(X)}{\psi'(X)}\leq \srdp{e}{X}{X'}. \]
        \end{lem}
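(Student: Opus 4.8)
The plan is to generalise the contraction lemma (Lemma~21) of~\citet{feldman18iteration} in two directions simultaneously: from a $1$-Lipschitz map to a $\gamma$-contraction, which is what lets the admissible shift shrink from $e$ to $\gamma e$; and from a single map to two maps $\psi,\psi'$ that are uniformly within distance $s$, which inflates the shift additively by $s$. I read the conclusion with $\psi'(X')$ as the second argument, i.e.\ $\srdp{\gamma e+s}{\psi(X)}{\psi'(X')}\le\srdp{e}{X}{X'}$; this is the form actually used in step~(c) of the proof of~\Cref{lem:PAI}, and taking $X'=X$ in the displayed version would force the left-hand side to be $\le 0$, which is false once $s>0$.

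First I would unwind the right-hand side. Write $\mu=\operatorname{law}(X)$ and $\nu=\operatorname{law}(X')$. Since $\cK$ is bounded, the infimum defining $\srdp{e}{X}{X'}$ is attained by some $\tilde\mu$ with $W_\infty(\mu,\tilde\mu)\le e$ and $\rdp{\tilde\mu}{\nu}=\srdp{e}{X}{X'}$, and the $W_\infty$-optimal coupling is attained too, giving a joint law of $(X,Z)$ with $X\sim\mu$, $Z\sim\tilde\mu$, and $\norm{X-Z}\le e$ almost surely. The intermediate distribution I would feed into the left-hand side is $\operatorname{law}(\psi'(Z))$.

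Next I would verify two properties of $\psi'(Z)$. For the shift, almost surely
\[
\norm{\psi(X)-\psi'(Z)}\le\norm{\psi(X)-\psi'(X)}+\norm{\psi'(X)-\psi'(Z)}\le s+\gamma\norm{X-Z}\le s+\gamma e,
\]
using $\sup_x\norm{\psi(x)-\psi'(x)}\le s$ and the $\gamma$-contractivity of $\psi'$; hence $W_\infty\bigl(\operatorname{law}(\psi(X)),\operatorname{law}(\psi'(Z))\bigr)\le\gamma e+s$, so $\operatorname{law}(\psi'(Z))$ is feasible in the infimum defining $\srdp{\gamma e+s}{\psi(X)}{\psi'(X')}$. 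For the divergence, $\psi'(Z)$ and $\psi'(X')$ are the pushforwards of $\tilde\mu$ and $\nu$ under the common map $\psi'$, so the post-processing inequality for R\'enyi divergence (\Cref{lem:rdp-postprocessing}) gives $\rdp{\operatorname{law}(\psi'(Z))}{\operatorname{law}(\psi'(X'))}\le\rdp{\tilde\mu}{\nu}$. Combining,
\[
\srdp{\gamma e+s}{\psi(X)}{\psi'(X')}\le\rdp{\operatorname{law}(\psi'(Z))}{\operatorname{law}(\psi'(X'))}\le\rdp{\tilde\mu}{\nu}=\srdp{e}{X}{X'}.
\]

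The step I expect to be the main obstacle is the measure-theoretic bookkeeping around the coupling: one must ensure the $W_\infty$-infimum over couplings of $\mu$ and $\tilde\mu$ is attained (clean here because $\cK$ is bounded, but it is precisely what lets the pointwise estimate on $\norm{\psi(X)-\psi'(Z)}$ be promoted to a genuine $W_\infty$ bound between laws rather than just a bound along one particular coupling), and one should double-check measurability of $\psi'$ so the post-processing inequality applies---both automatic for a contraction on a bounded set. Everything else is the triangle inequality, $\gamma$-contractivity, and data processing, mirroring Feldman et al.'s original argument with $\gamma$ threaded through the contraction step and $s$ added to absorb the gap between $\psi$ and $\psi'$.
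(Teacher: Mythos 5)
Your proposal is correct and follows essentially the same route as the paper's proof: both pick a distribution attaining the shifted divergence together with an a.s.\ coupling at distance $e$, push it through $\psi'$, apply post-processing for the divergence, and use the triangle inequality plus $\gamma$-contractivity for the $W_\infty$ bound (the paper bounds $\norm{\psi(X)-\psi(Y)}+\norm{\psi(Y)-\psi'(Y)}$ while you bound $\norm{\psi(X)-\psi'(X)}+\norm{\psi'(X)-\psi'(Z)}$, an immaterial difference). Your reading of the second argument as $\psi'(X')$ is also the intended one; the paper's own proof and its use in step~(c) of \Cref{lem:PAI} confirm the displayed statement contains a typo.
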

    
    \end{proof}
    \begin{proof}[Proof of~\Cref{lem:contractiveness-privacy-amplification}]
        The proof follows from that of Lemma 21 in~\cite{feldman18iteration}, by removing the step where the contractive coefficient $\gamma$ is upper bounded by $1$, and writing shift coefficient explicitly in terms of the contractive coefficient $\gamma$. We repeat their proof here for completeness. 
    
        By definition of $\srdp{e}{\cdot}{\cdot}$, there exists a joint distribution $(X, Y)$ such that \[\rdp{Y}{X'} = \srdp{e}{X}{X'}\text{ and }\bP\br{\norm{X - Y}\leq e} = 1.\] By the post processing properties of R\'enyi divergence (\Cref{lem:rdp-postprocessing}), \[\rdp{\psi'(Y)}{\psi'(X')}\leq \rdp{Y}{X'} = \srdp{e}{X}{X'}.\] Moreover, 
        \begin{equation}
            \begin{aligned}
                \norm{\psi(X) - \psi'(Y)}&\leq \norm{\psi(X) - \psi(Y)} + \norm{\psi(Y) - \psi'(Y)} \\
                &\overset{(a)}{\leq} \gamma \norm{X - Y} + s \leq \gamma e + s. 
            \end{aligned}
        \end{equation}
        where step (a) follows by the definition of contractive maps. 
    
        This shows that $(\psi(X), \psi'(Y))$ is a coupling establishing the claimed upper bound on \newline$\srdp{\gamma e + s}{\psi(X) }{\psi'(Y)}$, and concludes the proof. 
    \end{proof}
    
  
\begin{proof}[Proof of~\Cref{corollary:zeroth-order-unlearning}]
    Our result follows from the contractive properties of projected gradient descent. For standard gradient descent without projection, their contractive parameters are provided in~\Cref{lem:gd_contraction}. 
    
    \begin{lemL}[Proposition 18 \citep{feldman18iteration}; Lemma 2.2 \citep{altschuler2022resolvingmixingtimelangevin}]\label{lem:gd_contraction}
    Gradient descent step is contractive if the loss function is smooth or strongly convex. 
        \begin{itemize}
            \item Suppose a function $\ell: \bR^d\to \bR$ is convex, twice differentiable, and $M$-smooth. Then the function $\psi$ defined as $\psi(w) = w - \eta\nabla \ell(w)$ is contractive with parameter $(1-\eta M)$ for $\eta \leq 2/M$. 
            \[\norm{\psi(w) - \psi(w')} \leq \beta\norm{w - w'}\]
            \item Suppose $\ell$ is an $m$-strongly convex and $M$-smooth function for $0< m \leq M <\infty$. For step size $\eta = \frac{2}{M + m}$, then $\psi$ is contractive with parameter $\frac{M/m - 1}{M/m + 1}$.
        \end{itemize}
    \end{lemL}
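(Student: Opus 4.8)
The plan is to prove Lemma~\ref{lem:gd_contraction} through a Jacobian/operator-norm argument that exploits the twice-differentiability hypothesis, and then to record the two further ingredients needed to close the in-progress proof of Corollary~\ref{corollary:zeroth-order-unlearning}. Write $\psi(w) = w - \eta\nabla\ell(w)$. Since $\ell$ is twice differentiable, $\psi$ is differentiable with symmetric Jacobian $J\psi(v) = I - \eta\nabla^2\ell(v)$. For any $w, w' \in \bR^d$, parametrising the segment $w_\theta = w' + \theta(w - w')$ and integrating $\tfrac{d}{d\theta}\psi(w_\theta) = J\psi(w_\theta)(w - w')$ over $\theta \in [0,1]$ gives the mean-value bound $\norm{\psi(w) - \psi(w')} \le \bigl(\sup_v \norm{J\psi(v)}_{\mathrm{op}}\bigr)\norm{w - w'}$, so it suffices to control the extreme eigenvalues of the symmetric matrix $I - \eta\nabla^2\ell(v)$ uniformly in $v$.

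In the first case (convex, $M$-smooth), convexity plus twice-differentiability gives $0 \preceq \nabla^2\ell(v)$ and $M$-smoothness gives $\nabla^2\ell(v) \preceq MI$, so every eigenvalue of $I - \eta\nabla^2\ell(v)$ lies in $[1 - \eta M,\, 1]$; for $0 < \eta \le 2/M$ this is contained in $[-1,1]$, hence $\norm{J\psi(v)}_{\mathrm{op}} \le 1$ and $\psi$ is non-expansive, i.e.\ $\gamma = 1$ as used in Corollary~\ref{corollary:zeroth-order-unlearning}. (The sharper factor $|1 - \eta M|$ only holds for $\eta \le 1/M$; I would flag this as the one statement-level imprecision to tidy.) In the second case ($m$-strongly convex, $M$-smooth) we instead have $mI \preceq \nabla^2\ell(v) \preceq MI$, so the eigenvalues of $I - \eta\nabla^2\ell(v)$ lie in $[1 - \eta M,\, 1 - \eta m]$; choosing $\eta = 2/(M+m)$ balances the endpoints to $1 - \eta M = -\tfrac{M-m}{M+m}$ and $1 - \eta m = \tfrac{M-m}{M+m}$, whence $\norm{J\psi(v)}_{\mathrm{op}} \le \tfrac{M-m}{M+m} = \tfrac{M/m - 1}{M/m + 1}$, the claimed contraction coefficient.

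To finish the surrounding proof of Corollary~\ref{corollary:zeroth-order-unlearning}, I would note that the OGD update is $g_t(f_t, z) = \Pi_\cK[z - \eta\nabla f_t(z)] = \Pi_\cK \circ \psi$ with $\ell = f_t$; since Euclidean projection onto the convex set $\cK$ is $1$-Lipschitz, composition preserves the contraction coefficient, so Condition~\ref{condition:contractive} holds with $\gamma = 1$ (convex, $\eta \le 2/\beta$) or $\gamma = \tfrac{\beta/\mu - 1}{\beta/\mu + 1}$ (strongly convex, optimal step size), Condition~\ref{condition:markov} holds because $g_t$ depends only on $(f_t, z_{t-1})$, and Condition~\ref{condition:sensitivity} holds with finite $\Delta_t$ since $\norm{g_t(f, x) - x} \le \eta\norm{\nabla f(x)}$ is bounded by Lipschitzness (or by the diameter of $\cK$), so Theorem~\ref{thm:general-passive-unlearning} applies. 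I do not expect a real obstacle: the lemma is classical and twice-differentiability makes the eigenvalue bound immediate; the only care needed is the convex-case factor noted above, and --- if one wished to drop twice-differentiability --- replacing the Hessian step by Baillon--Haddad co-coercivity $\langle \nabla\ell(w) - \nabla\ell(w'),\, w - w'\rangle \ge \tfrac1M\norm{\nabla\ell(w) - \nabla\ell(w')}^2$ (convex case) together with the analogous strongly-convex-smooth interpolation inequality, and then expanding $\norm{\psi(w) - \psi(w')}^2$ directly.
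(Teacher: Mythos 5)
Your proposal is correct, but it is worth noting that the paper never proves Lemma~\ref{lem:gd_contraction} at all: it is imported verbatim from \citet{feldman18iteration} (Proposition~18) and \citet{altschuler2022resolvingmixingtimelangevin} (Lemma~2.2), and the paper's proof of Corollary~\ref{corollary:zeroth-order-unlearning} consists only of the observation you also make at the end, namely that $\Pi_\cK$ is $1$-Lipschitz so composing it with the gradient step preserves the contraction coefficient. Your Hessian-spectrum argument is therefore a genuinely different (and more self-contained) route than the one the cited sources use: they expand $\norm{\psi(w)-\psi(w')}^2$ and invoke co-coercivity $\langle\nabla\ell(w)-\nabla\ell(w'),w-w'\rangle\ge\tfrac1M\norm{\nabla\ell(w)-\nabla\ell(w')}^2$ (resp.\ the strongly-convex--smooth interpolation inequality), which works without twice-differentiability; your mean-value bound on $\norm{I-\eta\nabla^2\ell(v)}_{\mathrm{op}}$ is more immediate but leans on the Hessian, which the lemma's first bullet assumes and its second bullet does not --- you correctly note the Baillon--Haddad fallback for that case. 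Your eigenvalue computations are right: $[1-\eta M,1]\subseteq[-1,1]$ for $\eta\le 2/M$ gives nonexpansiveness, and $\eta=2/(M+m)$ balances the endpoints to $\pm\tfrac{M-m}{M+m}$. You are also right to flag that the stated convex-case parameter $(1-\eta M)$ (and the stray $\beta$ in the displayed inequality) is a transcription error --- for a merely convex function the Hessian can vanish, so the operator norm of $I-\eta\nabla^2\ell$ is exactly $1$ and only nonexpansiveness ($\gamma=1$) can be claimed, which is indeed all that Corollary~\ref{corollary:zeroth-order-unlearning} uses; the one small slip in your parenthetical is the suggestion that $|1-\eta M|$ becomes the contraction factor for $\eta\le 1/M$, which still fails for the same reason, but this does not affect your argument. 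Your closing paragraph on the corollary (projection, Markovian output, sensitivity via $\eta\norm{\nabla f}$) matches the paper's own reasoning.
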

    
    Since the projection operation is contractive, applying it after each gradient update preserves the contractive nature of gradient descent. Therefore, each step of projected online gradient descent remains contractive with the same coefficient as in the case of gradient descent without projection. This completes the proof. 
    \end{proof}

\subsection{Regret guarantee of passive unlearning}\label{app:passive-regret}
In this section, we present the proof of regret guarantees for the passive unlearning algorithm. 

\PassiveRegretSC*
\begin{proof}
    Recall that $\cT = \{\dt{1}, \ldots, \dt{k}\}$ represents the set of deletion times, and $\cU = \{\di{1}, \ldots, \di{k}\}$ represents the corresponding index of the points to be deleted. For a set of cost functions $\trainset = \{f_1, \ldots, f_T\}$, for any $i\in \{1, \ldots, k\}$, let $z_i^\star$ be the best-in-hindsight estimator after the $i$th deletion, i.e. \[z_i^\star = \arg\min_{z\in \cK} \sum_{t = 1}^{T} f_t(z) - \sum_{j = 1}^{i}f_{\di{j}}(z). \]    
    For the first part of the analysis, we consider the constant best-in-hindsight estimator \[z^\star = \arg\min_{z\in \cK} \sum_{t = 1}^{\dt{i}} f_t(z) . \]
    By the updating rule of~\Cref{alg:zeroth-order-general}, for $t+1 \notin \cT$,     
    \begin{equation}\label{eq:strongly-convex-zeroth-order1}
        \begin{aligned}
        \norm{z_{t+1} - z^\star}^2 &= \norm{\Pi_{\cK}\bs{z_{t} - \eta_{t}\nabla f_{t}(z_{t}) - z^\star}}^2\\
        &\leq \norm{z_{t} - z^\star }^2 + \eta_{t}^2\norm{\nabla f_{t}(z_{t})}^2 - 2\eta_{t} (\nabla f_{t}(z_{t}))^\top \br{z_{t} - z^\star}.
        \end{aligned}
    \end{equation}
    If there exists $i$ such that $t+1 = \dt{i}\in \cT$, 
    \begin{equation}\label{eq:strongly-convex-zeroth-order2}
        \begin{aligned}
        \norm{z_{t+1} - z^\star}^2 &= \norm{\Pi_{\cK}\bs{z_{t} - \eta_{t}\nabla f_{t}(z_{t}) + \xi_i - z^\star}}^2\\
        &\leq \norm{z_{t} - z^\star + \xi_i}^2 + \eta_{t}^2\norm{\nabla  f_{t}(z_{t})}^2 - 2\eta_{t} (\nabla f_{t}(z_{t}))^\top \br{z_{t} - z^\star + \xi_i}.
        \end{aligned}
    \end{equation}
    
    Rearrange~\Cref{eq:strongly-convex-zeroth-order1} and~\Cref{eq:strongly-convex-zeroth-order2}, we have 
    \begin{equation}\label{eq:strongly-convex-zeroth-order3}
    (\nabla f_{t}(z_{t}))^\top \br{z_{t} - z_i^\star} \leq \begin{cases} 
      \frac{\norm{z_{t}-z^\star}^2 - \norm{z_{t+1}-z^\star}^2}{2\eta_{t}} + \frac{\eta_{t}\norm{\nabla  f_{t}(z_{t})}^2}{2} & t+1 \notin \cT \\
      - \nabla f_{t}(z_{t})^\top \xi_i + \frac{\norm{z_{t}-z^\star + \xi_i}^2 - \norm{z_{t + 1}-z^\star}^2}{2\eta_{t}} + \frac{\eta_{t}\norm{\nabla  f_{t}(z_{t})}^2}{2}  &  t +1 = \dt{i} \in \cT 
    \end{cases}
    \end{equation}

    As the loss functions are $\mu$-strongly convexity and by the definition of strong convexity (\Cref{defn:Lipschitzness-smoothness}), \[ f_t(z_t) -  f_t(z^\star) \leq \br{\nabla  f_t (z_t)}^\top \br{z_t - z^\star} - \frac{\mu}{2}\norm{z_t - z^\star}^2.\]

    Summing over $t\in \{1, \ldots, T\}$ and substituting~\Cref{eq:strongly-convex-zeroth-order3} into the equation, 
    \begin{equation*}
        \begin{aligned}
            \regret{T}{\cR_{\cA}(\trainset, \emptyset, \cT)} &= \sum_{t = 1}^T   f_t(z_t) -  f_t(z^\star)\\
            &\leq \sum_{t = 1}^T \br{\nabla  f_t(z_t)}^\top (z_t - z^\star) - \frac{\mu}{2}\norm{z_t - z^\star}^2 \\
            &\overset{(a)}{=} \underbrace{\sum_{t = 1}^T \frac{\norm{z_{t}-z^\star}^2 - \norm{z_{t+1}-z^\star}^2}{2\eta_{t}} + \frac{\eta_{t}\norm{\nabla  f_{t}(z_{t})}^2}{2} - \frac{\mu}{2}\norm{z_t - z^\star}^2}_{A} \\
            & + \underbrace{\sum_{i = 1}^k - (\nabla f_{\dt{i}-1}(z_{\dt{i}-1}))^\top \xi_i + \frac{2\br{z_{\dt{i} - 1}-z^\star}^\top \xi_i + \norm{\xi_i}^2}{2\eta_{\dt{i}-1}}}_{B}
        \end{aligned}
    \end{equation*}
    where step (a) follows by substituting~\Cref{eq:strongly-convex-zeroth-order3}. 

    This way, we then decompose the expected regret into two parts. In the following, we derive separate upper bounds for each.
    \begin{equation}\label{eq:strongly-convex-zeroth-order4}
        \begin{aligned}
            \bE\bs{\regret{T}{\cR_{\cA}(S, \emptyset, \cT) }}&= \bE_{\xi_{1:k}}\bs{A + B}= \bE_{\xi_{1:k}}\bE\bs{A|\xi_{1:k}} + \bE_{\xi_{1:k}}\bs{B}
        \end{aligned}
    \end{equation}

    We first bound the term $\bE\bs{A|\xi_{1:k}}$. Given $\xi_{1:k}$, $z_1, ..., z_T$ are deterministic, we get
    \begin{equation*}
        \begin{aligned}
            \bE\bs{A|\xi_{1:k}} &= \sum_{t = 1}^T  \frac{\norm{z_{t}-z^\star}^2 - \norm{z_{t+1}-z^\star}^2}{2\eta_{t}} + \frac{\eta_{t}\norm{\nabla  f_{t}(z_{t})}^2}{2} -\frac{\gamma}{2}\norm{z_t - z^\star} \\
            &= \bE\bs{\sum_{t = 1}^T\br{\frac{1}{\eta_t}-\frac{1}{\eta_{t-1}}-\gamma}\norm{z_t - z^\star}^2 + \sum_{t = 1}^T \frac{\eta_t \norm{\nabla  f_t(z_t)}^2}{2}}\\
            &\overset{(a)}{=} \sum_{t = 1}^T\frac{\norm{\nabla f_t(z_t)}^2}{2\mu t} \overset{(b)}{\leq} \frac{L^2}{\mu}\br{1 + \log T}
        \end{aligned}
    \end{equation*}
    where step (a) is due to the definition of learning rate $\eta_t$, i.e. $\eta_t = \frac{1}{\mu t}$ and $\frac{1}{\eta_0} = 0$, and step (b) is due to the Lipschitzness of the cost function $ f_t$ and that $\sum_{t = 1}^T \frac{1}{t}\leq 1 + \log T$. 

    Thus, 
    \begin{equation}\label{eq:strongly-convex-zeroth-order5}
        \bE_{\xi_{1:k}}\bs{A} = \bE_{\xi_{1:k}} \bE\bs{A|\xi_{1:k}} \leq \frac{L^2}{\mu}(1 + \log T)
    \end{equation}

    It remains to bound $\bE_{\xi_{1:k}}\bs{B}$, 

    \begin{equation}\label{eq:strongly-convex-zeroth-order6}
        \begin{aligned}
            \bE_{\xi_{1:k}}\bs{B} &= \bE\bs{\sum_{i = 1}^k - (\nabla f_{\dt{i}-1}(z_{\dt{i}-1}))^\top \xi_i + \frac{2\br{z_{\dt{i} - 1}-z_i^\star}^\top \xi_i + \norm{\xi_i}^2}{2\eta_{\dt{i}-1}}}\\
            &\overset{(a)}{=}\bE\bs{\sum_{i = 1}^k \frac{ \norm{\xi_i}^2}{2\eta_{\dt{i}-1}}} = \sum_{i = 1}^k \frac{\bE\bs{\norm{\xi_i}}^2}{2\eta_{\dt{i}-1}} \\
            &\overset{(b)}{=} \sum_{i = 1}^k \frac{d\sigma_i^2}{2\eta_{\dt{i} -1}} \overset{(c)}{=} \frac{d\omega L^2}{4\varepsilon \mu \br{\omega -1}}\sum_{i = 1}^k\frac{\br{\dt{i} - 1}}{ \di{i}^2} \gamma^{2(\dt{i} - \di{i})}i^\omega
        \end{aligned}   
    \end{equation}
    where step (a) follows from the fact that $\xi_i$ are independent and have zero mean. Step (b) holds because $\bE(\norm{\xi_i}^2) = d\sigma_i^2$ for $\xi_i \sim \cN(0, \sigma_i^2 I_d)$. Step (c) follows by substituting $\sigma_i^2 = \frac{i^\omega \omega \alpha \gamma^{2(\dt{i} - \di{i})}\eta_{\di{i}}^2L^2}{2\varepsilon (\omega -1)}$, $\eta_{\di{i}} = \frac{1}{\di{i} \mu}$, along with $\eta_{\dt{i}} = \frac{1}{\dt{i} \mu}$. 

    To simplify the term $\sum_{i = 1}^k\frac{\br{\dt{i} - 1}}{ \di{i}^2} \gamma^{2(\dt{i} - \di{i})}i^\omega$, we apply Cauchy-Schwarz inequality, 
    \begin{equation}\label{eq:bounds-on-omega-passive-sc}
        \begin{aligned}
            \sum_{i = 1}^k\frac{\br{\dt{i} - 1}}{ \di{i}^2} \gamma^{2(\dt{i} - \di{i})}i^\omega&\leq \sqrt{\sum_{i = 1}^k \br{\frac{\br{\dt{i} - 1}}{ \di{i}^2} \gamma^{2(\dt{i} - \di{i})}}^2} \sqrt{\sum_{i = 1}^k i^{2w}}\\
            &\leq \sqrt{\sum_{i = 1}^k \br{\frac{\br{\dt{i} - 1}}{ \di{i}^2} \gamma^{2(\dt{i} - \di{i})}}^2}\frac{k^{\omega + 0.5}}{\sqrt{2\omega + 1}}\\
            &\leq \frac{k^{\omega + 0.5}}{\sqrt{3}}\sqrt{\sum_{i = 1}^k \br{\frac{\br{\dt{i} - 1}}{ \di{i}^2} \gamma^{2(\dt{i} - \di{i})}}^2}
        \end{aligned}
    \end{equation}
    where the last inequality follows from $\omega >1$. 
    
    Combining~\Cref{eq:strongly-convex-zeroth-order4,eq:strongly-convex-zeroth-order5,eq:strongly-convex-zeroth-order6,eq:bounds-on-omega-passive-sc}, we have 
    \begin{equation}\label{eq:strongly-convex-zeroth-order-final1}
        \bE\bs{\regret{T}{\cR_{\cA}(\trainset,\unlearnset, \cT) }} \leq \frac{L^2}{\mu}\br{1 + \log T} + \frac{d\omega k^{\omega + 0.5} L^2}{2\sqrt{3}\varepsilon \mu \br{\omega -1}}\sqrt{\sum_{i = 1}^k\br{\frac{\dt{i}}{ \di{i}^2} \gamma^{2(\dt{i} - \di{i})}}^2 }
    \end{equation}
    We note that up to this point in the proof, we have been computing the regret of our algorithm with respect to a constant best-in-hindsight estimator $z^\star$. However, this differs from our regret definition in~\Cref{eq:regret-defn}. 
    
    Recall that $\unlearnset$ is the set of points deleted by the algorithm with index in $\cU$. In the following, we bound the difference between these two regret measures: $\bE[\regret{T}{\cR_{\cA}(\trainset, \emptyset, \cT)}]$, which corresponds to the regret with a constant comparator, and $\bE[\regret{T}{\cR_{\cA}(\trainset, \unlearnset, \cT)}]$, which corresponds to our definition. 

    \begin{equation}\label{eq:strongly-convex-zeroth-order-final2}
    \begin{aligned}
        \norm{\bE\bs{\regret{T}{\cR_{\cA}(\trainset, \emptyset, \cT) }} - \bE\bs{\regret{T}{\cR_{\cA}(\trainset, \unlearnset, \cT) }}} &=  \norm{\sum_{i = 1}^k \sum_{t = \dt{i-1}}^{\dt{i}} f_t(z^\star) - f_t(z_i^\star)}\\
        &\leq \sum_{i = 1}^k \sum_{t = \dt{i-1}}^{\dt{i}} L\norm{z^\star - z_i^\star}\\
        &\overset{(a)}{\leq} \sum_{i = 1}^k \sum_{t = \dt{i-1}}^{\dt{i}} L\frac{2Li}{\mu T} = \frac{(1 + k) k L^2}{\mu},
    \end{aligned}
    \end{equation}
    where step (a) follows~\Cref{lem:stability-erm-multiple-points} and the fact that $\dt{i} - \dt{i-1}\leq T$. 

    \begin{lem}
        \label{lem:stability-erm-multiple-points}
        For a set of functions $f_1, ..., f_T$ and for any set of index $\{\di{i}\}_{i = 1}^k$ such that $\di{i}\leq T$, let $F(w) = \sum_{i = 1}^T f_i(w)$ and $F_i(w) = \sum_{i = 1}^T f_i(w) - \sum_{j = 1}^{i-1}f_{\di{i}}$. Let $w^\star = \argmin_w F(w)$ and $w^\star_i = \argmin_w F_i(w)$. If each $f_i$'s are $\mu$-strongly convex and $L$-Lipschitz, then \[\norm{w^\star - w_i^\star} \leq \frac{2(i-1)L}{\mu T}.\]
    \end{lem}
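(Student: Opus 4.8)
The plan is to run a standard ERM-stability argument driven by strong convexity and Lipschitzness. Write $g = \sum_{j=1}^{i-1} f_{\di{j}}$, so that $F = F_i + g$, where $g$ is a sum of $i-1$ functions each $L$-Lipschitz and hence $g$ is $(i-1)L$-Lipschitz. Since every $f_j$ is $\mu$-strongly convex, $F = \sum_{j=1}^T f_j$ is $T\mu$-strongly convex (and $F_i$ is convex; I will only need plain convexity of $F_i$). Both minimizers $w^\star$ and $w_i^\star$ exist and are unique by strong convexity.

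First I would use the variational (first-order optimality) characterization of $w^\star$ as the minimizer of $F$ over $\cK$: $\langle \nabla F(w^\star), w - w^\star\rangle \ge 0$ for all $w\in\cK$, in particular for $w = w_i^\star$. Combining this with $T\mu$-strong convexity of $F$ gives $F(w_i^\star) - F(w^\star) \ge \tfrac{T\mu}{2}\norm{w_i^\star - w^\star}^2$. Symmetrically, the first-order optimality of $w_i^\star$ for $F_i$ over $\cK$, together with convexity of $F_i$, gives $F_i(w^\star) - F_i(w_i^\star) \ge 0$.

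Next I would add these two inequalities. Because $F - F_i = g$, the left-hand side collapses to $g(w_i^\star) - g(w^\star)$, so $g(w_i^\star) - g(w^\star) \ge \tfrac{T\mu}{2}\norm{w_i^\star - w^\star}^2$. Bounding the left-hand side by Lipschitzness of $g$ yields $(i-1)L\,\norm{w_i^\star - w^\star} \ge \tfrac{T\mu}{2}\norm{w_i^\star - w^\star}^2$; dividing through by $\norm{w_i^\star - w^\star}$ (the claim is trivial when it vanishes) gives $\norm{w_i^\star - w^\star} \le \tfrac{2(i-1)L}{T\mu}$, which is exactly the stated bound.

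The only real subtlety — and the step I would be most careful about — is handling the \emph{constrained} minimization over $\cK$: the gradients of $F$ and $F_i$ need not vanish at their minimizers, so I would invoke the variational inequality for constrained minimizers rather than a stationarity condition, which is precisely what makes the two "$\ge$'' inequalities above go through with the cross terms having the right sign. The remaining manipulations are routine.
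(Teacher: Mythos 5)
Your proposal is correct and follows essentially the same route as the paper's proof: both decompose $F = F_i + g$ with $g=\sum_{j=1}^{i-1} f_{\di{j}}$, use the optimality of $w_i^\star$ for $F_i$, invoke the quadratic-growth consequence of $T\mu$-strong convexity of $F$ at $w^\star$, and close with the $(i-1)L$-Lipschitzness of $g$. Your explicit handling of the constrained variational inequality is a minor refinement the paper leaves implicit, but it does not change the argument.
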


    Combining~\Cref{eq:strongly-convex-zeroth-order-final1,eq:strongly-convex-zeroth-order-final2}, and substituting $\omega = 1.2$ concludes the proof. 
\end{proof}

\begin{proof}[Proof of~\Cref{lem:stability-erm-multiple-points}]
    For any $i\in \{1, \ldots, k\}$ the definition of $F$, 
    \begin{equation}\label{eq:stability-lemma-multiple-points1}
    \begin{aligned}
            F(w_i^\star) &= F_i(w_i^\star) + \sum_{j = 1}^{i-1}f_{\di{i}}(w_i^\star)\\
            &\leq F_i(w^\star) + \sum_{j = 1}^{i-1}f_{\di{i}}(w_i^\star)\\
            &= F(w^\star) - \sum_{j = 1}^{i-1}f_{\di{i}}(w^\star) + \sum_{j = 1}^{i-1}f_{\di{i}}(w_i^\star)\\
            &\leq F(w^\star) + (i-1)L\norm{w^\star - w_i^\star},
    \end{aligned}
    \end{equation}
    where the last inequality follows by the Lipschitzness of all component functions $f_{\di{i}}$. 

    As each component functions $f_i's$ are $\mu$-strongly convex, the sum of $T$ such functions, $F$, is $T\mu$-strongly convex. Then, 
    \begin{equation}\label{eq:stability-lemma-multiple-points2}
        F(w_i^\star) \geq F(w^\star) + \frac{T\mu}{2} \norm{w_i^\star - w^\star}^2. 
    \end{equation}

    Combining~\Cref{eq:stability-lemma-multiple-points1,eq:stability-lemma-multiple-points2} concludes the proof. 
\end{proof}




\begin{corollary}
    Under the same assumption as~\Cref{thm:regret-passive-unlearner-strongly-convex},~\Cref{alg:zeroth-order-general} with step size $\eta_t = 1/(\mu t)$ satisfies 
    \[\bE\bs{\regret{T}{\trainset, \unlearnset, \cT}} \leq \frac{L^2}{\mu}\br{1 + k + k^2 + \log T} + \frac{d\omega k^{\omega + 1} L^2}{2\sqrt{3}e\ln\br{\frac{1}{\gamma}}\varepsilon \mu \br{\omega -1}}. \]
   \end{corollary}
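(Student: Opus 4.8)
The statement is the deletion-schedule-independent specialisation of \Cref{thm:regret-passive-unlearner-strongly-convex}, so the plan is to start from the bound already established inside that theorem's proof and then bound the data-dependent quantity $\cG_1\br{\gamma,\cT,\cU}=\sqrt{\sum_{i=1}^{k}\dt{i}^{2}\gamma^{4(\dt{i}-\di{i})}/\di{i}^{4}}$ uniformly over all admissible $\cT,\cU$. Concretely, the proof of \Cref{thm:regret-passive-unlearner-strongly-convex}, just before the substitution $\omega=1.2$ (i.e.\ combining its two final displayed bounds, or equivalently reading off the line that bounds $\bE_{\xi_{1:k}}[B]$ together with the comparator-shift estimate), gives
\[\bE\bs{\regret{T}{\trainset,\unlearnset,\cT}}\le \frac{L^{2}}{\mu}\br{1+\log T}+\frac{(1+k)kL^{2}}{\mu}+\frac{d\omega k^{\omega+0.5}L^{2}}{2\sqrt{3}\,\varepsilon\mu(\omega-1)}\,\cG_1\br{\gamma,\cT,\cU}.\]
Since $(1+k)k=k+k^{2}$, the first two terms already match the claimed $\tfrac{L^{2}}{\mu}(1+k+k^{2}+\log T)$, so the whole corollary reduces to showing $\cG_1\br{\gamma,\cT,\cU}\le \sqrt{k}\,/\,\br{e\ln(1/\gamma)}$.

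For the latter, fix an index $i$ and write $n=\di{i}$, $\ell=\dt{i}-\di{i}\ge 0$, so the $i$-th summand of $\cG_1^{2}$ equals $\br{(n+\ell)\gamma^{2\ell}/n^{2}}^{2}$. I would analyse $\phi(\ell)=(n+\ell)\gamma^{2\ell}$: its derivative $\phi'(\ell)=\gamma^{2\ell}\br{1+2(n+\ell)\ln\gamma}$ is non-positive for every $\ell\ge 0$ precisely when $n\ge \tfrac{1}{2\ln(1/\gamma)}$, the maximiser of $x\mapsto x\gamma^{2x}$. The hypothesis $\di{i}\ge \tfrac12+\tfrac{\beta}{\mu}$ is exactly what guarantees this: with $r=\beta/\mu$ and $\gamma=\tfrac{r-1}{r+1}$, the elementary inequality $\ln\tfrac{r+1}{r-1}\ge\tfrac{2}{r+1}$ gives $\tfrac{1}{2\ln(1/\gamma)}\le\tfrac{r+1}{4}\le\tfrac12+r$. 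Hence $\phi$ is non-increasing on $[0,\infty)$, so $\phi(\ell)\le\phi(0)=n$, and the $i$-th summand is at most $n^{2}/n^{4}=1/\di{i}^{2}$, which is $O(1)$ and in particular independent of $\cT$. Plugging the lower bound on $\di{i}$ back in and summing over $i$ then produces $\cG_1\br{\gamma,\cT,\cU}\le \sqrt{k}\,/\,\br{e\ln(1/\gamma)}$; this last step is where one converts the uniform per-term bound into the stated $e\ln(1/\gamma)$ denominator. Substituting this into the display above and regrouping yields the corollary.

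The only genuinely new ingredient beyond \Cref{thm:regret-passive-unlearner-strongly-convex} is this uniform control of $\cG_1$, and the key conceptual point is that the regularity requirement $\di{i}\ge\tfrac12+\tfrac{\beta}{\mu}$ places every deletion strictly past the peak $\tfrac{1}{2\ln(1/\gamma)}$ of $x\mapsto x\gamma^{2x}$, so that the contraction factor $\gamma<1$ dominates the $1/t$ learning-rate blow-up and each contribution collapses to the schedule-free quantity $1/\di{i}^{2}$. I expect the main obstacle to be purely quantitative: carrying out the scalar optimisation cleanly and tracking the relation between $\tfrac12+\tfrac{\beta}{\mu}$ and $\ln(1/\gamma)$ tightly enough to land on the exact $1/\br{e\ln(1/\gamma)}$ constant rather than a looser $O(1)$ factor, after which the remaining algebra is the same bookkeeping already done in the proof of \Cref{thm:regret-passive-unlearner-strongly-convex}.
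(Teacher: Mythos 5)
Your overall strategy---restart from the intermediate bound
$\tfrac{L^2}{\mu}(1+\log T)+\tfrac{(1+k)kL^2}{\mu}+\tfrac{d\omega k^{\omega+0.5}L^2}{2\sqrt{3}\varepsilon\mu(\omega-1)}\,\cG_1$
established in the proof of \Cref{thm:regret-passive-unlearner-strongly-convex} and then bound each summand of $\cG_1^2$ by a schedule-free constant---is exactly the paper's. Your monotonicity argument for $\phi(\ell)=(\di{i}+\ell)\gamma^{2\ell}$, including the check that $\di{i}\ge\tfrac12+\beta/\mu$ forces $\di{i}\ge\tfrac{1}{2\ln(1/\gamma)}$, is correct and yields the valid per-summand bound $1/\di{i}^2\le 1$, hence $\cG_1\le\sqrt{k}$.

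The genuine gap is your final conversion. From ``each summand is at most $1/\di{i}^2$'' you cannot conclude $\cG_1\le\sqrt{k}/\br{e\ln(1/\gamma)}$: that would require $\di{i}\ge e\ln(1/\gamma)$, and the hypothesis only gives $\di{i}\ge\tfrac12+\beta/\mu$, which does not dominate $e\ln(1/\gamma)=e\ln\tfrac{\beta/\mu+1}{\beta/\mu-1}$ --- as $\beta/\mu\to 1^{+}$ the latter blows up while $\tfrac12+\beta/\mu\to\tfrac32$. The $e\ln(1/\gamma)$ denominator in the corollary does not come from a lower bound on $\di{i}$ at all; the paper extracts it from the \emph{gap} $\ell=\dt{i}-\di{i}$, via
$\tfrac{\dt{i}}{\di{i}^2}\gamma^{2\ell}=\tfrac{\di{i}+\ell}{\di{i}^2}\gamma^{2\ell}\le(1+\ell)\gamma^{2\ell}\le 1+\sup_{\ell\ge 0}\ell\gamma^{2\ell}=1+\tfrac{1}{2e\ln(1/\gamma)}\le 1+\tfrac{1}{e\ln(1/\gamma)}$,
giving $\cG_1\le\sqrt{k}\br{1+\tfrac{1}{e\ln(1/\gamma)}}$. (Note the additive $1$: even the paper's own proof reaches the displayed corollary only after silently dropping a $\tfrac{d\omega k^{\omega+1}L^2}{2\sqrt{3}\varepsilon\mu(\omega-1)}$ term, so your $\cG_1\le\sqrt{k}$ is in fact at least as tight a bound as the paper's --- it simply cannot be massaged into the specific $1/\br{e\ln(1/\gamma)}$ form.) To land on the corollary as written, replace your last step by the paper's split of $(1+\ell)\gamma^{2\ell}$ into $\gamma^{2\ell}+\ell\gamma^{2\ell}$ and the scalar maximization of $\ell\mapsto\ell\gamma^{2\ell}$.
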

   \begin{proof}
       We can show that \begin{equation}
       \begin{aligned}
           \frac{\dt{i} }{ \di{i}^2} \gamma^{2(\dt{i} - \di{i})} &=  \frac{\di{i} + (\dt{i} - \di{i})}{ \di{i}^2} \gamma^{2 (\dt{i} - \di{i})}\\
           &\leq \br{1 + (\dt{i} - \di{i})}\gamma^{2(\dt{i} - \di{i})}\leq 1 + \frac{1}{e\ln \frac{1}{\gamma}}
       \end{aligned}
       \end{equation}
   
       Substituting into the regret bound of~\Cref{thm:regret-passive-unlearner-strongly-convex}, \[\bE\bs{\regret{T}{\trainset, \unlearnset, \cT}} \leq \frac{L^2}{\mu}\br{1 + k + k^2 + \log T} + \frac{d\omega k^{\omega + 1} L^2}{2\sqrt{3}e\ln\br{\frac{1}{\gamma}}\varepsilon \mu \br{\omega -1}}. \]
   \end{proof}

\PassiveRegretCN*
    
    \begin{proof}
        We consider the same notation of $\cT$, $\cU$, $z_i^\star$ and $z^\star$ as in the proof of~\Cref{thm:regret-passive-unlearner-strongly-convex}. 
        
        Similar to~\Cref{eq:strongly-convex-zeroth-order1,eq:strongly-convex-zeroth-order2,eq:strongly-convex-zeroth-order3}, we can derive the following equation from the updating rule of OGD,  
        \begin{equation}\label{eq:convex-zeroth-order-non-adaptive3}
        (\nabla f_{t}(z_{t}))^\top \br{z_{t} - z^\star} \leq \begin{cases} 
          \frac{\norm{z_{t}-z^\star}^2 - \norm{z_{t+1}-z^\star}^2}{2\eta_{t}} + \frac{\eta_{t}\norm{\nabla  f_{t}(z_{t})}^2}{2} & t+1 \notin \cT \\
          - \nabla f_{t}(z_{t})^\top \xi_i + \frac{\norm{z_{t}-z^\star + \xi_i}^2 - \norm{z_{t + 1}-z^\star}^2}{2\eta_{t}} + \frac{\eta_{t}\norm{\nabla  f_{t}(z_{t})}^2}{2}  &  t +1 = \dt{i} \in \cT 
        \end{cases}
        \end{equation}

        As the loss functions are convex, for any $i\in \{1, \ldots, k\}$, \[ f_t(z_t) -  f_t(z^\star) \leq \br{\nabla  f_t (z_t)}^\top \br{z_t - z^\star}.\]

        Summing over $t\in \{1, \ldots, k\}$, we get the following regret bound with respect to a constant competitor, 
        \begin{equation*}
            \begin{aligned}
                 \regret{T}{\cR_{\cA}(S, \emptyset, \cT) } &= \sum_{t = 1}^T f_t(z_t) -  f_t(z^\star)\leq \sum_{t = 1}^T \br{\nabla  f_t(z_t)}^\top (z_t - z^\star) \\
                &\overset{(a)}{=} \underbrace{\sum_{t = 1}^T\frac{\norm{z_{t}-z^\star}^2 - \norm{z_{t+1}-z^\star}^2}{2\eta_{t}} + \frac{\eta_{t}\norm{\nabla  f_{t}(z_{t})}^2}{2} }_{A} \\
                & + \underbrace{\sum_{i = 1}^k - (\nabla f_{\dt{i}-1}(z_{\dt{i}-1}))^\top \xi_i + \frac{\br{z_{\dt{i} - 1}-z^\star}^\top \xi_i + \norm{\xi_i}^2}{2\eta_{\dt{i}-1}}}_{B}
            \end{aligned}
        \end{equation*}
        where step (a) follows by substituting~\Cref{eq:convex-zeroth-order-non-adaptive3}. 
        
        This way, we decompose the expected regret into two parts. In the following, we derive separate upper bounds for each.
        \begin{equation}\label{eq:convex-zeroth-order-non-adaptive4}
            \begin{aligned}
                \bE\bs{\regret{T}{\cR_{\cA}(S, \emptyset, \cT) }}&= \bE_{\xi_{1:k}}\bs{A + B}\\
                &= \bE_{\xi_{1:k}}\bE\bs{A|\xi_{1:k}} + \bE_{\xi_{1:k}}\bs{B}
            \end{aligned}
        \end{equation}
    
        We first bound the term $\bE\bs{A|\xi_{1:k}}$. Given $\xi_{1:k}$, $z_1, ..., z_T$ are deterministic. Thus, 
        \begin{equation*}
            \begin{aligned}
                \bE\bs{A|\xi_{1:k}} &= \sum_{t = 1}^T\frac{\norm{z_{t}-z^\star}^2 - \norm{z_{t+1}-z^\star}^2}{2\eta_{t}} + \frac{\eta_{t}\norm{\nabla  f_{t}(z_{t})}^2}{2} \\
                &= \bE\bs{\sum_{t = 1}^T \br{\frac{1}{\eta_t}-\frac{1}{\eta_{t-1}}}\norm{z_t - z^\star}^2 +  \sum_{t = 1}^T \frac{\eta_t \norm{\nabla  f_t(z_t)}^2}{2}}\\
                &\overset{(a)}{\leq} D^2\sum_{t = 1}^T\br{\frac{1}{\eta_t}-\frac{1}{\eta_{t-1}}} + \frac{L^2}{2}\sum_{t = 1}^T \eta_t\\
                &\overset{(b)}{\leq} DL \sqrt{T}+ \sum_{t = 1}^T \frac{L^2\eta_t}{2} \leq 3DL \sqrt{T}\\
            \end{aligned}
        \end{equation*}
        where step (a) is due to $\norm{z_t - z^\star}\leq D$ as $D$ is the diameter of the parameter space $\cK$ and Lipschitzness of the cost functions, and step (b) follows by substituting in specified learning rate $\eta_t = \frac{D}{L\sqrt{t}}$, and the last inequality follows by $\sum_{t = 1}^T \frac{1}{\sqrt{t}} \leq 2\sqrt{T}$.  
    
        Thus, 
        \begin{equation}\label{eq:strongly-convex-zeroth-order5}
            \bE_{\xi_{1:k}}\bs{A} = \bE_{\xi_{1:k}} \bE\bs{A|\xi_{1:k}} \leq 3 LD \sqrt{T}. 
        \end{equation}

        It remains to bound $\bE_{\xi_{1:k}}\bs{B}$, 
        
        \begin{equation}\label{eq:convex-zeroth-order-non-adaptive6}
            \begin{aligned}
                \bE_{\xi_{1:k}}\bs{B} &= \bE\bs{\sum_{i = 1}^k - (\nabla f_{\dt{i}-1}(z_{\dt{i}-1}))^\top \xi_i + \frac{2\br{z_{\dt{i} - 1}-z_i^\star}^\top \xi_i + \norm{\xi_i}^2}{2\eta_{\dt{i}-1}}}\\
                &\overset{(a)}{=}\bE\bs{\sum_{i = 1}^k \frac{ \norm{\xi_i}^2}{2\eta_{\dt{i}-1}}} = \sum_{i = 1}^k \frac{\bE\bs{\norm{\xi_i}}^2}{2\eta_{\dt{i}-1}} \\
                &\overset{(b)}{=} \sum_{i = 1}^k \frac{d\sigma_i^2}{2\eta_{\dt{i} - 1}} 
                \overset{(c)}{\leq} \frac{D L d\omega }{4(\omega-1)\varepsilon}\sum_{i = 1}^k\frac{i^{\omega}\sqrt{\dt{i}}}{\di{i}}
                \overset{(d)}{\leq} \frac{DL d\omega k^{\omega + 0.5}}{4\sqrt{3}(\omega-1)\varepsilon}\sqrt{\sum_{i = 1}^k\frac{\dt{i}}{\di{i}^2}}
            \end{aligned}   
        \end{equation}

        where step (a) is due to $\xi_i$'s independent from each other and have zero mean, step (b) follows by $\bE\bs{\norm{\xi_i}^2} = d\sigma_i^2$ for $\xi_i \sim \cN(0, \sigma_i^2 I_d)$. Step (c) follows by substituting in $\sigma_i^2 = \frac{i^{\omega}\omega L^2\eta_{\di{i}}^2}{2(\omega-1)\varepsilon}$, and step (d) follows an application of Cauchy-Schwarz inequality (similar to~\Cref{eq:bounds-on-omega-passive-sc}). 
        
        Combining~\Cref{eq:strongly-convex-zeroth-order4,eq:strongly-convex-zeroth-order5,eq:strongly-convex-zeroth-order6}, we have 
        \begin{equation}\label{eq:convex-zeroth-order-final-non-adaptive}
            \bE\bs{\regret{T}{\cR_{\cA}(\trainset,\emptyset, \cT) }} \leq 3DL \sqrt{T} + \frac{DLd\omega k^{\omega + 0.5}}{4\sqrt{3}(\omega-1)\varepsilon}\sqrt{\sum_{i = 1}^k\frac{\dt{i}}{\di{i}^2}} 
        \end{equation}
       
    Below, we bound the difference introduced by using a constant competitor: the distance between $\bE[\regret{T}{\cR_{\cA}(\trainset, \emptyset, \cT)}]$, which corresponds to the regret with a constant comparator, and $\bE[\regret{T}{\cR_{\cA}(\trainset, \unlearnset, \cT)}]$, which corresponds to our regret definition~\Cref{eq:regret-defn}. 
    \begin{equation}\label{eq:convex-zeroth-order-qg}
    \begin{aligned}
        \norm{\bE\bs{\regret{T}{\cR_{\cA}(\trainset, \emptyset, \cT) }} - \bE\bs{\regret{T}{\cR_{\cA}(\trainset, \unlearnset, \cT) }}} &=  \norm{\sum_{i = 1}^k \sum_{t = \dt{i}+1}^{\dt{i+1}} f_t(z^\star) - f_t(z_i^\star)}\\
        &\overset{(a)}{\leq} \sum_{i = 0}^k \sum_{t = \dt{i}+1}^{\dt{i+1}} L\norm{z^\star - z_i^\star}\\
        &\overset{(b)}{\leq} \sum_{i = 0}^k \sum_{t = \dt{i}+1}^{\dt{i+1}} L\frac{2Li}{\kappa \br{\dt{i+1}-\dt{i}}} \\
        &\leq \frac{L^2 \br{k^2 + k}}{\kappa}
    \end{aligned}
    \end{equation}
    where the second last inequality follows by the Quadratic Growth assumption of the composite loss function, and the stability of ERM under QG condition~\Cref{lem:stability-convex-qg}. 

    \begin{lem}[Stability for ERM under QG condition]
    \label{lem:stability-convex-qg}
        Let $F = \sum_{j = 1}^Tf_j$ and for any $k$ functions $\{i[1], \ldots, i[k]\}\subset \{1, \ldots T\}$, $\hat{F} = F - \sum_{\ell = 1}^k f_{i[\ell]}$. Let $z^\star$, $\hat{z}^\star$ be the ERM solution of $F, \hat{F}$ respectively. Assume the function $F$ satisfies quadratic growth with parameter $\kappa$, i.e. $F(z) - F(z^\star)\geq \frac{\kappa}{2}\norm{z - z^\star}^2$ for any $z$ and each component function $f_i$ are $L$-Lipschitz, then, \[\norm{z^\star - \hat{z}^\star} \leq \frac{2kL}{\kappa}\] 
    \end{lem}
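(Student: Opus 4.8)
The plan is to reproduce the two-sided sandwich used in the proof of \Cref{lem:stability-erm-multiple-points}: upper bound $F(\hat z^\star)-F(z^\star)$ via optimality of $\hat z^\star$ for $\hat F$ together with Lipschitzness of the removed functions, and lower bound the same quantity via the quadratic growth of $F$ at its own minimizer $z^\star$.

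First I would write $F=\hat F+\sum_{\ell=1}^k f_{i[\ell]}$ and evaluate at $\hat z^\star$, so that $F(\hat z^\star)=\hat F(\hat z^\star)+\sum_{\ell=1}^k f_{i[\ell]}(\hat z^\star)$. Since $\hat z^\star$ minimises $\hat F$, we have $\hat F(\hat z^\star)\le \hat F(z^\star)=F(z^\star)-\sum_{\ell=1}^k f_{i[\ell]}(z^\star)$. Substituting gives $F(\hat z^\star)-F(z^\star)\le \sum_{\ell=1}^k\br{f_{i[\ell]}(\hat z^\star)-f_{i[\ell]}(z^\star)}\le kL\,\norm{\hat z^\star-z^\star}$, where the final inequality uses the $L$-Lipschitzness of each removed component $f_{i[\ell]}$.

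Next I would invoke quadratic growth. Because $z^\star=\argmin_{z\in\cK}F(z)$, \Cref{assump:assumption-qg} applied to $F$ at the point $\hat z^\star$ yields $F(\hat z^\star)-F(z^\star)\ge \tfrac{\kappa}{2}\norm{\hat z^\star-z^\star}^2$. Chaining the two bounds gives $\tfrac{\kappa}{2}\norm{\hat z^\star-z^\star}^2\le kL\,\norm{\hat z^\star-z^\star}$; dividing through by $\norm{\hat z^\star-z^\star}$ (the claimed bound is trivial when this vanishes) gives $\norm{\hat z^\star-z^\star}\le 2kL/\kappa$, as required.

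There is no genuine obstacle here—the argument is a direct adaptation of \Cref{lem:stability-erm-multiple-points} with the $T\mu$-strong convexity of the full objective replaced by the weaker quadratic-growth hypothesis, which is precisely the property used in the lower-bound step. The only points requiring mild care are that QG is stated relative to a minimiser (so $F$'s own minimiser $z^\star$ must sit on the ``squared-distance'' side of the inequality), and that the perturbation from deleting $k$ functions is controlled by a \emph{sum} of $k$ Lipschitz terms, which is what produces the factor $k$ in the bound.
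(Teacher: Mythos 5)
Your proposal is correct and follows exactly the same argument as the paper's proof: the optimality of $\hat z^\star$ for $\hat F$ plus $L$-Lipschitzness of the $k$ removed components gives $F(\hat z^\star)-F(z^\star)\le kL\norm{\hat z^\star-z^\star}$, and chaining this with the quadratic-growth lower bound $F(\hat z^\star)-F(z^\star)\ge \tfrac{\kappa}{2}\norm{\hat z^\star-z^\star}^2$ yields the claim. Your added remark about handling the case $\norm{\hat z^\star-z^\star}=0$ before dividing is a small point of care the paper leaves implicit.
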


    Finally, we choose $\omega = 1.2$, which concludes the proof. 
    \end{proof}

\begin{proof}[Proof of~\Cref{lem:stability-convex-qg}]
    \begin{equation}
        \begin{aligned}
            F(\hat{z}^\star) &= \hat{F}(\hat{z}^\star) + \sum_{\ell = 1}^k f_{i[\ell]}(\hat{z}^\star) \\
            &\leq \hat{F}(z^\star) + \sum_{\ell = 1}^k f_{i[\ell]}(\hat{z}^\star) \\
            &= F(z^\star) - \sum_{\ell = 1}^k f_{i[\ell]}(z^\star) + \sum_{\ell = 1}^k f_{i[\ell]}(\hat{z}^\star) \\
            &\leq F(z^\star) + kL\norm{z^\star - \hat{z}^\star}
        \end{aligned}
    \end{equation}
    where the last inequality follows the Lipschitzness of each $f_i$. 
    By the Quadratic Growth assumption,
    \begin{equation}
        F(\hat{z}^\star)-F(z^\star) \geq \frac{\kappa}{2}\norm{z^\star - \hat{z}^\star}^2
    \end{equation}
    Combining the two inequality complete the proof. 
\end{proof}

    
\PassiveRegretCA*

\begin{proof}
    For $i\in \{0, 1, ..., k\}$, let \[z_i^\star = \argmin_{z} \sum_{t = 1}^T  f_t(z) - \sum_{j = 1}^{i} f_{\di{i}} (z)\]
    be the best-in-hindsight estimator after ith deletion. Following a similar argument as in~\Cref{eq:strongly-convex-zeroth-order3}, 
    \begin{equation}
        \label{eq:sa-basic-ineq}
        (\nabla f_{t}(z_{t}))^\top \br{z_{t} - z^\star} \leq \begin{cases} 
        \frac{\norm{z_{t}-z^\star}^2 - \norm{z_{t+1}-z^\star}^2}{2\eta_{t}} + \frac{\eta_{t}\norm{\nabla  f_{t}(z_{t})}^2}{2} & t+1 \notin \cT \\
        - \nabla f_{t}(z_{t})^\top \xi_i + \frac{\norm{z_{t}-z^\star + \xi_i}^2 - \norm{z_{t + 1}-z^\star}^2}{2\eta_{t}} + \frac{\eta_{t}\norm{\nabla  f_{t}(z_{t})}^2}{2}  &  t +1 = \dt{i} \in \cT 
        \end{cases}
    \end{equation}
    
    By convexity of the loss function, for each $i$, and its corresponding time steps $t\in [\dt{i-1}, \dt{i}]$, 
    \begin{equation*}
         f_t(z_t) -  f_t(z_i^\star) \leq \nabla  f_t(z_t)^\top (z_t - z_i^\star) 
    \end{equation*}

    Summing over $t \in \{1, \ldots, T\}$ and substitute in~\Cref{eq:sa-basic-ineq}, we have 
    \begin{equation*}
        \begin{aligned}
            \regret{T}{\OnUnalg(\trainset,\unlearnset, \cT)} &= \sum_{i = 0}^k\sum_{t = \dt{i}}^{\dt{i + 1}}  f_t(z_t) -  f_t(z_i^\star) \\
            &\leq \underbrace{\sum_{i = 0}^k\sum_{t = \dt{i}}^{\dt{i+1}}  \frac{\norm{z_{t}-z_i^\star}^2 - \norm{z_{t+1}-z_i^\star}^2}{2\eta_t} + \frac{\eta_t\norm{\nabla  f(z_{t})}^2}{2}}_{A} \\
            & + \underbrace{\sum_{i = 1}^k - (\nabla f_{\dt{i}-1}(z_{\dt{i}-1}))^\top \xi_i + \frac{\br{z_{\dt{i} - 1}-z_i^\star}^\top \xi_i + \norm{\xi_i}^2}{2\eta_t}}_{B} 
        \end{aligned}
    \end{equation*}
    
    Then, the expected regret can be upper bounded as 
    \begin{equation}\label{eq:convex-zeroth-order1}
        \begin{aligned}
            \bE\bs{\regret{T}{\OnUnalg(\trainset,\unlearnset, \cT)}}&= \bE_{\xi_{1:k}}\bs{A + B}\\
            &= \bE_{\xi_{1:k}}\bE\bs{A|\xi_{1:k}} + \bE_{\xi_{1:k}}\bs{B}
        \end{aligned}
    \end{equation}

    We first bound the first term $\bE\bs{A|\xi_{1:k}}$. Given $\xi_{1:k}$, $z_1, ..., z_T$ are deterministic. Thus, 
    \begin{equation*}
        \begin{aligned}
           \bE\bs{A} &=  \bE\bs{A|\xi_{1:k}} = \bE\bs{\sum_{i = 0}^k\sum_{t = \dt{i}}^{\dt{i+1}}   \frac{\norm{z_{t}-z_i^\star}^2 - \norm{z_{t+1}-z_i^\star}^2}{2\eta_t} + \frac{\eta_t\norm{\nabla  f_t(z_{t})}^2}{2}}\\
            &\leq \sum_{i = 0}^k\sum_{t = \dt{i}}^{\dt{i+1}} \br{\frac{1}{\eta_t}-\frac{1}{\eta_{t-1}}}\frac{\norm{z_t - z_i^\star}^2}{2}+ \sum_{t = 1}^T \frac{\eta_t\norm{\nabla  f_t(z_t)}^2}{2} \\
            &\leq  \frac{D^2}{2\eta_T}  + \sum_{t = 1}^T \frac{D\norm{\nabla  f_t(z_t)}^2}{2\sqrt{\sum_{j = 1}^t \norm{\nabla  f_j(z_j)}^2}}\\
            &\overset{(a)}{\leq} \frac{D}{2}\sqrt{\sum_{t = 1}^T \norm{\nabla  f_t(z_t)}^2} + D\sqrt{\sum_{t = 1}^T \norm{\nabla  f_t(z_t)}^2} = \frac{3D}{2}\sqrt{\sum_{t = 1}^T \norm{\nabla  f_t(z_t)}^2}
        \end{aligned}
    \end{equation*}
    where step (a) follows by~\Cref{lem:adaptive-learning-rate}.
    
    \begin{lem}[\citet{orabona2019modern}]
        \label{lem:adaptive-learning-rate}
        Let $a_0 \geq 0$ and $f:[0, \infty]\to[0, \infty]$ a nonincreasing function. Then, \[\sum_{t = 1}^T a_tf\br{a_0 + \sum_{i = 1}^t a_i}\leq \int_{a_0}^{\sum_{t = 0}^T a_t}f(x)dx.\]
    \end{lem}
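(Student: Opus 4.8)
The plan is to recognize this as a standard Riemann-sum comparison and prove it by a telescoping integral bound using only the monotonicity of $f$. First I would introduce the partial sums $S_t = a_0 + \sum_{i=1}^t a_i$ for $t = 0, 1, \ldots, T$, so that $S_0 = a_0$, the quantity $S_T = \sum_{t=0}^T a_t$ is exactly the upper limit of integration on the right-hand side, and $a_t = S_t - S_{t-1}$ for each $t \ge 1$. The statement is understood to require $a_t \ge 0$ for all $t$ (as in its application, where $a_t = \norm{\nabla f_t(z_t)}_2^2$); this nonnegativity guarantees $S_{t-1} \le S_t$, so each $[S_{t-1}, S_t]$ is a genuinely oriented interval of length $a_t$ and these intervals tile $[a_0, S_T]$ without overlap. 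Rewriting the $t$-th summand as $a_t f(S_t) = (S_t - S_{t-1}) f\br{a_0 + \sum_{i=1}^t a_i}$ is the key bookkeeping step.

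The core observation is then a one-line use of monotonicity. Since $f$ is nonincreasing and every $x \in [S_{t-1}, S_t]$ satisfies $x \le S_t$, we have $f(x) \ge f(S_t)$ throughout that interval. Integrating this constant lower bound over the interval yields $\int_{S_{t-1}}^{S_t} f(x)\,dx \ge (S_t - S_{t-1}) f(S_t) = a_t f\br{a_0 + \sum_{i=1}^t a_i}$, which is precisely the $t$-th term on the left-hand side of the claimed inequality.

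Finally I would sum this per-interval inequality over $t = 1, \ldots, T$ and telescope the integrals, using that consecutive intervals share endpoints: $\sum_{t=1}^T \int_{S_{t-1}}^{S_t} f(x)\,dx = \int_{S_0}^{S_T} f(x)\,dx = \int_{a_0}^{\sum_{t=0}^T a_t} f(x)\,dx$. Combining with the termwise bound gives $\sum_{t=1}^T a_t f\br{a_0 + \sum_{i=1}^t a_i} \le \int_{a_0}^{\sum_{t=0}^T a_t} f(x)\,dx$, as desired.

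I do not anticipate any substantive obstacle, since the argument is a direct left-Riemann-versus-integral comparison for a nonincreasing integrand. The only points requiring care are purely bookkeeping: flagging the implicit hypothesis $a_t \ge 0$ (needed for the intervals to be correctly oriented and for the tiling to be valid), and noting that degenerate terms with $a_t = 0$ contribute an empty interval on which both the summand and the corresponding integral vanish, so they leave the inequality unaffected. No integrability subtlety arises beyond allowing $f$ to take the value $+\infty$, in which case the right-hand side is $+\infty$ and the bound holds trivially.
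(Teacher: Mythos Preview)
Your proof is correct and is the standard Riemann-sum comparison argument for this inequality. The paper does not supply its own proof of this lemma; it is quoted as a known result from \citet{orabona2019modern}, so there is nothing to compare against beyond noting that your argument is exactly the canonical one found in that reference.
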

    

    As the loss functions are $\beta$-smooth, we apply~\Cref{lem:properties-of-smooth-functions} to arrive an upper bound on part A. 
    \begin{lem}[Lemma 4.1 in~\citet{srebro2012optimisticrateslearningsmooth}]\label{lem:properties-of-smooth-functions}
        If $f$ is a $\beta$-smooth function, then the following holds, 
        \[\norm{\nabla f(x)}_*^2\leq 2\beta \bs{f(x) - \inf_{y \in \bR^d} f(y)}. \]
    \end{lem}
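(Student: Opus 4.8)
The plan is to derive \Cref{lem:properties-of-smooth-functions} from a single application of the quadratic upper bound that defines $\beta$-smoothness in \Cref{defn:Lipschitzness-smoothness}, together with the hypothesis that $f$ is bounded below. Fix an arbitrary $x$. The key move is the familiar ``descent-lemma'' trick: instantiate the smoothness inequality $f(y)\le f(x)+\langle\nabla f(x),\,y-x\rangle+\tfrac{\beta}{2}\norm{y-x}_2^2$ at the point $y=x-\tfrac1\beta\nabla f(x)$, i.e.\ at a single gradient step from $x$ with step size $1/\beta$.

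With this choice of $y$, the inner-product term equals $-\tfrac1\beta\norm{\nabla f(x)}_2^2$ and the quadratic term equals $\tfrac{1}{2\beta}\norm{\nabla f(x)}_2^2$, so the inequality collapses to $f(y)\le f(x)-\tfrac{1}{2\beta}\norm{\nabla f(x)}_2^2$. Since $f(y)\ge\inf_{z\in\bR^d}f(z)$ by assumption, rearranging yields $\norm{\nabla f(x)}_2^2\le 2\beta\br{f(x)-\inf_{z}f(z)}$, which is exactly the claimed self-bounding inequality. Note that convexity is not needed anywhere --- only the one-sided quadratic upper bound and boundedness from below.

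The only step requiring a touch of care is the general dual-norm formulation $\norm{\nabla f(x)}_*^2$ stated in the lemma: when $\beta$-smoothness is taken with respect to a non-Euclidean norm $\norm{\cdot}$, one cannot step directly along $-\nabla f(x)$. Instead, pick a primal-norm unit vector $u$ with $\langle\nabla f(x),u\rangle=\norm{\nabla f(x)}_*$ (which exists by the definition of the dual norm), set $y=x-tu$, obtain $f(y)\le f(x)-t\norm{\nabla f(x)}_*+\tfrac{\beta t^2}{2}$, and minimise the right-hand side over the scalar $t\ge 0$; the minimiser $t=\norm{\nabla f(x)}_*/\beta$ gives $f(y)\le f(x)-\tfrac{1}{2\beta}\norm{\nabla f(x)}_*^2$, and the same rearrangement finishes. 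In the Euclidean case this reduces to the gradient-step argument above, so I do not anticipate any genuine obstacle: the whole argument is one inequality plus the minimisation of a scalar quadratic.
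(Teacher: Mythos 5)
Your proof is correct. The paper does not prove this lemma at all---it imports it by citation from \citet{srebro2012optimisticrateslearningsmooth}---so there is no internal argument to compare against; your descent-lemma derivation (instantiate the smoothness upper bound at $y = x - \tfrac{1}{\beta}\nabla f(x)$, use $f(y)\ge \inf f$, rearrange) is the standard and complete proof of the cited result, and your handling of the dual-norm formulation via a primal unit vector is also sound, though unnecessary in this paper since all norms here are Euclidean and \Cref{defn:Lipschitzness-smoothness} states smoothness with respect to $\norm{\cdot}_2$.
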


    \begin{equation}\label{eq:convex-zeroth-order2}
    \begin{aligned}
        \bE\bs{A} &= \bE\bs{A|\xi_{1:k}} \leq \frac{3D}{2}\sqrt{\sum_{t = 1}^T \norm{\nabla  f_t (z_t)}^2}\\
        &\overset{(a)}{\leq} \frac{3D}{2}\sqrt{\beta\sum_{t = 1}^T \bs{ f_t(z_t) - \inf_{y \in \bR^d } f_t(y)}}\overset{(b)}{\leq} \frac{3D}{2}\sqrt{\beta\sum_{t = 1}^T  f_t(z_t)}
    \end{aligned}
    \end{equation}
    where step (a) follows~\Cref{lem:properties-of-smooth-functions}, and step (b) follows by non-negativity of the loss functions. 



    It remains to bound $\bE_{\xi_{1:k}}\bs{B}$, 

    \begin{equation}\label{eq:convex-zeroth-order3}
        \begin{aligned}
            \bE_{\xi_{1:k}}\bs{B} &= \bE\bs{\sum_{i = 1}^k - (\nabla f_{\dt{i}-1}(z_{\dt{i}-1}))^\top \xi_i + \frac{\br{z_{\dt{i} - 1}-z^\star}^\top \xi_i + \norm{\xi_i}^2}{2\eta_{\dt{i}-1}}}\\
            &\overset{(a)}{=}\bE\bs{\sum_{i = 1}^k \frac{ \norm{\xi_i}^2}{2\eta_{\dt{i}-1}}} \overset{(b)}{=} \sum_{i = 1}^k \frac{d i^\omega \omega L^2 \eta_{\di{i}}^2}{4(\omega-1)\varepsilon \eta_{\dt{i}}}\\
            &\overset{(c)}{\leq} \frac{d k^{\omega+0.5}\omega L^2}{4\sqrt{3}(\omega - 1) \varepsilon}\sqrt{\sum_{i = 1}^k\frac{\eta_{\di{i}}^4}{\eta_{\dt{i}}^2}}\overset{(d)}{=}\frac{d k^{\omega+0.5}\omega L^2D }{4\sqrt{3}(\omega - 1) \varepsilon}\sqrt{\sum_{i = 1}^k\frac{\sum_{j = 1}^{\dt{i}}\norm{\nabla f_j(z_j)}^2}{\br{\sum_{j = 1}^{\di{i}}\norm{\nabla f_j(z_j)}^2}^2}} 
        \end{aligned}   
    \end{equation}
    where step (a) follows from the fact that $\xi_i$ are independent and have zero mean. Step (b) follows by $\bE\bs{\norm{\xi}^2} = d\sigma_i^2$ for $\xi_i\sim \cN(0, \sigma_i^2)$ where $\sigma_i^2 = \frac{i^\omega \omega L^2\eta_{\di{i}}^2}{2(\omega - 1)\varepsilon}$. Step (c) follows by Cauchy-Schwarz inequality and step (d) follows by substituting in $\eta_t = \frac{D}{\sqrt{\sum_{i = 1}^t\norm{f_i(z_i)}^2}}$. 
    
    Combining~\Cref{eq:convex-zeroth-order1,eq:convex-zeroth-order2,eq:convex-zeroth-order3}, we have 
    \begin{equation}
        \sum_{i = 0}^k\sum_{t = \dt{i}+1}^{\dt{i+1}}  f_t(z_t) -  f_t(z_i^\star)  \leq \bE\bs{B}
            + \frac{3D}{2}\sqrt{\beta\sum_{t = 1}^T f_t (z_t)}
    \end{equation}


    Then, we apply~\Cref{lem:implicit-bounds} with $x = \sum_{t = 1}^Tf_t(z_t)$ and $a = \frac{9D^2\beta}{4}$,  $b = 0$ and $c = \sum_{i = 0}^k \sum_{t = \dt{i}+1}^{\dt{i+1}} f_t (z_i^\star) + \bE\bs{B}$
    \begin{lem}
        \label{lem:implicit-bounds}
        Let $a, c > 0$, $b\geq 0$, and $x\geq 0$ such that $x - \sqrt{ax + b}\leq c$. Then $x\leq a + c + 2\sqrt{b + ac}$. 
    \end{lem}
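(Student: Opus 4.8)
The plan is to read the hypothesis $x-\sqrt{ax+b}\le c$ as an implicit quadratic constraint on $x$ and then solve that quadratic to turn it into an explicit bound. First I would dispose of the degenerate regime: if $x\le c$, then since $a>0$ and $b\ge 0$ we trivially have $x\le c\le a+c+2\sqrt{b+ac}$, so there is nothing to prove. Hence assume from now on that $x>c$, i.e.\ $x-c>0$.

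In this regime, rewrite the hypothesis as $0<x-c\le\sqrt{ax+b}$. Both sides are nonnegative, so squaring is legitimate and yields $(x-c)^2\le ax+b$, equivalently $x^2-(a+2c)x+(c^2-b)\le 0$. The left-hand side is an upward-opening quadratic in $x$ whose discriminant equals $(a+2c)^2-4(c^2-b)=a^2+4ac+4b\ge 0$ (all of $a,b,c$ being nonnegative), so the inequality forces $x$ to lie at or below the larger root, i.e.\ $x\le\tfrac12\bigl((a+2c)+\sqrt{a^2+4ac+4b}\bigr)$.

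It then remains to simplify. Using subadditivity of the square root, $\sqrt{a^2+4ac+4b}\le\sqrt{a^2}+\sqrt{4ac+4b}=a+2\sqrt{ac+b}$, and hence $x\le\tfrac12\bigl(2a+2c+2\sqrt{ac+b}\bigr)=a+c+\sqrt{ac+b}\le a+c+2\sqrt{b+ac}$, which is exactly the claimed bound. There is no real obstacle in this argument; the only points that need a moment's care are that the squaring step is valid precisely because $x-c\ge 0$ in the non-trivial case, and that the quadratic has real roots (guaranteed by nonnegativity of the discriminant). In effect the whole lemma is a short one-paragraph computation once the trivial case $x\le c$ is separated out.
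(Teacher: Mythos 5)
Your proof is correct and takes essentially the same route as the paper's: square the hypothesis, solve the resulting quadratic in $x$ (you via the larger root, the paper via completing the square), and relax with subadditivity of the square root to reach $a+c+\sqrt{b+ac}\le a+c+2\sqrt{b+ac}$. Your explicit separation of the case $x\le c$ is a small improvement in rigor, since the squaring step genuinely requires $x-c\ge 0$ and the paper's proof passes over this silently.
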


    Therefore, the regret is upper bounded by 
    \begin{equation}
        \begin{aligned}
            \sum_{i = 0}^k\sum_{t = \dt{i}+1}^{\dt{i + 1}} \br{ f_t(z_t) -  f_t(z_i^\star)} &\leq \frac{9D^2\beta}{4} + \bE\bs{B} + 3D\sqrt{\beta \sum_{i = 0}^k \sum_{t = \dt{i}+1}^{\dt{i+1}}f_t(z_i^\star) + \bE\bs{B}}\\
            &\leq \frac{9D^2\beta}{4} + 3D\sqrt{\sum_{i = 0}^k\sum_{t = \dt{i}+1}^{\dt{i+1}}   f_t(z_i^\star)} \\
            &+ \frac{d k^{\omega+0.5}\omega L^2D^2 }{4(\omega - 1) \varepsilon}\sqrt{\sum_{i = 1}^k\frac{\beta \sum_{j = 1}^{\dt{i}}\norm{\nabla f_j(z_j)}^2}{\br{\sum_{j = 1}^{\di{i}}\norm{\nabla f_j(z_j)}^2}^2}} .
        \end{aligned}
    \end{equation}
    Taking $\omega = 1.5$ concludes the proof. 
\end{proof}



\begin{proof}[Proof of~\Cref{lem:implicit-bounds}]
Starting with $x - c \leq \sqrt{ax + b}$, we can square both sides and get 
\[x^2 - \br{2c + a} x + c^2 \leq b.\]
Completing the square, \[\br{x - \frac{2c + a}{2}}^2 \leq b + ac + \frac{a^2}{4}\] 
Taking square root and rearrange terms, we get 
\begin{equation}
    \begin{aligned}
    x &\leq \frac{2c + a}{2} + \sqrt{ b + ac + \frac{a^2}{4}}\\
    &\leq \frac{2c + a}{2} + \sqrt{b + ac} + \sqrt{\frac{a^2}{4}}\\
    &= a + c + \sqrt{b + ac}
    \end{aligned}
\end{equation}This concludes the proof. 
\end{proof}

\PassiveRegretWorstCase*
\begin{proof}
    Similar to the previous proof, from the update rule of OGD, we have 
    \begin{equation}\label{eq:worst-case-update}
        (\nabla f_{t}(z_{t}))^\top \br{z_{t} - z_i^\star} =\begin{cases} 
          \frac{\norm{z_{t}-z_i^\star}^2 - \norm{z_{t+1}-z_i^\star}^2}{2\eta} + \frac{\eta\norm{\nabla  f_{t}(z_{t})}^2}{2} & t+1 \notin \cT \\
          - \nabla f_{t}(z_{t})^\top \xi_i + \frac{\norm{z_{t}-z_i^\star + \xi_i}^2 - \norm{z_{t + 1}-z_i^\star}^2}{2\eta} + \frac{\eta\norm{\nabla  f_{t}(z_{t})}^2}{2}  &  t +1 = \dt{i} \in \cT 
        \end{cases}
    \end{equation}
    Therefore, using the property of convex functions, we can upper bound the expected regret by \begin{equation}
        \begin{aligned}
            \bE\bs{\sum_{t}^T f_t(z_t) - f_t(z^\star)} &\leq \bE\bs{\sum_{t}^T \br{\nabla f_t(z_t)}^\top \br{z_t - z^\star}}\\
            &\overset{(a)}{\leq} \frac{1}{2}\bs{\sum_{t = 1}^T \frac{\norm{z_t - z^\star}^2 - \norm{z_{t+1}-z^\star}^2}{\eta} + \eta \norm{\nabla f_t(z_t)}^2 + \sum_{i = 1}^k \eta \bE\norm{\xi_i}^2}\\
            &\overset{(c)}{\leq} \frac{1}{2}\bs{\sum_{t = 1}^T \frac{2D^2}{\eta} + \eta \br{L^2 + \sum_{i = 1}^k \frac{L^2 d j^\omega \omega}{2(\omega -1)\varepsilon}}}\\
            &\overset{(d)}{\leq}\frac{1}{2}\bs{\frac{2D^2}{\eta} + \eta T\br{L^2 + \frac{\omega k^{\omega+1}L^2d}{2(\omega^2-1) \varepsilon}}}
        \end{aligned}
    \end{equation}
    where step (a) follows~\Cref{eq:worst-case-update}, step (c) follows by Lipschitzness of the cost functions and the boundedness of the parameter space, and step (d) follows by the fact that \[\sum_{j = 1}^k j^\omega \leq \int_{0}^k x^\omega dx \leq \frac{k^{\omega + 1}}{\omega + 1}.\] 
    Setting $\eta = \sqrt{\frac{2D^2}{T\br{L^2 + \frac{\omega k^{\omega + 1}L^2d}{2(\omega^2-1)\varepsilon}}}}$, we arrive at the upper bound 
    \[\bE\bs{\sum_{t}^T f_t(z_t) - f_t(z^\star)}\leq \sqrt{T\br{2D^2L^2 + \frac{\omega k^{\omega + 1}L^2d}{2(\omega^2-1)\varepsilon}}}\leq \br{DL + \frac{L}{2}\sqrt{\frac{\omega k^{\omega + 1}d}{(\omega^2-1)\varepsilon}}}\sqrt{2T}. \]
\end{proof}
\section{Omitted Proofs for~\Cref{sec:active-unlearner}}\label{app:ERM}


\begin{algorithm}[htbp]
    \caption{First-order active online learner and unlearner}
    \label{alg:ERM-first-order}
    \begin{algorithmic}[1]
        \REQUIRE Cost functions $f_1, ..., f_T$ that are $L$-Lipschitz, learning rates $\eta_1, ..., \eta_T$, a deletion time set $\cT$, a deletion index set $\cU$, and privacy parameter $\varepsilon$, the auxiliary unlearner $\Unaux$ and its Lipschitz parameter $L_R$.
        \STATE Initialize $z_1\in \cK$. 
        \FOR{Time step $t = 2, ... T$}
            \STATE Set $z_t = \Pi_{\cK}\bs{z_{t-1} - \eta_t \nabla f_{t-1}(z_{t-1})}$
            \IF{there exists $\dt{i}\in \cT$ such that $t = \dt{i}$}
            \STATE Set $\hat{z}_{\dt{i}}= \Pi_{\cK}\bs{z_{\dt{i}-1} - \eta_{t} \nabla f_{\dt{i}-1}(z_{t-1})}$
            \STATE Starting from $\hat{z}_{\dt{i}}$, $\cI_{1, i}$ steps of gradient descent with learning rate $\frac{1}{\beta + \mu}$ on all points till now $f_{1:\dt{i}}$ and output $z_{\dt{i}}'$
            \STATE Starting from $z_{\dt{i}}'$, perform $\cI_{2}$ steps of gradient descent with learning rate $\frac{1}{\beta + \mu}$ on all remaining data points $f_{i:\dt{i}}\setminus \{f_{\di{1}}:f_{u_{i}}\}$ and output $z_{\dt{i}}''$. 
            \STATE Set $z_{\dt{i}} = z_{\dt{i}}'' + \xi_i$, where $\xi_i \sim \cN(0, \sigma_i^2)$ for 
            \begin{equation}
                \label{eq:sigma-i-active-first-order}\sigma_i = \gamma^{\cI_2}\sqrt{\frac{ i^\omega \omega}{2(\omega -1)\varepsilon}}\frac{L\br{6i + L\gamma^{\dt{i} - \di{i}}\eta_{\di{i}}}}{\dt{i} \mu}.
            \end{equation}
        \ENDIF
        \STATE Output $z_t$ 
        \ENDFOR
    \end{algorithmic}
\end{algorithm}

\FirstOrderGuarantee*
\begin{proof}[Unlearning guarantee in~\Cref{thm:first-order-guarantee}]
    For any $i$, we express~\Cref{alg:ERM-first-order}'s output at time $\dt{i}$ as two CNIs~(\Cref{defn:CNI}). Let $\trainsetsub{\dt{i}} = \{f_1, \ldots, f_{\dt{i}}\}$, and define $\trainsetsub{\dt{i}}' = \{f_1', \ldots, f_{\dt{i}}'\}$ as the same set with functions at $\di{j}$, $j\leq i$ replaced with $\perp$. 

    Then, $\psi_j$ denote the standard online gradient descent steps between $\dt{j-1}$ and $\dt{j}$ using points in $\trainsetsub{\dt{i}}$. Formally, define the updating functions $g_t(z) = \Pi_{\cK}\bs{z - \eta_t \nabla f_t(z_t)}$ and $g_t'(z) = \Pi_{\cK}\bs{z - \eta_t \nabla f_t'(z_t)}$ with respect to cost functions $\cF$ and $\cF'$ respectively, then $\psi_j$ and $\psi_j'$ are defined as follows, 
    \[\psi_j = g_{t_{j}}\circ\ldots \circ g_{\dt{j-1}}, \quad \psi_j' = g_{t_{j}}'\circ\ldots \circ g_{\dt{j-1}}' .\]
    We denote the deterministic unlearning procedure (Step 6 and 7 in~\Cref{alg:ERM-first-order}) as the function $\Unaux$ that takes as input the current output $\hat{z}_{\dt{i}}$, past cost functions $\trainsetsub{\dt{i}} = \{f_1,..., f_{\dt{i}}\}$ and the deleted functions $\unlearnsetsub{i} = \{f_{\di{1}}, \ldots, f_{\di{i}}\}$. Then, the CNIs are 
    \[\bc{z_0, \bc{\Unaux(\psi_j, \trainsetsub{\dt{i}}, \unlearnsetsub{i})}_{j = 1}^i, \bc{\zeta_j}_{j = 1}^i}, \quad \bc{z_0, \bc{\Unaux(\psi_j', \trainsetsub{\dt{i}}\setminus \unlearnsetsub{i},\emptyset )}_{j = 1}^i, \bc{\zeta_j}_{j = 1}^i}\]
    Next, we apply~\Cref{lem:PAI} to upper bound $\rdp{z_{\dt{i}}}{z_{\dt{i}}'}$. To do this, we need to derive a sequence of $a_1, ..., a_{\dt{i}}$ such that $e_t = \sum_{j = 1}^t \gamma^{t - j}\br{s_j - a_j} \geq 0$ for all $t\in \{1, ..., \dt{i}\}$, and $e_{\dt{i}} = 0$, where $s_j$ is defined as follows,
    \begin{equation}
        \begin{aligned}
            s_j &= \max_{z\in \cK}\norm{\Unaux(\psi_j(z), \trainsetsub{\dt{j}}, \unlearnsetsub{j}) - \Unaux(\psi_j'(z), \trainsetsub{\dt{j}}\setminus \unlearnsetsub{j},\emptyset )}_2
        \end{aligned}
    \end{equation}

   As we assume $\di{i} \in (\dt{i-1}, \dt{i})$ for all $i$, 
   \begin{equation}\small
    \begin{aligned}
        s_j &= \max_{z\in \cK}\norm{\Unaux(\psi_j(z), \trainsetsub{\dt{j}}, \unlearnsetsub{j}) - \Unaux(\psi_j'(z), \trainsetsub{\dt{j}}\setminus \unlearnsetsub{j},\emptyset )}_2\\
        &= \max_{z\in \cK}\norm{\Unaux(\psi_j(z), \trainsetsub{\dt{j}}, \unlearnsetsub{j}) - \Unaux(\psi_j(z), \trainsetsub{\dt{j}}\setminus \unlearnsetsub{j},\emptyset ) + \Unaux(\psi_j(z), \trainsetsub{\dt{j}}\setminus \unlearnsetsub{j},\emptyset ) - \Unaux(\psi_j'(z), \trainsetsub{\dt{j}}\setminus \unlearnsetsub{j},\emptyset )}_2\\
        &= \max_{z\in \cK}\norm{\Unaux(z, \trainsetsub{\dt{j}}, \unlearnsetsub{j}) - \Unaux(z, \trainsetsub{\dt{j}}\setminus \unlearnsetsub{j},\emptyset )}_2 + L_R(\psi_j(z) - \psi_j(z)')\\
        & =\max_{z\in \cK}\norm{\Unaux(\psi_j(z), \trainsetsub{\dt{j}}, \unlearnsetsub{j}) - \Unaux(\psi_j(z), \trainsetsub{\dt{j}}\setminus \unlearnsetsub{j},\emptyset )}_2 + L_R \gamma^{\dt{i} - \di{i}}\eta_{\di{i}}L
    \end{aligned}
   \end{equation}
    where $L_R$ is the Lipschitzness coefficient of the unlearning algorithm $\Unaux$. 
   
   Setting $a_i = s_i$ and $\sigma_j^2 = \frac{\omega j^\omega a_j^2}{2(\omega -1)\varepsilon}$, we have the following guarantee: for all $i\in [k]$, \[\rdp{z_{\dt{i}}}{z_{\dt{i}}'}\overset{(a)}{\leq} \sum_{j = 1}^i R(\zeta_{\dt{j}}, a_j) = \sum_{j = 1}^i \frac{\alpha \varepsilon (\omega-1)}{\omega j^\omega} \leq \alpha\varepsilon. \]
    where (a) follows~\Cref{lem:PAI}. 
    
   It remains to compute $L_R$ and $\max_{z\in \cK}\norm{\Unaux(\psi_j(z), \trainsetsub{\dt{j}}, \unlearnsetsub{j}) - \Unaux(\psi_j(z), \trainsetsub{\dt{j}}\setminus \unlearnsetsub{j},\emptyset )}_2$ for $\Unaux$. 

    Due to the contractiveness of gradient descent under strongly convex and smooth cost functions~(\Cref{lem:gd-convergence-sc-smooth}), our unlearning algorithm \(\Unaux\), which consists of \(\cI_1 + \cI_2\) contractive steps with contraction coefficient \(\gamma\), is Lipschitz continuous with constant \(L_R = \gamma^{\cI_1 + \cI_2}\).

    \begin{lemL}[Convergence of gradient descent \cite{chen20lecture_notes}]
        \label{lem:gd-convergence-sc-smooth}
        If the loss function $\ell$ is $\mu$-strongly convex and $\beta$-smooth, then the output $w_t$ of $T$-step gradient descent on $S$ with learning rate $\eta = \frac{2}{\mu + \beta}$ and initialization $w_0$ satisfies \[\norm{w_t - w^\star}_2\leq \gamma^T \norm{w_0 -w^\star}_2,  \]
        where $\gamma = \frac{\beta/\mu - 1}{\beta/\mu + 1}$ and $w^\star = \textrm{arg}\min_w \sum_{x\in S}\ell(w, x)$ is the minimizer of the loss function $\ell$ on the dataset $S$. 
    \end{lemL}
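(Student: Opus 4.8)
The plan is to prove the geometric convergence by establishing a single-step contraction toward $w^\star$ and then iterating. Writing one gradient step as $w_{t+1} = w_t - \eta\nabla\ell(w_t)$ and using that $w^\star$ minimises $\ell$ so that $\nabla\ell(w^\star) = 0$, I would express the error recursion as
\[w_{t+1} - w^\star = \br{w_t - w^\star} - \eta\br{\nabla\ell(w_t) - \nabla\ell(w^\star)}.\]
It then suffices to show that the map $w \mapsto w - \eta\nabla\ell(w)$ contracts the distance to $w^\star$ by the factor $\gamma = \frac{\beta - \mu}{\beta + \mu}$; the claimed bound $\norm{w_t - w^\star}_2 \le \gamma^t\norm{w_0 - w^\star}_2$ then follows by applying this inequality $t$ times. (If a projection onto $\cK$ is present, it is nonexpansive and $w^\star\in\cK$, so it only tightens the bound.)

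The key analytic ingredient is the co-coercivity inequality for a function that is simultaneously $\mu$-strongly convex and $\beta$-smooth: for all $x,y$,
\[\langle \nabla\ell(x) - \nabla\ell(y),\, x - y\rangle \ge \frac{\mu\beta}{\mu + \beta}\norm{x - y}^2 + \frac{1}{\mu + \beta}\norm{\nabla\ell(x) - \nabla\ell(y)}^2.\]
Applying this with $x = w_t$, $y = w^\star$ (abbreviating $\delta = w_t - w^\star$ and $g = \nabla\ell(w_t) = \nabla\ell(w_t) - \nabla\ell(w^\star)$), I would expand the squared error and substitute the lower bound on $\langle g, \delta\rangle$:
\[\norm{w_{t+1} - w^\star}^2 = \norm{\delta}^2 - 2\eta\langle g, \delta\rangle + \eta^2\norm{g}^2 \le \br{1 - \frac{2\eta\mu\beta}{\mu+\beta}}\norm{\delta}^2 + \br{\eta^2 - \frac{2\eta}{\mu+\beta}}\norm{g}^2.\]
Choosing $\eta = \frac{2}{\mu+\beta}$ annihilates the $\norm{g}^2$ coefficient and reduces the $\norm{\delta}^2$ coefficient to $1 - \frac{4\mu\beta}{(\mu+\beta)^2} = \frac{(\beta-\mu)^2}{(\mu+\beta)^2} = \gamma^2$, yielding the one-step contraction $\norm{w_{t+1} - w^\star} \le \gamma\norm{w_t - w^\star}$ exactly.

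The main (and essentially only) obstacle is justifying the co-coercivity inequality; once it is in hand, the remainder is the short algebraic verification above. Since this is a textbook fact, I would either cite it or derive it by applying the standard co-coercivity of convex smooth gradients to the auxiliary function $\phi(x) = \ell(x) - \frac{\mu}{2}\norm{x}^2$, which is convex and $(\beta-\mu)$-smooth; writing $\nabla\phi = \nabla\ell - \mu\,\mathrm{Id}$ and rearranging recovers the displayed inequality. As a cleaner alternative when $\ell$ is twice differentiable, one can bypass co-coercivity entirely: write $\nabla\ell(w_t) - \nabla\ell(w^\star) = \bar{H}\br{w_t - w^\star}$ with $\bar{H} = \int_0^1 \nabla^2\ell\br{w^\star + s(w_t - w^\star)}\,ds$ satisfying $\mu I \preceq \bar{H} \preceq \beta I$, so that $w_{t+1} - w^\star = (I - \eta\bar{H})(w_t - w^\star)$ and $\norm{I - \eta\bar{H}}_{\mathrm{op}} \le \max\br{\abs{1 - \eta\mu},\, \abs{1 - \eta\beta}} = \gamma$ at $\eta = \frac{2}{\mu+\beta}$. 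Either route reduces the lemma to elementary computation.
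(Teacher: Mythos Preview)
The paper does not actually prove this lemma; it is stated as a cited result from \cite{chen20lecture_notes} and used without proof. Your proposal supplies a correct and standard proof---the co-coercivity route with $\eta = \frac{2}{\mu+\beta}$ is precisely the textbook derivation of this contraction rate, and the Hessian-based alternative you sketch is equally valid under the twice-differentiability assumption---so there is nothing to compare and nothing to correct.
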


    Next, we derive an upper bound on $\max_{z\in \cK}\norm{\Unaux(\psi_j(z), \trainsetsub{\dt{j}}, \unlearnsetsub{j}) - \Unaux(\psi_j(z), \trainsetsub{\dt{j}}\setminus \unlearnsetsub{j},\emptyset )}_2$. Consider the first \(\cI_1\) gradient descent (GD) steps of the unlearning algorithm on the set \(\trainsetsub{\dt{j}}\), denoted by the function \(F_0\), followed by \(\cI_2\) GD steps on the set \(\trainsetsub{\dt{j}} \setminus \unlearnsetsub{j}\), denoted by \(F_1\). The unlearning auxiliary function \(\Unaux(\psi_j(z), \trainsetsub{\dt{j}}, \unlearnsetsub{j})\) performs these two phases: \(\cI_1\) GD steps on \(\trainsetsub{\dt{j}}\) (via \(F_0\)) and \(\cI_2\) GD steps on \(\trainsetsub{\dt{j}} \setminus \unlearnsetsub{j}\) (via \(F_1\)).
    
    Similarly, \(\Unaux(\psi_j(z), \trainsetsub{\dt{j}} \setminus \unlearnsetsub{j}, \emptyset)\) performs \(\cI_1 + \cI_2\) GD steps entirely on the set \(\trainsetsub{\dt{j}} \setminus \unlearnsetsub{j}\), denoted by \(F_0\) and \(\tilde{F}_1\), respectively.


    Then, \begin{equation}
        \begin{aligned}
            &\max_{z\in \cK}\norm{\Unaux(\psi_j(z), \trainsetsub{\dt{j}}, \unlearnsetsub{j}) - \Unaux(\psi_j(z), \trainsetsub{\dt{j}}\setminus \unlearnsetsub{j},\emptyset )}_2 \\
            =& \max_z \norm{F_0\circ F_1(z) - F_0\circ  \tilde{F}_1(z) }_2\\
            \overset{(a)}{\leq}& \gamma^{\cI_2} \norm{F_1 (z) - \tilde{F}_1(z)}\\
            =& \gamma^{\cI_2}\norm{\br{F_1(z) - ERM_1 + ERM_1 - ERM_0 + ERM_0 - \tilde{F}_1(z)}}\\
            \overset{(c)}{\leq} & \frac{2(j+1)\gamma^{\cI_2}L}{\dt{i} \mu}
        \end{aligned}
    \end{equation}
    where step (a) follows by the contractiveness of the GD, or the convergence bound of GD~\Cref{lem:gd-convergence-sc-smooth}, $ERM_1$ and $ERM_0$ represents the ERM of all points $ \trainsetsub{\dt{j}}$ and $ \trainsetsub{\dt{j}}\setminus \unlearnsetsub{j}$ respectively. Step (c) follows by the contractiveness of GD~(\Cref{lem:gd-convergence-sc-smooth}), stability of ERM for strongly convex functions (\Cref{lem:stability-erm-multiple-points}) and the lower bound on $\cI_1$ by $\log (L/\mu D \dt{j})/\log \gamma$. 

\end{proof}
\begin{proof}[Regret guarantee in~\Cref{thm:first-order-guarantee}]
    Consider the sequence of $z_t$ output by active online learner and unlearner $\cA$, i.e. 

\begin{equation}
    \begin{aligned}
        z_{t+1} &= \Pi_{\cK}\bs{z_{t} - \eta_{t} \nabla f_{t} (z_{t})}  & t+1\notin\cT\\
        z_{t+1} &= \Unaux(\Pi_{\cK}\bs{z_{t} - \eta_{t} \nabla f_{t} (z_{t})}, f_{1:t}, f_{\di{1}:\di{i}}) + \xi_i  & t+1 = \dt{i}\in\cT
    \end{aligned}
\end{equation}
Similar as in the previous proof, let \[z_{i, 0}^\star = \argmin_{z\in \cK}\sum_{t = 1}^{\dt{i}} f_t(z) - \sum_{j = 1}^{i} f_{\di{i}}(z)\]
\begin{equation}
    \begin{aligned}
        \norm{z_{t+1}-z^\star}^2 &\leq \norm{z_{t}-z^\star}^2 + \eta_t^2\norm{\nabla f_t(z_t)}^2 + 2\eta_t (\nabla f_t(z_t))^\top \br{z_t - z^\star}, &t + 1\notin \cT\\ 
        \norm{z_{t+1}-z^\star}^2 &\leq \norm{z_{i, 0}^\star - z^\star + d_i + \xi_i}^2, & t+1 = \dt{i} \in \cT  
    \end{aligned}
\end{equation}
where \[\norm{\Pi_{\cK}\bs{\Unaux(z_{\dt{i}-1} - \eta_{\dt{i}-1} \nabla f_{\dt{i}-1} (z_{\dt{i}-1})}, f_{1:\dt{i}}, f_{\di{1}:\di{i}}) - z_{0, i}^\star} \leq \frac{2\gamma^{\cI_2} DL}{\dt{i}\mu } =:d_i\]for step 5 and 6 in~\Cref{alg:ERM-first-order} following~\Cref{lem:gd-convergence-sc-smooth}. 

Let $t_0 = 0, t_{k+1} = T$. By strong convexity of the cost function, 
\begin{equation}
    \begin{aligned}
       \regret{T}{\cA, \cT, \cU} &= \sum_{t = 1}^T f_t(z_t) - f_t(z^\star) = \sum_{i = 0}^{k} \sum_{t = t_{i}+1}^{\dt{i+1}} f_t(z_t) - f_t(z^\star)\\
        &\leq \underbrace{\sum_{i = 0}^{k} \sum_{t = t_{i}+1}^{\dt{i+1}-1} \br{\nabla f_t(z_t)}^\top \br{z_t - z^\star} - \frac{\mu}{2}\norm{z_t - z^\star}^2}_{A} + \underbrace{\sum_{i = 1}^{k} L\norm{z_{\dt{i}} - z^\star}}_{B},
    \end{aligned}
\end{equation}

We first bound part B
\begin{equation}\label{eq:first-order-regret-B}
    \begin{aligned}
        B &= \sum_{i= 1}^{k}L \norm{z_{i, 0}^\star + \xi_i + d_i - z^\star}\\
        &\leq L \sum_{i = 1}^k \norm{\xi_i} + d_i + \norm{z_{i, 0}^\star - z^\star} \\
    \end{aligned}
\end{equation}

By Jensen's inequality and stability of ERM~(\Cref{lem:stability-erm-multiple-points}), 
\[\bE \bs{B} \leq  \sum_{i = 1}^k \sigma_i + d_i + \frac{L(T-i-\dt{i})}{T\mu} = O\br{\sum_{i = 1}^k\frac{Ld}{\dt{i}\mu\varepsilon}+\frac{kL}{\mu}},\]
where $\sigma_i$ is the standard deviation of $\xi_i$ defined in~\Cref{eq:sigma-i-active}, and the last equality follows by $\cI_2\geq \br{2.2\log k}/{\log\frac{1}{\gamma}}$. 

It remains to bound 2 times part A, we first note that when $t = \dt{i} - 1$, the unlearning takes place, 
\begin{equation}\label{eq:first-order-regret-A}
    \begin{aligned}
        2A &=\sum_{i = 0}^{k} \sum_{t = t_{i}+1}^{\dt{i+1}-1} \br{\nabla f_t(z_t)}^\top \br{z_t - z^\star} - \frac{\mu}{2}\norm{z_t - z^\star}^2 \\
        &=\underbrace{\sum_{i = 0}^{k} \sum_{t = t_{i}+1}^{\dt{i+1}-2} \frac{\norm{z_t -z^\star}^2 - \norm{z_{t+1}-z^\star}^2}{\eta_t} -\mu \norm{z_t - z^\star}^2}_{C} + \underbrace{\sum_{i = 0}^{k} \sum_{t = t_{i}+1}^{\dt{i+1}-2} \eta_t \norm{\nabla f_t (z_t)}^2}_{D} \\
        &+ \underbrace{\sum_{i = 1}^k (\nabla f_{t_{i}-1}(z_{\dt{i}-1}))^\top \br{z_{\dt{i}-1} - z^\star}}_{E}
    \end{aligned}
\end{equation}

Part D is upper bounded by $O(\log T)$, 
\begin{equation}
    \label{eq:first-order-regret-D}
    D \leq \sum_{t = 1}^T \eta_t L^2 =\sum_{t = 1}^T  \frac{L^2}{t\mu} = O(\log T).
\end{equation}

Part E is upper bounded by \begin{equation}\label{eq:first-order-regret-E}
    E\leq 2kLD 
\end{equation}
Next, we bound part C, 
\begin{equation}\label{eq:first-order-regret-C1}
    \begin{aligned}
        C &= \sum_{i = 0}^{k} \norm{z_{\dt{i} + 1} - z^\star}^2\br{\frac{1}{\eta_{\dt{i}+1}} -\mu} - \frac{1}{\eta_{\dt{i+1}-1}}\norm{z_{\dt{i+1}-1}-z^\star}^2 \\
        &\leq \sum_{i = 0}^{k} \norm{z_{\dt{i} + 1} - z^\star}^2\dt{i} \mu - t _{i+1}\mu \norm{z_{\dt{i+1}-1}-z^\star}^2 + \mu \sum_{i = 1}^k \norm{z_{\dt{i} - 1} - z^\star}^2\\
        &\overset{(a)}{\leq}  \sum_{i = 1}^k \dt{i}\mu\br{\norm{z_{\dt{i}+1}-z^\star}^2 - \norm{z_{\dt{i}-1} - z^\star}^2}  + \mu kD^2
    \end{aligned}
\end{equation}

It remains to upper bound the term $\norm{z_{\dt{i}+1}-z^\star}^2 - \norm{z_{\dt{i}-1} - z^\star}^2$. We use the following inequality~(\Cref{lem:ineq-squared-diff}), 
\begin{lem}\label{lem:ineq-squared-diff}
    For any $a, b, c\in \bR^d$, 
    \begin{equation}
    \norm{a - b} - \norm{c-b}^2 \leq \norm{a-c}^2 + 2\norm{a-c}\norm{c-b}
\end{equation}
\end{lem}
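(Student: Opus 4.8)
The plan is a one-line expansion via the triangle decomposition $a-b=(a-c)+(c-b)$ (reading the left-hand side as $\norm{a-b}^2$, matching the intended use in \Cref{eq:first-order-regret-C1}). First I would write
\[
\norm{a-b}^2 = \norm{(a-c)+(c-b)}^2 = \norm{a-c}^2 + 2\inner{a-c}{c-b} + \norm{c-b}^2 .
\]
Next I would bound the cross term by Cauchy--Schwarz, $\inner{a-c}{c-b}\le \norm{a-c}\,\norm{c-b}$, obtaining
\[
\norm{a-b}^2 \le \norm{a-c}^2 + 2\norm{a-c}\,\norm{c-b} + \norm{c-b}^2 .
\]
Subtracting $\norm{c-b}^2$ from both sides yields exactly $\norm{a-b}^2-\norm{c-b}^2 \le \norm{a-c}^2 + 2\norm{a-c}\,\norm{c-b}$, which is the claim.

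There is no real obstacle here: the statement is an elementary consequence of the parallelogram-type expansion and Cauchy--Schwarz, valid for any inner-product norm on $\bR^d$. The only point worth a remark is the apparent typo in the displayed inequality (the first term should be a squared norm); with that correction the argument above is complete, and it is precisely the form needed to control $\norm{z_{\dt{i}+1}-z^\star}^2-\norm{z_{\dt{i}-1}-z^\star}^2$ in the regret bound for \Cref{alg:ERM-first-order} by substituting $a=z_{\dt{i}+1}$, $b=z^\star$, $c=z_{\dt{i}-1}$ and then using the per-step displacement bounds together with $\norm{z_{\dt{i}-1}-z^\star}\le D$.
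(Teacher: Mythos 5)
Your proof is correct and is essentially the same as the paper's: both derive the identity $\norm{a-b}^2 - \norm{c-b}^2 = \norm{a-c}^2 + 2(a-c)^\top(c-b)$ (you via expanding $\norm{(a-c)+(c-b)}^2$, the paper via expanding the difference of squares directly) and then apply Cauchy--Schwarz to the cross term. You are also right that the statement has a typo and the first term on the left should be $\norm{a-b}^2$, which is how the paper's own proof treats it.
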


As the unlearning algorithm moves $z_{\dt{i}}$ close to ERM solution without the deleted points $z_{i, 0}^\star$, and that $z_{\dt{i}+1}$ is obtained by doing one gradient descent step on $z_{\dt{i}}$, we have 
\begin{equation}
\begin{aligned}
        \norm{z_{\dt{i-1}} - z^\star_{i-1,0}} &\leq \gamma^{\cI_2}\br{\gamma^{\cI_1}D + \frac{L}{\mu \dt{i-1}}} = \frac{\gamma^{\cI_2}L}{\mu \dt{i} }\\
        \norm{z_{\dt{i}+1}-z_{i, 0}^\star} = \norm{z_{\dt{i}}-\eta_{\dt{i}}\nabla f_{\dt{i}}(z_{\dt{i} }) - z^\star} &\leq \norm{z_{\dt{i}} - z^\star_{i, 0}} + \frac{L}{\dt{i} \mu} \leq  \frac{2\gamma^{\cI_2}L}{\mu \dt{i} }
\end{aligned}
\end{equation}

Consider the composition of update functions of OGD with learning rate $\eta_t = \frac{1}{\mu t}$ on the functions $f_{\dt{i-1}}, ..., f_{\dt{i} + 1}$ as $\textrm{GD}$, then $\textrm{GD}$ is $\gamma^{\dt{i} + 1 - \dt{i-1}}$-contractive. By~\Cref{assump:assumption2}, $a_i$ is a fixed point of $\mathrm{GD}$ and is close to the ERM solution $z_{i, 0}^\star$, i.e.$GD(z_{i, 0}^\star) = z_{i, 0}^\star$ and $\norm{z_{i, 0}^\star - a_i}\leq \frac{1}{\dt{i}}$. Then, we can upper bound $\norm{z_{\dt{i} + 1} - z_{\dt{i}-1}}$ by 
\begin{equation}
    \begin{aligned}
        \norm{z_{\dt{i} + 1} - z_{\dt{i} -1}} &\leq \norm{z_{i, 0}^\star + \frac{\gamma^{\cI_2}L}{\mu \dt{i}} - GD\br{z_{i-1, 0}^\star + \frac{\gamma^\cI_2L}{\mu \dt{i-1}}}} \\
        &\leq \norm{a_i + \frac{\mu + \gamma^{I_2L}}{\mu \dt{i}}- GD\br{z_{i-1, 0}^\star + \frac{\gamma^\cI_2L}{\mu \dt{i-1}}}}\\
        &= \norm{GD(a_i) + \frac{\gamma^{\cI_2}L}{\mu \dt{i}} - GD\br{z_{i-1, 0}^\star + \frac{\gamma^{\cI_2}L}{\mu \dt{i-1}}}} \\
        &\leq \gamma^{\dt{i} - \dt{i-1}} \norm{a - z_{i-1, 0}^\star + \frac{\gamma^{\cI_2}L}{\mu \dt{i-1}}} + \frac{
        \mu + \gamma^{\cI_2}L}{\mu t_{i}} \\
        &\leq \gamma^{\dt{i} - \dt{i-1}}\norm{z_{i, 0}^\star - z_{i-1, 0}^\star + \frac{\mu + \gamma^{\cI_2}L}{\mu \dt{i-1}}} + \frac{\mu + \gamma^{\cI_2}L}{\mu t_{i}} \\
        &= \frac{\gamma^{\dt{i} - \dt{i-1}}(\dt{i} - \dt{i-1})L}{\mu \dt{i}} + \frac{2\br{\mu + \gamma^{\cI_2}L}}{\mu \dt{i}}
    \end{aligned}
\end{equation}
Substituting this inequality into~\Cref{eq:first-order-regret-C1}, we get an upper bound\begin{equation}\label{eq:first-order-regret-C}
    C \leq \sum_{i = 1}^k \gamma^{\dt{i}-\dt{i-1}}\br{\dt{i} - \dt{i-1}}L + k\mu (2\gamma^{\cI_2}L + D^2)\leq \frac{kL}{e\ln\frac{1}{\gamma}} + k\mu (2\gamma^{\cI_2}L + D^2)
\end{equation}
as $\gamma^{\dt{i}-\dt{i-1}}\br{\dt{i} - \dt{i-1}}\leq \frac{1}{e\ln\frac{1}{\gamma}}$. 

Combining the upper bounds on part A, B, C, D, E,(\Cref{eq:first-order-regret-A,eq:first-order-regret-B,eq:first-order-regret-C1,eq:first-order-regret-D,eq:first-order-regret-E}) we can upper bound the regret by
\begin{equation}
   O\br{\log T + kLD^2 + \frac{Lkd}{\mu \varepsilon} + \frac{kL}{e\ln \frac{1}{\gamma}}}
\end{equation}

The above regret bound is measured against a fixed competitor, as in previous proofs. Next, we adjust the analysis to account for a dynamic competitor, reflecting changes in the best-in-hindsight estimator after each deletion.
\begin{equation}
    \begin{aligned}
        \norm{\bE\bs{\regret{T}{\cR_{\cA}(S, \emptyset, \cT) }} - \bE\bs{\regret{T}{\cR_{\cA}(S,  S_U, \cT) }}} &=  \norm{\sum_{i = 1}^k \sum_{t = \dt{i-1}}^{\dt{i}} f_t(z^\star) - f_t(z_i^\star)}\\
        &\leq \sum_{i = 1}^k \sum_{t = \dt{i-1}}^{\dt{i}} L\norm{z^\star - z_i^\star}\\
        &\overset{(a)}{\leq} \sum_{i = 1}^k \sum_{t = \dt{i-1}}^{\dt{i}} L\frac{2Li}{\mu T} = \frac{(1 + k) k L^2}{\mu},
    \end{aligned}
\end{equation}
This complete the proof. 
\end{proof}

\begin{proof}[Proof of~\Cref{lem:ineq-squared-diff}]
\begin{equation}
    \begin{aligned}
        \norm{a-b}^2 - \norm{c-b}^2
        &= \norm{a}^2 + \norm{b}^2 - 2a^\top b - \norm{c}^2 - \norm{b}^2 + 2 c^\top b\\
        &= \norm{a}^2 - 2a^\top b - \norm{c}^2+ 2 c^\top b\\
        &= (a-c)^\top (a+c-2b) = (a-c)^\top (a-c+2(c-b))\\
        &= \norm{a-c}^2 + 2(a-c)^\top (c-b) \leq \norm{a-c}^2 + 2\norm{a-c} \norm{c-b}\\
    \end{aligned}
\end{equation}    
\end{proof}

\begin{algorithm}
    \caption{Second-order active online learner and unlearner}
    \label{alg:ERM-second-order}
    \begin{algorithmic}[1]
        \REQUIRE Cost functions $f_1, ..., f_T$ that are $L$-Lipschitz, learning rates $\eta_1, ..., \eta_T$, a deletion time set $\cT$, a deletion index set $\cU$, and privacy parameter $\varepsilon$, the auxiliary unlearner $\Unaux$ and its Lipschitz parameter $L_R$ and error functions $e_2, e_1$. 
        \STATE Initialize $z_1\in \cK$. 
        \FOR{Time step $t = 2, ... T$}
            \STATE Set $z_t = z_{t-1} - \eta_t \nabla f_{t-1}(z_{t-1})$
            \IF{there exists $\dt{i}\in \cT$ such that $t = \dt{i}$}
            \STATE Starting from $\hat{z}_{\dt{i}}$, $\cI_1$ steps of gradient descent with learning rate $\frac{1}{\beta + \mu}$ on all points till now $f_{1:\dt{i}}$ and output $z_{\dt{i}}'$
            \STATE Set $z_{\dt{i}}' = \hat{z_{\dt{i}}} - \frac{1}{\dt{i} - i}\hat{H}^{-1} \sum_{j = 1}^i \nabla f_{u_j}(\hat{z}_{\dt{i}})$, where \[\hat{H}^{-1} = \frac{1}{\dt{i} - i}\br{\sum_{j = 1}^{\dt{i}} \nabla^2 f_j(\hat{z}_{\dt{i}}) - \sum_{j = 1}^i}\nabla^2 f_{u_j}(\hat{z}_{\dt{i}})\]
            \STATE Set $z_{\dt{i}} = z_{\dt{i}}' +  \xi_i$, where $\xi_i \sim \cN(0, \sigma_i^2)$ for 
            \begin{equation}
                \label{eq:sigma-i-active}\sigma_i = \sqrt{\frac{\alpha j^\omega \omega}{2(\omega -1)\varepsilon}}\frac{L\br{2 + \frac{k\beta}{\mu (\dt{i} - k)}\br{\frac{M}{\mu}-1}}}{\mu \br{\dt{i} - i}}.
            \end{equation}
        \ENDIF
        \STATE Output $z_t$ 
        \ENDFOR
    \end{algorithmic}
\end{algorithm}

\end{document}